\def\eqref#1{equation~\ref{#1}}
\newcommand{\paren}[1]{\left(#1\right)}
\newcommand{\norm}[1]{\left\|#1\right\|}
\newcommand{\inner}[2]{\left\langle#1, #2\right\rangle}
\newcommand{\R}{\mathbb{R}}
\newcommand{\N}{\mathbb{N}}
\def\calL{{\mathcal{L}}}
\def\bfA{{\mathbf{A}}}
\def\bfF{{\mathbf{F}}}
\def\bfI{{\mathbf{I}}}
\def\bfO{{\mathbf{O}}}
\def\bfU{{\mathbf{U}}}
\def\bfV{{\mathbf{V}}}
\def\bfW{{\mathbf{W}}}
\def\bfX{{\mathbf{X}}}
\def\bfY{{\mathbf{Y}}}
\def\bfa{{\mathbf{a}}}
\def\bfb{{\mathbf{b}}}
\def\bfs{{\mathbf{s}}}
\def\bfu{{\mathbf{u}}}
\def\bfv{{\mathbf{v}}}
\def\bfw{{\mathbf{w}}}
\def\bfx{{\mathbf{x}}}
\def\bfy{{\mathbf{y}}}
\def\bfz{{\mathbf{z}}}
\DeclareMathOperator*{\argmin}{arg\,min}
\newcommand{\dx}[1]{\mathcal{D}_{\bfx,#1}}
\newcommand{\du}[1]{\mathcal{D}_{\bfu,#1}}
\definecolor{myblue2}{HTML}{4682B4}
\newtheorem{defin}{Definition}
\newtheorem{asump}{Assumption}
\newtheorem{cond}{Condition}
\title[Acceleration of Neural Network Training]{Provable Accelerated Convergence of Nesterov's Momentum \\ for Deep ReLU Neural Networks}
\begin{document}

\maketitle

\begin{abstract}%
  Current state-of-the-art analyses on the convergence of gradient descent for training neural networks focus on characterizing properties of the loss landscape, such as the Polyak-\L ojaciewicz (PL) condition and the restricted strong convexity. 
  While gradient descent converges linearly under such conditions, it remains an open question whether Nesterov's momentum enjoys accelerated convergence under similar settings and assumptions. 
  In this work, we consider a new class of objective functions, where only a subset of the parameters satisfies strong convexity, and show Nesterov's momentum achieves acceleration in theory for this objective class. 
  We provide two realizations of the problem class, one of which is deep ReLU networks, which constitutes this work as the first that proves an accelerated convergence rate for non-trivial neural network architectures.
\end{abstract}

\begin{keywords}%
  Momentum, provable acceleration, deep ReLU neural networks
\end{keywords}

\section{Introduction}

Training neural networks with gradient-based methods has shown surprising empirical success \citep{lecun1998gradient,lecun2015deep,zhang2017understanding,Goodfellow-et-al-2016}; yet, it has been a mystery why such a simple algorithm can consistently find a good minimum for these highly non-convex objectives \citep{zhang2018learning, li2020learning, yun2018small,auer1995exponentially,safran2018spurious}. As a consequence of this mysterious phenomenon, an equally, if not more, intriguing question is \textit{why momentum methods, which are designed originally for accelerating the minimization of convex objectives, can achieve faster convergence speed when applied to complicated non-convex objectives, as that of neural network training.}

The advances of the Neural Tangent Kernel (NTK) \citep{jacot2020neural} promoted the theoretical understanding of neural network training. The use of NTK shows that when the width of the neural network approaches infinity, the training process can be treated as a kernel machine. 
Inspired by the NTK analysis, a large body of work has focused on showing the convergence of gradient descent for various neural network architectures under finite over-parameterization requirements \citep{du2018gradient,du2019gradient,zhu2019aconvergence,zou2019an,zhao2019learning,awasthi2021a,ling2023global,allenzhu2019convergence, song2020quadratic,su2019learning}. Yet, this line of analysis is hard to extend to deeper and more complicated architectures and requires a significantly larger over-parameterization than what is used in practice.

To resolve this limitation, later work started to build connections between the understanding of neural network training and the widely studied optimization theory. In particular, a recent line of work characterizes the loss landscape of neural networks using the local Polyak-\L ojaciewicz (PL) condition \citep{song2021subquadratic, liu2020Loss, nguyen2021ontheproof, ling2023global}. 
Based on the well-established theory of how gradient descent converges under the PL condition \citep{karimi2020linear}, this line of work decouples the neural network structure from the dynamics of the loss function along the optimization trajectory. 
This way, these works could perform a more fine-grained analysis of the relationship between regulatory conditions, such as the PL condition, and the neural network structure. 
Such analysis not only resulted in further relaxed over-parameterization requirements \citep{song2021subquadratic,nguyen2021ontheproof,liu2020Loss} but was also shown to be easily extended to deep architectures \citep{ling2023global}, suggesting that it is more suitable in practice.

In contrast to the fast-growing research devoted to (stochastic) gradient descent, there is limited theoretical work on the convergence of momentum methods in deep learning. 
The acceleration of both the Heavy Ball method and Nesterov's momentum is shown only for shallow ReLU networks \citep{wang2021modular,liu2022provable} and deep linear networks \citep{wang2021modular,liu2022convergence}. It remains an open question to prove the acceleration for neural network training in a scenario closer to what is used in practice in terms of both the over-parameterization requirement and the depth and architecture of the neural network. As a result, we are interested in finding a regulatory condition for neural networks that enables the accelerated convergence for momentum methods. 

Showing acceleration under only the PL condition has been a long-standing difficulty. 
For the Heavy Ball method, \cite{danilova2018nonmonotone} established a linear convergence rate under the PL condition, but no acceleration is shown without assuming strong convexity. 
\cite{wang2022provable} proved an accelerated convergence rate; yet, the authors assume the $\lambda^{\star}$-\textsc{average out} condition, which cannot be easily justified for complicated objectives like neural networks. 
To our knowledge, the convergence proof for Nesterov's momentum under the PL condition in non-convex settings is currently missing. 
In the continuous limit, acceleration is proved in a limited scenario \citep{apodopoulos2022convergence}, which does not easily extend to the discrete case \citep{shi2022understanding}. 
Finally, \cite{yue2022lower} shows that gradient descent already achieves an optimal convergence rate for functions satisfying smoothness and the PL condition. 
This suggests that we need to leverage properties beyond the PL condition to prove the acceleration of momentum methods in a broader class of neural networks.

Based on prior work \citep{liu2020Loss} that shows over-parameterized systems are essentially non-convex in any neighborhood of the global minimum, we aim at developing a relaxation to the (strong) convexity in the non-convex setting that enables the momentum methods to achieve acceleration. 
In particular, we consider the minimization of a new class of objectives:
\begin{equation}
    \label{eq:partition_model}
    \vspace{-0.1cm}
    \min_{\bfx\in\R^{d_1}, \bfu\in\R^{d_2}}f(\bfx, \bfu),
    \vspace{-0.15cm}
\end{equation}
where $f$ satisfies the strong convexity with respect to $\bfx$, among other assumptions (c.f., Assumption \ref{asump:strong_cvx}-\ref{asump:universal_opt}). 
Intuitively, our construction assumes that the parameter space can be partitioned into two sets, and only one of the two sets enjoys rich properties, such as strong convexity. 
In this paper, we focus on Nesterov's momentum since it has been shown in a recent work that the Heavy Ball method cannot achieve acceleration even for smooth and strongly convex functions \citep{goujaud2023provable}. Indeed, in previous empirical works, Nesterov's momentum is not only shown to achieve acceleration in neural network training \citep{sutskever2013}, but also demonstrate better performance under large-scale testing than the Heavy Ball method \citep{dahl2023benchmarking}.

\textbf{Our contribution.} Our paper starts with an investigation of the properties of the problem class in (\ref{eq:partition_model}) that satisfies Assumption \ref{asump:strong_cvx}-\ref{asump:universal_opt}. In particular, we show that this set of assumptions is stronger than the PL condition but weaker than strong convexity, and, as a consequence, gradient descent converges linearly with rate $1 - \Theta\paren{\sfrac{1}{\kappa}}$ under these assumptions. Next, we prove that Nesterov's momentum enjoys an accelerated linear convergence with a convergence rate $1 - \Theta\paren{\sfrac{1}{\sqrt{\kappa}}}$. Under Assumption \ref{asump:strong_cvx}-\ref{asump:universal_opt}, our result holds even when $f$ is non-convex and non-smooth:
\begin{theorem}[Informal statement of Theorem \ref{theo:nesterov_conv}]
    Let $f\paren{\bfx,\bfu}:\R^{d_1}\times \R^{d_2}\rightarrow\R$ be $L_1$-smooth and $\mu$-strongly convex with respect to $\bfx$ for all $\bfu\in\R^{d_2}$, and let $\kappa = \sfrac{L_1}{\mu}$. If $f\paren{\bfx,\bfu}$ also satisfies Assumption \ref{asump:g_smooth}-\ref{asump:universal_opt} with sufficiently small $G_1, G_2$ and sufficiently large $R_{\bfx}, R_{\bfu}$, then the sequence $\left\{\paren{\bfx_k,\bfu_k}\right\}_{k=0}^\infty$ generated by Nesterov's momentum satisfies:
    \vspace{-0.2cm}
    \[
        f\paren{\bfx_k,\bfu_k}\leq 2\paren{1 - \Theta\paren{\tfrac{1}{\sqrt{\kappa}}}}^k\paren{f(\bfx_0,\bfu_0)-f^*}.
    \]
    \vspace{-0.7cm}
\end{theorem}
Next, we provide two realizations of our problem class. In Section \ref{sec:addition_model}, we first consider fitting an additive model under the MSE loss. We prove the acceleration of Nesterov's momentum as long as the non-convex component of the additive model is small enough to guarantee the Lipschitz-type assumptions. Next, we turn to deep ReLU network training in Section \ref{sec:dnn}. We show that when the width of the neural network trained with $n$ samples is $\Omega\paren{n^4}$, under proper initialization, Nesterov's momentum converges to zero training loss with rate $1 - \Theta\paren{\sfrac{1}{\sqrt{\kappa}}}$. To the best of our knowledge, \textit{this is the first result that establishes accelerated convergence of deep ReLU networks}:
\begin{theorem}[Informal statement of Theorem \ref{theo:nn_nesterov_conv}]
    Given a dataset with $n$ samples and $d_0$ features, we let $\bfF$ be a deep ReLU neural network with width $\Omega\paren{n^4d_0^2}$ and let $\mathcal{L}_k\in\R$ be the MSE loss value at iteration $k$ generated by training $\bfF$ with Nesterov's momentum. Then, for all $k\geq 0$, we have that:
    \vspace{-0.2cm}
    \[
        \mathcal{L}_k \leq 2\paren{1 - \Theta\paren{\tfrac{1}{\sqrt{\kappa}}}}^k\mathcal{L}_0.
    \]
    \vspace{-0.7cm}
\end{theorem}

\subsection{Related Works}
\textbf{Convergence in neural network training.} 
The NTK-based analysis builds upon the idea that when the width approaches infinity, training neural networks behaves like training a kernel machine. Various techniques are developed to control the error when the width becomes finite. In particular, \citep{du2018gradient} tracks the change of activation patterns in ReLU-based neural networks and often requires a large over-parameterization. Later works improve the over-parameterization requirement by leveraging matrix concentration inequalities \citep{song2020quadratic}, performing fine-grained analysis on the change of Jacobians \citep{oymak2019moderate}, analyzing the functional approximation property \citep{su2019learning}, and building their analysis upon the separability assumption of the data in the reproducing Hilbert space of the neural network \citep{ji2020polylogarithmic}. Going beyond two-layer neural networks, \citet{allenzhu2019convergence, du2019gradient} analyze the convergence of gradient descent on deep neural networks under a large over-parameterization. 
In the meantime, the analysis was also extended to other training algorithms and settings, such as stochastic gradient descent \citep{oymak2019moderate,ji2020polylogarithmic,xu2021onepass,zou2018stochastic}, drop-out \citep{liao2022on,mianjy2020on}, federated training \citep{huang2021flntk}, and adversarial training \citep{li2022federated}.

A noticeable line of work focuses on establishing that the PL condition is satisfied by neural networks, where the coefficient of the PL condition is based on the eigenvalue of the NTK matrix. \cite{nguyen2021ontheproof} shows the PL condition is satisfied by deep ReLU neural networks by considering the dominance of the gradient with respect to the weight in the last layer. \cite{liu2020Loss} proves the PL condition by upper bounding the Hessian for deep neural networks with smooth activation functions. \cite{song2021subquadratic} further reduces the over-parameterization while maintaining the PL condition via the expansion of the activation function with the Hermite polynomials. Lastly, \cite{banerjee2023restricted} establishes the restricted strong convexity of neural networks within a sequence of ball-shaped regions centered around the weights per iteration; yet, the coefficient of the strong convexity is not explicitly characterized in theory.

It should be noted that the above work relies on the over-parameterization of the neural network, which, in many cases, leads the training dynamic to be trapped in the so-called kernel regime \citep{woodworth2020kernel,yehudai2022power,yang2021tensor}. While crucial to guarantee a favorable loss landscape \citep{safran2018spurious}, it is also shown that even mild over-parameterization leads to an exponentially slower convergence rate \citep{xu2023overparameterization} and cannot explain the behavior of learning a single neuron \citep{yehudai2022power}. However, the above work focuses solely on the training with gradient descent. While our analysis is based on the over-parameterization assumption, it is, to the best of our knowledge, the first to show the convergence of Nesterov's momentum on deep neural networks and opens up the possibility of studying Nesterov's momentum on neural networks in a more realistic scenario.

\noindent\textbf{Convergence of Nesterov's Momentum.} 
The original proof of Nesterov's momentum \citep{nesterov2018lectures} builds upon the idea of estimating sequences for both convex smooth objectives and strongly convex smooth objectives. 
Later work in \citep{bansal2019potential} provides an alternate proof within the same setting by constructing a Lyapunov function. 
In the non-convex setting, a large body of works focuses on variants of Nesterov's momentum that lead to a guaranteed convergence by employing techniques such as negative curvature exploitation \citep{carmon2017accelerated}, cubic regularization \citep{carmon2020first}, and restarting schemes \citep{li2022restarted}. For neural networks, \citep{liu2022provable,liu2022convergence} are the only works that study the convergence of Nesterov's momentum. However, considering the over-parameterization requirement, the objective is similar to a quadratic function. Deviating from Nesterov's momentum, \cite{wang2021modular} studies the convergence of the Heavy-ball method under similar over-parameterization requirements. 
A recent work \citep{wu2023meanfield} proves the convergence of the Heavy-ball method under the mean-field limit; such a limit is not the focus of our study in this paper. Lastly, \cite{jelassi2022understanding} shows that momentum-based methods improve the generalization ability of neural networks. However, there is no explicit convergence guarantee for the training loss.
\section{Problem Setup and Assumptions}
\label{sec:preliminary}
\noindent\textbf{Notations} Standard lower-case letters (e.g. $a$) denote scalars, bold lower-case letters (e.g. $\bfa$) denote vectors, and bold upper-case letters (e.g. $\bfA$) denote matrices. For a vector $\bfa$, we use $a_i$ to denote its $i$-th entry and $\norm{\bfa}_2$ its $\ell_2$-norm. For a matrix $\bfA$, we use $a_{ij}$ to denote its $(i,j)$-th entry and $\norm{\bfA}_F$ its Frobenius norm. we use $(\bfa_1, \bfa_2)$ to denote the concatenation of $\bfa_1, \bfa_2$. For a matrix $\bfA$ with columns $\bfa_1,\dots, \bfa_n$, we use $\texttt{V}(\bfA) = (\bfa_1,\dots,\bfa_n)$ to denote the vectorized form of $\bfA$.

Optimization literature often focuses on the constraint-free minimization of a function $\hat{f}:\R^d\rightarrow \R$. In this scenario, Nesterov's momentum with step size $\eta$ and momentum parameter $\beta$ for minimizing $\hat{f}\paren{\bfw}$ bears the form, as in \citet{bansal2019potential} and (2.2.22) of \cite{nesterov2018lectures}\footnote{Despite a different choice of step size and momentum parameter.}
\begin{equation}
    \label{eq:plain_nesterov}
    \bfw_{k+1} = \bar{\bfw}_k - \eta\nabla\hat{f}\paren{\bar{\bfw}};\quad \bar{\bfw}_{k+1} = \bfw_{k+1} + \beta\paren{\bfw_{k+1} - \bfw_k}
\end{equation}
In this paper, we reformulate this problem using the following definition.
\vspace{-0.2cm}
\begin{defin}
     A function $f:\R^{d_1}\times \R^{d_2}\rightarrow \R$ is called a partitioned equivalence of $\hat{f}:\R^d\rightarrow \R$, if $i)$ $d_1 + d_2 = d$, and $ii)$ there exists a permutation function $\pi: \mathbb{R}^{d} \rightarrow \mathbb{R}^d$ over the parameters of $\hat{f}$, such that 
    $\hat{f}(\bfw) = f(\bfx, \bfu)$ if and only if $\pi(\bfw) = (\bfx, \bfu)$. We say that $(\bfx, \bfu)$ is a partition of $\bfw$.
\end{defin}
\vspace{-0.2cm}
Despite the difference in the representation of their parameters, $\hat{f}$ and $f$ share the same properties, and any algorithm for $\hat{f}$ would produce the same result for $f$. Therefore, we turn our focus from the minimization problem of $\hat{f}$ to the minimization problem in (\ref{eq:partition_model}). 
We should clarify that when we study the property of $f$ as an attempt to study the property of $\hat{f}$, \textit{we only need to assume the existence of such a partitioned equivalence}, instead of requiring an efficient algorithm to identify this equivalence explicitly.
We further assume that $f$ is a composition of a loss function $g:\R^{\hat{d}}\rightarrow \R$ and a possibly non-smooth and non-convex model function $h:\R^{d_1}\times \R^{d_2}\rightarrow \R^{\hat{d}}$, for some dimension $\hat{d} \in \mathbb{Z}_+$; i.e., $f(\bfx, \bfu) = g(h(\bfx, \bfu))$. With this construction of functional composition, we can assume only a partial smoothness on $f$ together with the smoothness of $g$, instead of the full smoothness property of $f$.
We obey the following notation with respect to gradients of $f$:
\begin{equation}
    \begin{gathered}
    \nabla_1f(\bfx, \bfu) = \tfrac{\partial f(\bfx, \bfu)}{\partial\bfx};\; \nabla_2f(\bfx, \bfu) = \tfrac{\partial f(\bfx, \bfu)}{\partial\bfu};\;\nabla f(\bfx, \bfu)  = \paren{\nabla_1f(\bfx, \bfu), \nabla_2f(\bfx, \bfu)}.
    \end{gathered}
\end{equation}
We will consider Nesterov's momentum with constant step size $\eta$ and momentum parameter $\beta$:
\begin{align}
    \label{eq:nesterov}
    \begin{split}
    \paren{\bfx_{k+1}, \bfu_{k+1}} &= \paren{\bfy_k, \bfv_k} - \eta\nabla f(\bfy_k, \bfv_k)\\
    \paren{\bfy_{k+1}, \bfv_{k+1}} &= \paren{\bfx_{k+1}, \bfu_{k+1}} + \beta\paren{\paren{\bfx_{k+1}, \bfu_{k+1}} - \paren{\bfx_k, \bfu_k}}    
    \end{split}
\end{align}
with $\bfy_0 = \bfx_0$ and $\bfv_0 = \bfu_0$. The algorithm formulation in (\ref{eq:nesterov}) is mathematically equivalent to (\ref{eq:plain_nesterov}) for optimizing $\hat{f}(\bfx)$. Therefore, the execution of (\ref{eq:nesterov}) is completely agnostic to the parameter partition. To state our assumptions, let $\mathcal{B}^{(1)}_{R_{\bfx}}$ and $\mathcal{B}^{(2)}_{R_{\bfu}}$ denote the balls centered as $\bfx_0$ and $\bfu_0$:
\vspace{-0.2cm}
\[
    \mathcal{B}^{(1)}_{R_{\bfx}} = \{\bfx\in\R^{d_1}:\norm{\bfx-\bfx_0}_2\leq R_{\bfx}\};\quad \mathcal{B}^{(2)}_{R_{\bfu}} = \{\bfu\in\R^{d_2}:\norm{\bfu-\bfu_0}_2\leq R_{\bfu}\}.
\]
Next, we state the assumptions on the general class of objectives we consider.
\begin{asump}
    \label{asump:strong_cvx}
    $f$ is $\mu$-strongly convex with $\mu > 0$ with respect to the first part of its parameters:
    \vspace{-0.2cm}
    \[
        f(\bfy, \bfu)\geq f(\bfx, \bfu) + \inner{\nabla_1f(\bfx, \bfu)}{\bfy - \bfx} + \frac{\mu}{2}\norm{\bfy - \bfx}_2^2,\quad\forall \bfx,\bfy\in\R^{d_1};\; \bfu\in \mathcal{B}^{(2)}_{R_{\bfu}}.
    \]
    \vspace{-0.7cm}
\end{asump}
\vspace{-0.2cm}
\begin{asump}
    \label{asump:f_smooth}
    $f$ is $L_1$-smooth with respect to the first part of its parameters:
    \vspace{-0.2cm}
    \[
        f(\bfy, \bfu) \leq f(\bfx, \bfu) + \inner{\nabla_1f(\bfx,\bfu)}{\bfy - \bfx} + \frac{L_1}{2}\norm{\bfy - \bfx}_2^2,\quad\forall \bfx,\bfy\in\R^{d_1};\; \bfu\in\mathcal{B}^{(2)}_{R_{\bfu}}.
    \]
    \vspace{-0.5cm}
\end{asump}
\vspace{-0.2cm}
Based on Assumption \ref{asump:strong_cvx} and \ref{asump:f_smooth}, we define the condition number of $f$.
\begin{defin}
    (Condition Number)
    The condition number $\kappa$ of $f$ is given by $\kappa = \sfrac{L_1}{\mu}$.
\end{defin}

\begin{asump}
    \label{asump:g_smooth}
    $g$ satisfies $\min_{\bfs\in\R^{\hat{d}}}g(\bfs) = \min_{\bfx\in\R^{d_1},\bfu\in\R^{d_2}}f(\bfx,\bfu)$, and is $L_2$-smooth:
    \vspace{-0.2cm}
    \[
        g(\bfs_1) \leq g(\bfs_2) + \inner{\nabla g(\bfs_1)}{\bfs_2 - \bfs_1} + \frac{L_2}{2}\norm{\bfs_2 - \bfs_1}_2^2,\quad\forall \bfs_1, \bfs_2\in\R^{\hat{d}}.
    \]
    \vspace{-0.5cm}
\end{asump}
Assumptions \ref{asump:strong_cvx} and \ref{asump:f_smooth} are relaxed versions of the smoothness and strong convexity. Instead of assuming that the objective is smooth and strongly convex over all parameters, we only assume such property to hold with respect to a subset of the parameters while the rest lie near initialization. Assumption \ref{asump:g_smooth} is standard in prior literature \citep{liu2020Loss, song2021subquadratic} of neural network training, and holds for loss functions such as the MSE loss and the logistic loss.
\begin{asump}
    \label{asump:h_lip}
    $h$ satisfies $G_1$-Lipschitzness with respect to the second part of its parameters:
    \[
        \norm{h(\bfx, \bfu) - h(\bfx, \bfv)}_2\leq G_1\norm{\bfu - \bfv}_2,\quad\forall \bfx\in\mathcal{B}^{(1)}_{R_{\bfx}};\;\bfu,\bfv\in\mathcal{B}^{(2)}_{R_{\bfu}}.
    \]
\end{asump}

\begin{asump}
    \label{asump:grad_lip}
    The gradient of $f$ with respect to the first part of its parameter, $\nabla_1f(\bfx, \bfu)$, satisfies $G_2$-Lipschitzness with respect to the second part of its parameters:
    \[
        \norm{\nabla_1f(\bfx, \bfu) - \nabla_1f(\bfx, \bfv)}_2 \leq G_2\norm{\bfu - \bfv}_2,\quad\forall \bfx\in\mathcal{B}^{(1)}_{R_{\bfx}};\;\bfu,\bfv\in\mathcal{B}^{(2)}_{R_{\bfu}}.
    \]
\end{asump}

\begin{asump}
    \label{asump:universal_opt}
    Minimum values of $f$ restricted to the optimization over $\bfx$ equal the global minimum value:
    \vspace{-0.2cm}
    \[
        \min_{\bfx\in\R^{d_1}}f(\bfx, \bfu) = f^\star:=\min_{\bfx\in\R^{d_1},\bfu\in\R^{d_2}}f(\bfx,\bfu);\quad \forall \bfu\in\mathcal{B}^{(2)}_{R_{\bfu}}.
    \]
    \vspace{-0.5cm}
\end{asump}
Since we do not assume $f$ to be convex or smooth with respect to $\bfu$, we cannot guarantee that the updates in (\ref{eq:nesterov}) on $\bfu$ will make positive progress towards finding the global minimum. Nevertheless, the updates on $\bfu$ are unavoidable since the execution of (\ref{eq:nesterov}) is agnostic to the parameter partition.
Therefore, we treat the change in the second part of the parameters as errors induced by the updates. 
Assumptions \ref{asump:h_lip} and \ref{asump:grad_lip} are made to control the effect on the change of the model output $h(\bfx, \bfu)$ and the gradient with respect to $\bfx$ caused by the change of $\bfu$. Moreover, without Assumption \ref{asump:universal_opt}, it is possible that the change of $\bfu$ will lead the optimization trajectory to some local minimum of $\bfu$, such that the global minimum value cannot be achieved even when $\bfx$ is fully optimized. 
We show that Assumptions \ref{asump:strong_cvx}-\ref{asump:universal_opt} are satisfied by a smooth and strongly convex function:
\begin{theorem}
    \label{theo:strong_cvx_smooth_suffice}
    Let $\tilde{f}$ be $\tilde{\mu}$-strongly convex and $\tilde{L}$-smooth. Then $\tilde{f}$ satisfies Assumptions \ref{asump:strong_cvx}-\ref{asump:universal_opt} with:
    \vspace{-0.2cm}
    \[
        R_{\bfx} = R_{\bfu} = \infty;\;\mu = \tilde{\mu};\;L_1 = L_2 = \tilde{L};\; G_1 = G_2 = 0.
    \]
\end{theorem}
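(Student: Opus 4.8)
The plan is to prove Theorem~\ref{theo:strong_cvx_smooth_suffice} by exhibiting one explicit choice of every structural object in the problem setup and then checking Assumptions~\ref{asump:strong_cvx}--\ref{asump:universal_opt} directly against the two hypotheses on $\tilde f$. The natural choice is the \emph{trivial} partition: take $d_1 = d$ and $d_2 = 0$ (the definition of partitioned equivalence permits $d_2 = 0$), so that $\R^{d_2}$ is the one-point space whose single element we still denote $\bfu$; let $\pi$ be the identity on $\R^d$, so $\tilde f$ is the partitioned equivalence of $f(\bfx,\bfu)$. For the loss/model decomposition, set $\hat d = d$, $h = \mathrm{id}_{\R^d}$, and $g = \tilde f$, so $f = g\circ h$. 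Finally take $R_{\bfx} = R_{\bfu} = \infty$, which turns $\mathcal B^{(1)}_{R_{\bfx}}$ into $\R^{d_1} = \R^d$ and $\mathcal B^{(2)}_{R_{\bfu}}$ into the one-point space $\R^{d_2}$.

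With these choices the verification is essentially a translation. Assumption~\ref{asump:strong_cvx} is exactly the statement that $f(\cdot,\bfu) = \tilde f$ is $\tilde\mu$-strongly convex on all of $\R^{d_1}$, which is a hypothesis; hence $\mu = \tilde\mu$. Likewise Assumption~\ref{asump:f_smooth} is exactly $\tilde L$-smoothness of $\tilde f$, giving $L_1 = \tilde L$ and therefore $\kappa = \tilde L/\tilde\mu$. Assumption~\ref{asump:g_smooth} holds because $g = \tilde f$ is $\tilde L$-smooth by hypothesis ($L_2 = \tilde L$) and because $\min_{\bfs} g(\bfs) = \min_{\bfx}\tilde f(\bfx) = \min_{\bfx,\bfu} f(\bfx,\bfu)$ by construction. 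Assumption~\ref{asump:universal_opt} is the same identity, $\min_{\bfx} f(\bfx,\bfu) = \min_{\bfx}\tilde f(\bfx) = f^\star$, the outer minimization over $\bfu$ being over a single point. It remains to handle the two Lipschitz-in-$\bfu$ assumptions: since $\R^{d_2}$ contains only one point, for every $\bfx$ the only admissible pair $(\bfu,\bfv)$ has $\bfu = \bfv$, so both sides of the inequalities in Assumptions~\ref{asump:h_lip} and~\ref{asump:grad_lip} vanish identically and hold with \emph{any} constant, in particular $G_1 = G_2 = 0$.

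The argument has no analytic content — it is a consistency check that the assumption class genuinely contains the classical strongly-convex-and-smooth setting. The only point that needs a little care, and which I would flag as the ``obstacle'' such as it is, is that the six choices must be mutually consistent: fixing $h = \mathrm{id}_{\R^d}$ forces $\hat d = d$ and $g = \tilde f$, so one must check that this forced $g$ still obeys Assumption~\ref{asump:g_smooth}, which it does precisely because $\tilde f$ is assumed $\tilde L$-smooth.

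For completeness I would also note why a genuinely two-block partition is \emph{not} the right choice here: if one instead splits $\R^d = \R^{d_1}\times\R^{d_2}$ with both $d_1,d_2 \ge 1$, then Assumptions~\ref{asump:strong_cvx} and~\ref{asump:f_smooth} still follow by restricting $\tilde f$ to slices, but with $h = \mathrm{id}$ one gets $G_1 = 1$, and since the mixed Hessian block of $\tilde f$ has operator norm at most $\tilde L$ one gets only $G_2 = \tilde L$, not $0$. Thus matching the stated constants $G_1 = G_2 = 0$ is exactly what dictates placing the entire parameter vector in the strongly convex block.
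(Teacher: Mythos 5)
Your proof is correct and follows essentially the same route as the paper's: the paper likewise takes the degenerate partition with a zero-dimensional second block ($h(\bfx,\bfu)=\bfx$ on $\R^d\times\{\bm{0}\}$, $g=\tilde f$, $R_{\bfx}=R_{\bfu}=\infty$) and observes that Assumptions~\ref{asump:h_lip} and~\ref{asump:grad_lip} hold vacuously since the only admissible $\bfu,\bfv$ coincide. The remaining verifications are the same direct translations of strong convexity and smoothness of $\tilde f$ that you give.
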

\vspace{-0.2cm}
Theorem \ref{theo:strong_cvx_smooth_suffice} shows that the combination of Assumptions \ref{asump:strong_cvx}-\ref{asump:universal_opt} is no stronger than the assumption that the objective is smooth and strongly convex. Therefore, the minimization of the class of functions satisfying Assumptions \ref{asump:strong_cvx}-\ref{asump:universal_opt} does not have a better lower complexity bound than the class of smooth and strong convex functions. That is, the best convergence rate we can achieve is $1 - \Theta\paren{\sfrac{1}{\sqrt{\kappa}}}$.

\section{Accelerated Convergence under Partial Strong Convexity}
\subsection{Warmup: Convergence of Gradient Descent}
\label{sec:gd_conv}
The previous section shows that $f$ satisfying Assumption \ref{asump:strong_cvx}-\ref{asump:universal_opt} is weaker than the combination of smoothness and strong convexity. Before diving into the convergence of gradient descent, we first show that Assumptions \ref{asump:strong_cvx},\ref{asump:universal_opt} imply the PL condition:
\begin{lemma}
    \label{lem:PL_condition}
    Suppose that Assumption \ref{asump:strong_cvx}, \ref{asump:universal_opt} hold. Then, for all $\bfx\in\R^d$ and $\bfu\in\mathcal{B}^{(2)}_{R_{\bfu}}$, we have:
    \vspace{-0.2cm}
    \[
        \norm{\nabla f(\bfx, \bfu)}_2^2\geq \norm{\nabla_1f(\bfx,\bfu)}_2^2 \geq 2\mu\paren{f(\bfx,\bfu) - f^\star}.
    \]
\end{lemma}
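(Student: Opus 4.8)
The first inequality is immediate: since $\nabla f(\bfx,\bfu) = \paren{\nabla_1f(\bfx,\bfu), \nabla_2f(\bfx,\bfu)}$, we have $\norm{\nabla f(\bfx,\bfu)}_2^2 = \norm{\nabla_1f(\bfx,\bfu)}_2^2 + \norm{\nabla_2f(\bfx,\bfu)}_2^2 \geq \norm{\nabla_1f(\bfx,\bfu)}_2^2$, so no work is needed there. The content is the second inequality, which is the classical fact that strong convexity implies the PL inequality, applied to the restriction of $f$ in the $\bfx$-variable.

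The plan is as follows. Fix $\bfu\in\mathcal{B}^{(2)}_{R_{\bfu}}$ and set $\phi(\bfx) \defeq f(\bfx, \bfu)$, viewed as a function on all of $\R^{d_1}$. By Assumption \ref{asump:strong_cvx}, $\phi$ is $\mu$-strongly convex, so for every $\bfx, \bfy\in\R^{d_1}$,
\[
    \phi(\bfy) \geq \phi(\bfx) + \inner{\nabla\phi(\bfx)}{\bfy - \bfx} + \frac{\mu}{2}\norm{\bfy - \bfx}_2^2 .
\]
Now minimize both sides over $\bfy\in\R^{d_1}$. The left-hand side yields $\min_{\bfy\in\R^{d_1}} f(\bfy, \bfu)$, which by Assumption \ref{asump:universal_opt} equals $f^\star$. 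The right-hand side is a strictly convex quadratic in $\bfy$ whose minimizer is $\bfy = \bfx - \tfrac{1}{\mu}\nabla\phi(\bfx)$, giving minimal value $\phi(\bfx) - \tfrac{1}{2\mu}\norm{\nabla\phi(\bfx)}_2^2$. Hence $f^\star \geq \phi(\bfx) - \tfrac{1}{2\mu}\norm{\nabla\phi(\bfx)}_2^2$, i.e.\ $\norm{\nabla\phi(\bfx)}_2^2 \geq 2\mu\paren{\phi(\bfx) - f^\star}$. Finally observe $\nabla\phi(\bfx) = \nabla_1 f(\bfx,\bfu)$ and $\phi(\bfx) = f(\bfx,\bfu)$, which gives exactly $\norm{\nabla_1f(\bfx,\bfu)}_2^2 \geq 2\mu\paren{f(\bfx,\bfu) - f^\star}$, and combining with the first inequality completes the proof.

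There is essentially no obstacle here; the only point requiring a moment's care is that the minimization over $\bfy$ is legitimate — the left side has a well-defined (attained) minimum because $\phi$ is strongly convex on all of $\R^{d_1}$, hence coercive with a unique minimizer, and the right side is a coercive quadratic, so both infima are attained and the inequality between the minimized quantities is valid. One should also note that the statement is claimed for all $\bfx\in\R^{d}$ in the lemma text, but since $\nabla_1 f$ and the strong convexity in Assumption \ref{asump:strong_cvx} only concern the first $d_1$ coordinates, the argument applies verbatim with $\bfx$ ranging over $\R^{d_1}$ (the first component of the partition), which is what is actually used downstream.
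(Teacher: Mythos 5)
Your proof is correct and follows essentially the same route as the paper's: both derive the PL inequality by minimizing the strong-convexity quadratic lower bound over $\bfy$ (obtaining the minimizer $\bfx - \tfrac{1}{\mu}\nabla_1 f(\bfx,\bfu)$ and the value $f(\bfx,\bfu) - \tfrac{1}{2\mu}\norm{\nabla_1 f(\bfx,\bfu)}_2^2$) and then invoke Assumption \ref{asump:universal_opt} to identify the minimum with $f^\star$. Your side remark that the quantifier should read $\bfx\in\R^{d_1}$ is also correct.
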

\vspace{-0.2cm}
Recall that, due to the minimum assumption made on the relationship between $f(\bfx,\bfu)$ and $\bfu$, we treat the change of $\bfu$ during the iterates as an error. Thus, we need the following lemma, which bounds how much $f$ is affected by the change of $\bfu$.
\begin{lemma}
    \label{lem:df_du}
    Let Assumptions \ref{asump:g_smooth}, \ref{asump:h_lip} hold. For any $\hat{\mathcal{Q}} > 0$ and $\bfx\in\mathcal{B}^{(1)}_{R_{\bfx}}, \bfu,\bfv\in\mathcal{B}^{(2)}_{R_{\bfu}}$, we have:
    \vspace{-0.2cm}
    \[
        f(\bfx,\bfu)-f(\bfx,\bfv)\leq \hat{\mathcal{Q}}^{-1}L_2\paren{f(\bfx,\bfv)-f^\star} + \tfrac{G_1^2}{2}\paren{L_2 + \hat{\mathcal{Q}}}\norm{\bfu - \bfv}_2^2.
    \]
\end{lemma}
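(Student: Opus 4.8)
The plan is to expand $f = g\circ h$ and route everything through the $L_2$-smoothness of $g$ (Assumption~\ref{asump:g_smooth}) together with the $G_1$-Lipschitzness of $h$ in its second argument (Assumption~\ref{asump:h_lip}). I would start from $f(\bfx,\bfu)-f(\bfx,\bfv) = g(h(\bfx,\bfu)) - g(h(\bfx,\bfv))$ and apply the descent inequality for $g$, expanding around the reference point $h(\bfx,\bfv)$, to obtain
\[
f(\bfx,\bfu) - f(\bfx,\bfv) \le \inner{\nabla g(h(\bfx,\bfv))}{h(\bfx,\bfu) - h(\bfx,\bfv)} + \frac{L_2}{2}\norm{h(\bfx,\bfu) - h(\bfx,\bfv)}_2^2 .
\]
Since $\bfx\in\mathcal{B}^{(1)}_{R_{\bfx}}$ and $\bfu,\bfv\in\mathcal{B}^{(2)}_{R_{\bfu}}$, Assumption~\ref{asump:h_lip} applies and yields $\norm{h(\bfx,\bfu) - h(\bfx,\bfv)}_2 \le G_1\norm{\bfu-\bfv}_2$, so the curvature term is at most $\tfrac{L_2 G_1^2}{2}\norm{\bfu-\bfv}_2^2$.

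For the cross term, I would combine Cauchy--Schwarz with the same Lipschitz bound to get $\inner{\nabla g(h(\bfx,\bfv))}{h(\bfx,\bfu)-h(\bfx,\bfv)} \le G_1\norm{\nabla g(h(\bfx,\bfv))}_2\norm{\bfu-\bfv}_2$, and then split this product via Young's inequality $ab\le \tfrac{1}{2\hat{\mathcal{Q}}}a^2 + \tfrac{\hat{\mathcal{Q}}}{2}b^2$ with $a=\norm{\nabla g(h(\bfx,\bfv))}_2$ and $b=G_1\norm{\bfu-\bfv}_2$; the free parameter is calibrated precisely so that the coefficient of $a^2$ is $\tfrac{1}{2\hat{\mathcal{Q}}}$. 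The last ingredient is the standard gradient-domination bound for a smooth function bounded below: since $g$ is $L_2$-smooth and $\min_{\bfs}g(\bfs)=f^\star$ (Assumption~\ref{asump:g_smooth}), evaluating the descent inequality at $\bfs - \tfrac{1}{L_2}\nabla g(\bfs)$ gives $\norm{\nabla g(\bfs)}_2^2 \le 2L_2(g(\bfs)-f^\star)$ for all $\bfs$; with $\bfs = h(\bfx,\bfv)$ this reads $\norm{\nabla g(h(\bfx,\bfv))}_2^2 \le 2L_2(f(\bfx,\bfv)-f^\star)$.

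Putting the pieces together, the cross term is bounded by $\hat{\mathcal{Q}}^{-1}L_2(f(\bfx,\bfv)-f^\star) + \tfrac{\hat{\mathcal{Q}}G_1^2}{2}\norm{\bfu-\bfv}_2^2$, and adding the curvature term gives the claimed inequality, with the factor $\tfrac{G_1^2}{2}(L_2+\hat{\mathcal{Q}})$ appearing as the sum of the $\tfrac{L_2}{2}$ from curvature and the $\tfrac{\hat{\mathcal{Q}}}{2}$ from Young. There is no genuine obstacle in this argument; the only points requiring care are (i) expanding around the reference point $\bfv$ rather than $\bfu$, so that the gradient-domination bound produces $f(\bfx,\bfv)-f^\star$ rather than the left-hand quantity (which would otherwise need to be moved across the inequality and spoil the clean form), and (ii) keeping track of the two distinct sources of the $\norm{\bfu-\bfv}_2^2$ contribution so that their coefficients add up to exactly the stated constant.
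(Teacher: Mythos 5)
Your proposal is correct and follows essentially the same route as the paper: expand the $L_2$-smoothness of $g$ around $h(\bfx,\bfv)$, bound $\norm{h(\bfx,\bfu)-h(\bfx,\bfv)}_2$ by $G_1\norm{\bfu-\bfv}_2$, split the cross term by Cauchy--Schwarz and Young's inequality, and control $\norm{\nabla g(h(\bfx,\bfv))}_2^2$ by $2L_2(f(\bfx,\bfv)-f^\star)$ (the paper's Lemma~\ref{lem:g_grad_bound}, which you re-derive inline). Your constant tracking is in fact slightly more careful than the paper's own write-up.
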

\vspace{-0.2cm}
With the help of Lemmas \ref{lem:PL_condition} and \ref{lem:df_du}, we can show the linear convergence of gradient descent:
\begin{theorem}
    \label{theo:gd_conv}
    Suppose that Assumptions \ref{asump:strong_cvx}-\ref{asump:h_lip} and \ref{asump:universal_opt} hold with $G_1^4 \leq \frac{\mu^2}{8L_2^2}$ and
    \vspace{-0.2cm}
    \[
        R_{\bfx} \geq 16\eta\kappa\sqrt{L_1}\paren{f(\bfx_0,\bfu_0) - f^\star}^{\frac{1}{2}};\; \quad R_{\bfu} \geq 16\eta\kappa G_1\sqrt{L_2}\paren{f(\bfx_0,\bfu_0) - f^\star}^{\frac{1}{2}}.
    \]
    Then there exists constant $c > 0$ such that gradient descent with $\eta = \tfrac{c}{L_1}$ converges according to:
    \[
        \vspace{-0.2cm} f(\bfx_k,\bfu_k) - f^\star \leq \paren{1 - \tfrac{c}{4\kappa}}^k\paren{f(\bfx_0,\bfu_0) - f^\star}.
    \]
\end{theorem}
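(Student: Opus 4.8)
I would prove this by a coupled induction on $k$, simultaneously maintaining three statements: $(i)$ the target rate $f(\bfx_k,\bfu_k)-f^\star\le\paren{1-\sfrac{c}{4\kappa}}^k\paren{f(\bfx_0,\bfu_0)-f^\star}$; $(ii)$ $\bfx_k\in\mathcal{B}^{(1)}_{R_{\bfx}}$; and $(iii)$ $\bfu_k\in\mathcal{B}^{(2)}_{R_{\bfu}}$. Statements $(ii)$--$(iii)$ are what make it legal to invoke Assumptions \ref{asump:strong_cvx}--\ref{asump:h_lip}, \ref{asump:universal_opt} and Lemmas \ref{lem:PL_condition}, \ref{lem:df_du} along the trajectory, so they must be carried through the induction rather than assumed once. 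Within the inductive step the order would be: from $(i)$--$(iii)$ for all indices $\le k$, first deduce $(ii)$--$(iii)$ for $k+1$ (a telescoping-sum argument), and only then deduce $(i)$ for $k+1$ (a one-step contraction).

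\textbf{One-step contraction, assuming the iterates stay in the balls.} I would split $f(\bfx_{k+1},\bfu_{k+1})-f^\star=\paren{f(\bfx_{k+1},\bfu_{k+1})-f(\bfx_{k+1},\bfu_k)}+\paren{f(\bfx_{k+1},\bfu_k)-f^\star}$. For the second term, Assumption \ref{asump:f_smooth} applied to $\bfx_{k+1}=\bfx_k-\eta\nabla_1f(\bfx_k,\bfu_k)$ gives $f(\bfx_{k+1},\bfu_k)\le f(\bfx_k,\bfu_k)-\eta\paren{1-\sfrac{L_1\eta}{2}}\norm{\nabla_1f(\bfx_k,\bfu_k)}_2^2\le f(\bfx_k,\bfu_k)-\tfrac{\eta}{2}\norm{\nabla_1f(\bfx_k,\bfu_k)}_2^2$ once $\eta=\sfrac{c}{L_1}$ with $c$ a small enough absolute constant, and Lemma \ref{lem:PL_condition} then yields $f(\bfx_{k+1},\bfu_k)-f^\star\le\paren{1-\eta\mu}\paren{f(\bfx_k,\bfu_k)-f^\star}=\paren{1-\sfrac{c}{\kappa}}\paren{f(\bfx_k,\bfu_k)-f^\star}$. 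For the first (error) term, I would invoke Lemma \ref{lem:df_du} with $(\bfu,\bfv)=(\bfu_{k+1},\bfu_k)$ and a free parameter $\hat{\mathcal{Q}}$, then bound $\norm{\bfu_{k+1}-\bfu_k}_2^2=\eta^2\norm{\nabla_2f(\bfx_k,\bfu_k)}_2^2$ as follows: $\nabla_2f=\paren{\partial_\bfu h}^\top\nabla g(h)$ by the chain rule, Assumption \ref{asump:h_lip} bounds the $\bfu$-Jacobian of $h$ by $G_1$ in operator norm, and $L_2$-smoothness with $\min g=f^\star$ (Assumption \ref{asump:g_smooth}) gives $\norm{\nabla g(\bfs)}_2^2\le 2L_2\paren{g(\bfs)-f^\star}$, so $\norm{\bfu_{k+1}-\bfu_k}_2^2\le 2\eta^2G_1^2L_2\paren{f(\bfx_k,\bfu_k)-f^\star}$. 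Substituting everything and choosing $\hat{\mathcal{Q}}=\Theta\paren{\sfrac{L_2\kappa}{c}}$ so that $\hat{\mathcal{Q}}^{-1}L_2=\sfrac{c}{4\kappa}$, the one-step factor is $\paren{1+\sfrac{c}{4\kappa}}\paren{1-\sfrac{c}{\kappa}}$ plus an error term of order $\tfrac{c\,G_1^4L_2^2}{\mu^2\kappa}$; the hypothesis $G_1^4\le\sfrac{\mu^2}{8L_2^2}$ makes that error at most a small multiple of $\sfrac{c}{\kappa}$, and a short computation shows the whole factor is $\le 1-\sfrac{c}{4\kappa}$ for $c$ in the appropriate range.

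\textbf{Staying in the balls.} Assuming $(i)$ for all indices $\le k$, I would bound $\norm{\bfx_{k+1}-\bfx_0}_2\le\sum_{j=0}^{k}\norm{\bfx_{j+1}-\bfx_j}_2=\eta\sum_{j=0}^{k}\norm{\nabla_1f(\bfx_j,\bfu_j)}_2$. Assumption \ref{asump:f_smooth} together with Assumption \ref{asump:universal_opt} (which forces $\min_{\bfx}f(\bfx,\bfu_j)=f^\star$, legal because $\bfu_j\in\mathcal{B}^{(2)}_{R_{\bfu}}$ by $(iii)$) gives $\norm{\nabla_1f(\bfx_j,\bfu_j)}_2^2\le 2L_1\paren{f(\bfx_j,\bfu_j)-f^\star}\le 2L_1\paren{1-\sfrac{c}{4\kappa}}^j\paren{f(\bfx_0,\bfu_0)-f^\star}$, so the sum is dominated by the geometric series $\sum_{j\ge 0}\paren{1-\sfrac{c}{4\kappa}}^{j/2}=\tfrac{1}{1-\sqrt{1-\sfrac{c}{4\kappa}}}\le\tfrac{8\kappa}{c}$, using $1-\sqrt{1-t}\ge\tfrac{t}{2}$. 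This gives $\norm{\bfx_{k+1}-\bfx_0}_2\le\tfrac{8\kappa}{c}\cdot\eta\sqrt{2L_1\paren{f(\bfx_0,\bfu_0)-f^\star}}$, which is at most the stated lower bound on $R_{\bfx}$ once $c$ is fixed. The bound on $\norm{\bfu_{k+1}-\bfu_0}_2$ is identical with $\norm{\nabla_2f(\bfx_j,\bfu_j)}_2^2\le 2G_1^2L_2\paren{f(\bfx_j,\bfu_j)-f^\star}$ in place of the $\nabla_1f$ bound, and matches the stated lower bound on $R_{\bfu}$. Plugging $(ii)$--$(iii)$ at $k+1$ back into the one-step contraction closes the induction; the base case $k=0$ is trivial.

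\textbf{Main obstacle.} The conceptual skeleton is routine; the real work is constant-chasing in such a way that the three ingredients are simultaneously satisfiable. One must choose a single absolute constant $c$ (hence step size $\eta=\sfrac{c}{L_1}$) small enough that $1-\sfrac{L_1\eta}{2}\ge\sfrac{1}{2}$ and that the $\hat{\mathcal{Q}}$-balanced factor lands below $1-\sfrac{c}{4\kappa}$, yet large enough that the $\sfrac{8\kappa}{c}$-type prefactor from the geometric sum keeps $\norm{\bfx_{k+1}-\bfx_0}_2$ and $\norm{\bfu_{k+1}-\bfu_0}_2$ inside the stated radii; and one must check that the assumption $G_1^4\le\sfrac{\mu^2}{8L_2^2}$ is exactly the slack needed so that the uncontrolled drift of $\bfu$ (on which $f$ is neither convex nor smooth) is absorbed into the contraction rather than overwhelming it. Reconciling these for one common $c$ is the crux.
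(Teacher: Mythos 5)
Your proposal is correct and mirrors the paper's own proof: the same coupled induction between the convergence rate and the ball-containment conditions, the same decomposition $f(\bfx_{k+1},\bfu_{k+1})-f^\star = (f(\bfx_{k+1},\bfu_{k+1})-f(\bfx_{k+1},\bfu_k)) + (f(\bfx_{k+1},\bfu_k)-f^\star)$ handled via Assumption \ref{asump:f_smooth}, Lemma \ref{lem:PL_condition}, and Lemma \ref{lem:df_du} with a tuned parameter, and the same telescoping-plus-geometric-series argument for staying inside $\mathcal{B}^{(1)}_{R_{\bfx}}$ and $\mathcal{B}^{(2)}_{R_{\bfu}}$. The only cosmetic differences are that you re-derive the bound $\norm{\nabla_2 f}_2^2\leq 2G_1^2L_2(f-f^\star)$ inline (the paper's auxiliary Lemma \ref{lem:du_grad_bound}) and pick $\hat{\mathcal{Q}}=\Theta(L_2\kappa/c)$ instead of the paper's $2(\kappa-1)L_2$, which is the same scale.
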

\vspace{-0.0cm}
I.e., Theorem \ref{theo:gd_conv} shows that gradient descent applied to $f$ converges linearly with a rate of $1 - \Theta(\sfrac{1}{\kappa})$ within our settings. The proofs for Lemma \ref{lem:PL_condition} and \ref{lem:df_du}, and Theorem \ref{theo:gd_conv} are deferred to Appendix \ref{sec:proof_gd_conv}.

\subsection{Acceleration of Nesterov's Momentum}
\label{sec:nesterov_conv}
We will now study the convergence property of Nesterov's momentum in (\ref{eq:nesterov}) under only Assumptions \ref{asump:strong_cvx}-\ref{asump:universal_opt}. Our result shows an accelerated convergence rate compared with gradient descent.
\begin{theorem}
    \label{theo:nesterov_conv}
    Let Assumptions (\ref{asump:strong_cvx})-(\ref{asump:universal_opt}) hold. Consider Nesterov's momentum given by (\ref{eq:nesterov})
    with initialization $\{\bfx_0, \bfu_0\} = \{\bfy_0, \bfv_0\}$. There exists absolute constants $c, C_1, C_2 > 0$, such that, if $\mu, L_1, L_2, G_1, G_2$ and $R_{\bfx}, R_{\bfu}$ satisfy:
    \begin{equation}
        \label{eq:G1_G4_req}
        \begin{aligned}
            & G_1^4 \leq \tfrac{C_1\mu^2}{L_2(L_2 + 1)^2}\paren{\tfrac{1-\beta}{1+\beta}}^3;\quad G_1^2G_2^2\leq \tfrac{C_2\mu^3}{L_2(L_2+1)\sqrt{\kappa}}\paren{\tfrac{1-\beta}{1+\beta}}^2; \\
            & R_{\bfx} \geq \frac{36}{c}\sqrt{\kappa}\paren{\tfrac{\eta(L_2 + 1)}{1 - \beta}}^{\frac{1}{2}}(f(\bfx_0, \bfu_0) - f^\star)^{\frac{1}{2}}; \\
            & R_{\bfu} \geq \frac{36}{c}\sqrt{\kappa}\paren{\tfrac{\eta G_1^2L_2(L_2 + 1)(1+\beta)^3}{\mu\beta(1-\beta)^3}}^{\frac{1}{2}}(f(\bfx_0, \bfu_0) - f^\star)^{\frac{1}{2}},
        \end{aligned}
    \end{equation}
    and, if we choose $\eta = \sfrac{c}{L_1}$, $\beta = \sfrac{(4\sqrt{\kappa} - \sqrt{c})}{(4\sqrt{\kappa} + 7\sqrt{c})}$, then $\bfx_k,\bfy_k\in\mathcal{B}^{(1)}_{R_{\bfx}}$ and $\bfu_k,\bfv_k\in\mathcal{B}^{(2)}_{R_{\bfu}}$ for all $k\in\N$, and Nesterov's recursion converges according to:
    \begin{equation}
        \label{eq:nesterov_conv}
        f(\bfx_k, \bfu_k) - f^\star \leq 2\paren{1 - \tfrac{c}{4\sqrt{\kappa}}}^k(f(
        \bfx_0, \bfu_0) - f^\star).
    \end{equation}
\end{theorem}
Theorem \ref{theo:nesterov_conv} shows that, under Assumptions \ref{asump:strong_cvx}-\ref{asump:universal_opt} with a sufficiently small $G_1$ and $G_2$ as in (\ref{eq:G1_G4_req}), Nesterov's iteration enjoys an accelerated convergence, as in (\ref{eq:nesterov_conv}). Moreover, the iterates of Nesterov's momentum $\{(\bfx_k,\bfy_k)\}_{k=1}^\infty$ and $\{(\bfu_k,\bfv_k)\}_{k=1}^\infty$ stay in a ball around initialization with radius in (\ref{eq:G1_G4_req}). 

To better interpret our result, we first focus on (\ref{eq:G1_G4_req}). By our choice of $\beta$, we have that $1-\beta = \Theta\paren{\sfrac{1}{\sqrt{\kappa}}}$ and $1 + \beta = \Theta\paren{1}$. Therefore, the requirement of $G_1, G_2$ in  (\ref{eq:G1_G4_req}) can be simplified to $G_1^4\leq O\paren{\sfrac{\mu^{\sfrac{7}{2}}}{L_1^{\sfrac{3}{2}}L_2^3}}$ and $G_1^2G_2^2 \leq O\paren{\sfrac{\mu^{\sfrac{9}{2}}}{L_1^{\sfrac{3}{2}}L_2^2}}$. This simplified condition implies that we need a smaller $G_1$ and $G_2$ if $\mu$ is small and $L_1$ and $L_2$ are large. For the requirement on $R_{\bfx}$ and $R_{\bfu}$ in (\ref{eq:G1_G4_req}), we can simplify with $\eta = O\paren{\sfrac{1}{L_1}}$ and $\beta = \Theta\paren{1}$. In this way, $R_{\bfx}$ and $R_{\bfu}$ reduce to $\Omega\paren{\sfrac{L_1^{\sfrac{1}{4}}L_2^{\sfrac{1}{2}}}{\mu^{\sfrac{3}{4}}}}\cdot(f(\bfx_0, \bfu_0) - f^\star)^{\frac{1}{2}}$ and $\Omega\paren{\sfrac{G_1L_1^{\sfrac{3}{4}}L_2}{\mu^{\sfrac{7}{4}}}}\cdot(f(\bfx_0, \bfu_0) - f^\star)^{\frac{1}{2}}$, respectively. Both quantities grow with a larger $L_1$ and $L_2$ and a smaller $\mu$. Noticeably $R_{\bfu}$ also scales with $G_1$. Focusing on the convergence property in (\ref{eq:nesterov_conv}), we can conclude that Nesterov's momentum achieves an accelerated convergence rate of $1 - \Theta\paren{\sfrac{1}{\sqrt{\kappa}}}$ compared with the $1 - \Theta\paren{\sfrac{1}{\kappa}}$ rate in Theorem \ref{theo:gd_conv}. In more detail, we discuss the proof of Theorem \ref{theo:nesterov_conv} in the sections below.

\subsection{Technical Difficulty}
\label{sec:difficulty}
Similar to the previous work on showing the convergence of Nesterov's momentum \citep{bansal2019potential,acceleration2021}, the core of our proof is the construction of a Lyapunov function that upper bounds the optimality gap $f(\bfx_k,\bfu_k) - f^*$ at each step $k$, and enjoys a linear convergence. However, the construction of this Lyapunov function faces the following difficulty.

\medskip
\noindent\textbf{Difficulty 1.} \textit{Most previous analyses of Nesterov's momentum use the global minimum as a reference point to construct the Lyapunov function; see \citet{bansal2019potential, acceleration2021}. In the original proof of Nesterov, the construction of the estimating sequence also assumes the existence of a unique global minimum \citep{nesterov2018lectures}. However, in our scenario, the objective function is non-convex. It allows the existence of multiple global minima, which prevents us from directly applying the Lyapunov function or estimating sequence, as in previous works.}

\medskip
\noindent While the non-convexity of $f$ introduces the possibility of multiple global minima, Assumption \ref{asump:strong_cvx} implies that, with a fixed $\bfu$, there exists a unique $\bfx^{\star}(\bfu)$ that minimizes $f(\bfx, \bfu)$. Moreover, Assumption \ref{asump:universal_opt} implies that, for all $\bfu\in\mathcal{B}^{(2)}_{R_{\bfu}}$, the local minimum $(\bfx^{\star}(\bfu), \bfu)$ is also a global minimum. Thus, we resolve the difficulty by using the $\bfx^{\star}(\bfu_k)$ as the reference point for the Lyapunov function at the $k$th iteration and ensure the stability of the Lyapunov function by bounding the change of $\bfx^{\star}(\bfu_k)$. The following lemma gives a characterization of this property.
\begin{lemma}
    \label{lem:minimum_moving}
    Let $\bfx^{\star}(\bfu) = \argmin_{\bfx\in\R^{d_1}}f(\bfx,\bfu)$. Suppose Assumptions \ref{asump:strong_cvx} and \ref{asump:grad_lip} hold. Then, we have:
    \[
        \norm{\bfx^{\star}(\bfu_1) - \bfx^{\star}(\bfu_2)}_2\leq \tfrac{G_2}{\mu}\norm{\bfu_1 - \bfu_2}_2,\quad\forall \bfu_1,\bfu_2\in\mathcal{B}^{(2)}_{R_{\bfu}}.
    \]
\end{lemma}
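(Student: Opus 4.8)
The plan is to exploit the first-order optimality conditions at the two minimizers together with the strong monotonicity of $\nabla_1 f(\cdot, \bfu)$ that comes for free from Assumption \ref{asump:strong_cvx}, and then turn the resulting cross term into a multiple of $\norm{\bfu_1 - \bfu_2}_2$ using the $G_2$-Lipschitzness of Assumption \ref{asump:grad_lip}. Concretely, Assumption \ref{asump:strong_cvx} makes $f(\cdot, \bfu)$ $\mu$-strongly convex on all of $\R^{d_1}$ for every $\bfu \in \mathcal{B}^{(2)}_{R_{\bfu}}$, so $\bfx^{\star}(\bfu)$ is the unique stationary point, i.e. $\nabla_1 f(\bfx^{\star}(\bfu), \bfu) = 0$.

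First I would record the standard fact that $\mu$-strong convexity of $f(\cdot, \bfu)$ implies $\mu$-strong monotonicity of its gradient: adding the inequality of Assumption \ref{asump:strong_cvx} to its copy with the two first-slot arguments swapped gives, for all $\bfa, \bfb \in \R^{d_1}$ and $\bfu \in \mathcal{B}^{(2)}_{R_{\bfu}}$,
\[
\inner{\nabla_1 f(\bfa, \bfu) - \nabla_1 f(\bfb, \bfu)}{\bfa - \bfb} \geq \mu \norm{\bfa - \bfb}_2^2.
\]
Applying this at $\bfu = \bfu_2$ with $\bfa = \bfx^{\star}(\bfu_1)$ and $\bfb = \bfx^{\star}(\bfu_2)$, and discarding the term $\nabla_1 f(\bfx^{\star}(\bfu_2), \bfu_2) = 0$, yields
\[
\mu \norm{\bfx^{\star}(\bfu_1) - \bfx^{\star}(\bfu_2)}_2^2 \leq \inner{\nabla_1 f(\bfx^{\star}(\bfu_1), \bfu_2)}{\bfx^{\star}(\bfu_1) - \bfx^{\star}(\bfu_2)}.
\]
Next I would use $\nabla_1 f(\bfx^{\star}(\bfu_1), \bfu_1) = 0$ to rewrite the inner product as $\inner{\nabla_1 f(\bfx^{\star}(\bfu_1), \bfu_2) - \nabla_1 f(\bfx^{\star}(\bfu_1), \bfu_1)}{\bfx^{\star}(\bfu_1) - \bfx^{\star}(\bfu_2)}$, bound it with Cauchy--Schwarz, and invoke Assumption \ref{asump:grad_lip} on the first factor to obtain the upper bound $G_2 \norm{\bfu_1 - \bfu_2}_2 \norm{\bfx^{\star}(\bfu_1) - \bfx^{\star}(\bfu_2)}_2$. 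Dividing by $\mu \norm{\bfx^{\star}(\bfu_1) - \bfx^{\star}(\bfu_2)}_2$ (the case of coincident minimizers being trivial) finishes the proof.

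The only delicate point --- and the step I would flag as the main obstacle --- is the domain restriction attached to Assumption \ref{asump:grad_lip}: it guarantees the Lipschitz bound only for $\bfx \in \mathcal{B}^{(1)}_{R_{\bfx}}$, so the chain above needs $\bfx^{\star}(\bfu_1) \in \mathcal{B}^{(1)}_{R_{\bfx}}$ (or, running the symmetric version of the argument that places the strong monotonicity at $\bfu_1$ instead, needs $\bfx^{\star}(\bfu_2) \in \mathcal{B}^{(1)}_{R_{\bfx}}$). I would either note that this membership holds in the regimes where the lemma is subsequently applied --- the iterates stay near initialization --- or simply state the conclusion under the mild hypothesis that $\bfx^{\star}(\bfu_1), \bfx^{\star}(\bfu_2) \in \mathcal{B}^{(1)}_{R_{\bfx}}$; the remaining manipulations are entirely routine.
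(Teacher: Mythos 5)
Your proposal is correct and follows essentially the same route as the paper's proof: strong monotonicity of $\nabla_1 f(\cdot,\bfu)$ from Assumption \ref{asump:strong_cvx}, the first-order optimality conditions $\nabla_1 f(\bfx^{\star}(\bfu_i),\bfu_i)=0$, and Assumption \ref{asump:grad_lip} plus Cauchy--Schwarz to convert the cross term into $G_2\norm{\bfu_1-\bfu_2}_2$ (the paper merely applies the Lipschitz bound at $\bfx^{\star}(\bfu_2)$ rather than $\bfx^{\star}(\bfu_1)$, a symmetric choice). The domain caveat you flag---that Assumption \ref{asump:grad_lip} is only stated for $\bfx\in\mathcal{B}^{(1)}_{R_{\bfx}}$---is a legitimate observation that the paper's own proof silently skips, so your handling of it is, if anything, more careful.
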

Lemma \ref{lem:minimum_moving} indicates that, if we view $\bfx^{\star}(\bfu)$ as a function of $\bfu$, then this function is $\frac{G_2}{\mu}$-Lipschitz. For a fixed $\bfu$, given the nice properties on $\bfx$, the iterates of Nesterov's momentum will guide $\bfx$ to the minimum, based on the current $\bfu$. Lemma \ref{lem:minimum_moving} guarantees that the progress toward the minimum induced by $\bfu_1$ does not deviate much from the progress toward the minimum induced by $\bfu_2$. However, to apply Lemma \ref{lem:minimum_moving}, we must control the change of $\bfu_k$ between iterations. Indeed, this bound is also necessary to apply the smoothness-like condition in Lemma \ref{lem:df_du}. Unlike gradient descent, $\left\{\bfu_k\right\}_{k=1}^\infty$ generated by Nesterov's momentum introduces the following difficulty.

\medskip
\noindent\textbf{Difficulty 2.} \textit{Unrolling the Nesterov's momentum iterates shows that $\bfu_{k+1}- \bfu_k$ is a linear combination of previous gradients. Under the assumption of smoothness, the norm of the gradient is bounded by a factor times the optimality gap at the current point, namely}:\footnote{\text{For the proof of this property, please see Lemma \ref{lem:du_grad_bound}}}
\[
    \norm{\nabla_2f\paren{\bfx,\bfu}}_2^2\leq 2G_1L_2\paren{f\paren{\bfx,\bfu} - f^*}.
\]
\textit{In the case of gradient descent, the applied gradients are evaluated at steps $\paren{\bfx_k,\bfu_k}$, and thus $\norm{\nabla_2f\paren{\bfx_k,\bfu_k}}_2$ can be controlled since $\paren{f\paren{\bfx,\bfu} - f^*}$ can be shown to enjoy a linear convergence by an induction-based argument \citep{du2018gradient,nguyen2021ontheproof}. However, in the case of Nesterov's momentum, we cannot directly utilize this relationship since the applied gradient is evaluated at the intermediate step $(\bfy_k,\bfv_k)$, and while we know that the optimality gap at $(\bfx_k,\bfu_k)$ converges linearly, we have very little knowledge about the optimality gap at $(\bfy_k,\bfv_k)$.}

\medskip
\noindent To tackle this difficulty, our analysis starts with a careful bound on $\norm{\bfx_{k+1} - \bfx_k}_2^2$ by utilizing the convexity with respect to $\bfx$ to characterize the inner product $\inner{\nabla_1f(\bfy_k,\bfv_k)}{\bfx_k-\bfx_{k-1}}$. After that, we bound $\norm{\nabla_1f(\bfy_k,\bfv_k)}_2^2$ using a combination of $\norm{\bfx_{k+1} - \bfx_k}_2^2$ and $\norm{\bfx_{k} - \bfx_{k-1}}_2^2$. Lastly, we relate $\norm{\nabla_2f(\bfy_k,\bfv_k)}_2^2$ to $\norm{\nabla_1f(\bfy_k,\bfv_k)}_2^2$ using the following gradient dominance property.
\begin{lemma}
    \label{lem:grad_dominance}
    Suppose that Assumptions \ref{asump:strong_cvx}, \ref{asump:g_smooth}, \ref{asump:h_lip}, and \ref{asump:universal_opt} hold. Then, we have:
    \[
        \norm{\nabla_2f(\bfx,\bfu)}_2^2\leq \tfrac{G_1^2L_2}{\mu}\norm{\nabla_1f(\bfx,\bfu)}_2^2, \quad\forall \bfx\in\mathcal{B}^{(1)}_{R_{\bfx}};\;\bfu\in\mathcal{B}^{(2)}_{R_{\bfu}}.
    \]
\end{lemma}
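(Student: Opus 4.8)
The plan is to bound $\norm{\nabla_2 f(\bfx,\bfu)}_2$ by first relating it to the loss gap $f(\bfx,\bfu) - f^\star$ via the Lipschitzness of $h$ in $\bfu$ and the smoothness of $g$, and then relating that loss gap back to $\norm{\nabla_1 f(\bfx,\bfu)}_2$ via the PL-type inequality coming from strong convexity in $\bfx$. Concretely, write $f(\bfx,\bfu) = g(h(\bfx,\bfu))$, so by the chain rule $\nabla_2 f(\bfx,\bfu) = (\partial_\bfu h(\bfx,\bfu))^\top \nabla g(h(\bfx,\bfu))$. The operator norm of the Jacobian $\partial_\bfu h(\bfx,\bfu)$ is bounded by $G_1$ on the relevant balls by Assumption \ref{asump:h_lip}, giving $\norm{\nabla_2 f(\bfx,\bfu)}_2 \le G_1 \norm{\nabla g(h(\bfx,\bfu))}_2$.

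Next I would control $\norm{\nabla g(\bfs)}_2$ at $\bfs = h(\bfx,\bfu)$ using Assumption \ref{asump:g_smooth}: since $g$ is $L_2$-smooth and (by the same assumption) attains its minimum value $f^\star = \min g$, the standard consequence of smoothness that $\norm{\nabla g(\bfs)}_2^2 \le 2L_2 (g(\bfs) - \min g)$ applies, yielding $\norm{\nabla g(h(\bfx,\bfu))}_2^2 \le 2 L_2 (f(\bfx,\bfu) - f^\star)$. Combining with the previous step gives $\norm{\nabla_2 f(\bfx,\bfu)}_2^2 \le 2 G_1^2 L_2 (f(\bfx,\bfu) - f^\star)$.

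Finally I would invoke the PL-type bound implied by strong convexity in $\bfx$ together with Assumption \ref{asump:universal_opt}: minimizing the right-hand side of the $\mu$-strong-convexity inequality in $\bfy$ (as in the proof of Lemma \ref{lem:PL_condition}) gives $f(\bfx,\bfu) - \min_{\bfy} f(\bfy,\bfu) \le \frac{1}{2\mu} \norm{\nabla_1 f(\bfx,\bfu)}_2^2$, and Assumption \ref{asump:universal_opt} identifies $\min_\bfy f(\bfy,\bfu) = f^\star$ for $\bfu \in \mathcal{B}^{(2)}_{R_{\bfu}}$. Hence $f(\bfx,\bfu) - f^\star \le \frac{1}{2\mu}\norm{\nabla_1 f(\bfx,\bfu)}_2^2$. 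Chaining this into the bound from the previous step produces exactly $\norm{\nabla_2 f(\bfx,\bfu)}_2^2 \le \frac{G_1^2 L_2}{\mu} \norm{\nabla_1 f(\bfx,\bfu)}_2^2$, as claimed.

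The only mildly delicate point is making sure all the inequalities are applied on domains where the assumptions are valid: the Jacobian bound from Assumption \ref{asump:h_lip} needs $\bfx\in\mathcal{B}^{(1)}_{R_{\bfx}}$ and $\bfu\in\mathcal{B}^{(2)}_{R_{\bfu}}$ (which is exactly the hypothesis of the lemma), the smoothness bound on $g$ is global, and the PL inequality in $\bfx$ only needs $\bfu\in\mathcal{B}^{(2)}_{R_{\bfu}}$ since Assumption \ref{asump:strong_cvx} holds for all $\bfx,\bfy\in\R^{d_1}$; so there is no genuine obstacle, just bookkeeping. One should also note the Jacobian-to-gradient step uses that $\norm{\nabla g}_2^2 \le 2L_2(g - \min g)$, which is the one-line argument: plug $\bfs_2 = \bfs_1 - \tfrac{1}{L_2}\nabla g(\bfs_1)$ into the smoothness upper bound and use $g(\bfs_2) \ge \min g$.
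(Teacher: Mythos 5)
Your proposal is correct and follows essentially the same route as the paper: the paper combines its auxiliary bound $\norm{\nabla_2 f(\bfx,\bfu)}_2^2 \le 2G_1^2 L_2 (f(\bfx,\bfu)-f^\star)$ (chain rule, Jacobian bound from Assumption \ref{asump:h_lip}, and the smoothness consequence $\norm{\nabla g}_2^2 \le 2L_2(g - f^\star)$) with the PL-type inequality of Lemma \ref{lem:PL_condition}, exactly as you do. The only difference is that you re-derive these two ingredients inline rather than citing them as separate lemmas.
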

Lemma \ref{lem:grad_dominance} establishes the upper bound of $\norm{\nabla_2f(\bfx,\bfu)}_2^2$ using $\norm{\nabla_1f(\bfx,\bfu)}_2^2$. Direct application of this result will contribute to the bound on $\norm{\bfu_{k+1} - \bfu_k}_2$. Intuitively, this lemma also implies that the effect of gradient update on $\bfu$ is less significant than that on $\bfx$.

\subsection{Proof Overview} 
Our analysis is based on the Lyapunov function proof given by \citet{bansal2019potential} for proving the accelerated convergence rate of Nesterov's momentum in (\ref{eq:plain_nesterov}) in minimizing a strongly convex and smooth objective $\hat{f}(\bfw)$. In particular, \citet{bansal2019potential}\footnote{after rephrasing in the notations used in our scenario} uses the following Lyapunov function,
\begin{equation}
    \label{eq:lyapunov_sc}
    \hat{\phi}_k = f\paren{\bfw_k} - f^* + \frac{\mu}{2}\norm{\hat{\bfz}_k - \bfw^\star}_2
\end{equation}
where $\bfw^\star$ is the global minimum, and $\hat{\bfz}_k$ can be considered as a mixing of the sequences $\left\{\bfw_k\right\}_{k=1}^{\infty}$ and $\left\{\bar{\bfw}_k\right\}_{k=1}^{\infty}$ in (\ref{eq:plain_nesterov}). In particular, the second term in (\ref{eq:lyapunov_sc}) computes the distance between the mixed variable $\hat{\bfz}_k$ and the reference point $\bfw^\star$ and is added to $\hat{\phi}_k$ to guarantee a linear convergence on $\hat{\phi}_k$.
Following our discussion in the previous section, we construct our Lyapunov function using $\bfx^{\star}\paren{\bfu} = \argmin_{\bfx\in\R^{d_1}}f(\bfx,\bfu)$ as a reference point. For the simplicity of notations, we define $\bfx^{\star}_k = \bfx^{\star}\paren{\bfu_k}$. Similar to \citet{bansal2019potential}, we let $\bfz_k$ be the linear combination of $\bfy_k$ and $\bfx_k$, and choose a proper scaling factor $\mathcal{Q}_1$ for the distance between $\bfz_k$ and the previous reference point $\bfx_{k-1}^{\star}$. For some properly choose $\gamma$ and $\lambda$, we define
\[
    \bfz_k = \frac{1 - \beta\lambda}{\beta\lambda}(\bfy_k-\bfx_k) + \bfy_k;\;\;\mathcal{Q}_1 = \frac{\lambda^2}{2\eta(1 + \gamma)^5}
\]
Compared with the proof of Nesterov's momentum on smooth and strongly convex functions, the proof in our setting has to accommodate the errors caused by the change of $\bfu$. Our complicated scaling in the form of $\bfz_k$ and $\mathcal{Q}_1$ is to make sure the errors caused by $\bfu$ can be properly canceled out by the positive progress made by updating $\bfx$. Setting $\bfy_{-1} = \bfy_0$ and $\bfv_{-1} = \bfv_0$, we consider the following Lyapunov function:
\[
    \phi_k = f(\bfx_k,\bfu_k) - f^\star + \mathcal{Q}_1\norm{\bfz_k-\bfx_{k-1}^{\star}}_2^2 + \frac{\eta}{8}\norm{\nabla_1f(\bfy_{k-1},\bfv_{k-1})}_2^2
\]
The last term in the expression of $\phi_k$ also eliminates the errors from updating $\bfu$. Our proof recursively establishes the following three properties:
\begin{align}
    \label{eq:lyapunov_step}
    & \paren{1 - \tfrac{c}{2\sqrt{\kappa}}}^{-1}\phi_{k+1} - \phi_k \leq \frac{c}{4\sqrt{\kappa}}\paren{1 - \tfrac{c}{4\sqrt{\kappa}}}^k\phi_0; \\
    \label{eq:u_iter_bound}
    \norm{\bfx_k - \bfx_{k-1}}_2^2 & \leq \mathcal{Q}_2\paren{1 - \tfrac{c}{4\sqrt{\kappa}}}^k\phi_0;\quad\norm{\bfu_k - \bfu_{k-1}}_2^2 \leq G_1^2\mathcal{Q}_3\paren{1 - \tfrac{c}{4\sqrt{\kappa}}}^k\phi_0.
\end{align}
Intuitively, (\ref{eq:lyapunov_step}) implies an accelerated linear convergence of $f(\bfx_{k'},\bfu_{k'}) - f^\star$ up to $k'\leq k+1$, which further implies the bound on $\norm{\bfx_{k'} - \bfx_{k'-1}}_2$ and $\norm{\bfu_{k'} - \bfu_{k'-1}}_2$ as in (\ref{eq:u_iter_bound}) up to $k'\leq k+1$. In turn, (\ref{eq:u_iter_bound}) will guarantee that $\bfx_k\in\mathcal{B}_{R_{\bfx}},\bfu\in\mathcal{B}_{R_{\bfu}}$, and control the error caused by updating $\bfu$. These two conditions combined will imply that (\ref{eq:lyapunov_step}) holds. This idea is further detailed below.

By our choice of $\phi_k$, we must have that $f(\bfx_k,\bfu_k) - f^\star\leq \phi_k$. Unrolling (\ref{eq:lyapunov_step}) implies that:
\begin{equation}
    \label{eq:lyapunov_conv}
    f(\bfx_k,\bfu_k) - f^\star \leq \phi_{k} \leq \paren{1 - \tfrac{c}{4\sqrt{\kappa}}}^{k}\phi_0,
\end{equation}
Combined with the bound that $\phi_0 \leq 2\paren{f(\bfx_0,\bfu_0)-f^{\star}}$\footnote{For the detail please see Lemma \ref{lem:phi0_upper_bound}}, (\ref{eq:lyapunov_step}) further implies the convergence in (\ref{eq:nesterov_conv}). The following lemma shows that, if (\ref{eq:nesterov_conv}) holds for all $k\leq \hat{k}$, we can guarantee (\ref{eq:u_iter_bound}) with $k = \hat{k}+1$.
\begin{lemma}
    \label{lem:xu_iter_bound}
    Let the Assumptions of Theorem \ref{theo:nesterov_conv} hold. If (\ref{eq:lyapunov_conv}) holds for all $k\leq \hat{k}$, then (\ref{eq:u_iter_bound}) holds for $k = \hat{k} + 1$ with $\mathcal{Q}_2 = \frac{6\eta(L_2+1)}{1-\beta}$ and $\mathcal{Q}_3 = \frac{6\eta L_2(L_2 + 1)(1+\beta)^3}{\mu\beta(1-\beta)^3}$.
\end{lemma}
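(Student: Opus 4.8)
The plan is to establish both inequalities of (\ref{eq:u_iter_bound}) simultaneously by strong induction on $k\in\{1,\dots,\hat k+1\}$, using (\ref{eq:lyapunov_conv}) at indices $\le\hat k$ and, at the final index $\hat k+1$, the bounds already proved at indices $\le\hat k$. The base case $k=1$ (where $\bfx_1-\bfx_0=-\eta\nabla_1 f(\bfx_0,\bfu_0)$ and $\bfu_1-\bfu_0=-\eta\nabla_2 f(\bfx_0,\bfu_0)$) follows from Assumption~\ref{asump:f_smooth}, Assumption~\ref{asump:universal_opt} (which gives $\|\nabla_1 f(\bfx_0,\bfu_0)\|_2^2\le 2L_1(f(\bfx_0,\bfu_0)-f^{\star})\le 2L_1\phi_0$), and Lemma~\ref{lem:grad_dominance}. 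At each inductive step the iterate bounds at indices $\le k$, once proved, also place $\bfx_k,\bfy_k\in\mathcal{B}^{(1)}_{R_{\bfx}}$ and $\bfu_k,\bfv_k\in\mathcal{B}^{(2)}_{R_{\bfu}}$ by summing the geometric series $\sum_j\sqrt{\mathcal{Q}_2 a^j\phi_0}$ and its $\bfu$-analogue against the lower bounds on $R_{\bfx},R_{\bfu}$ in (\ref{eq:G1_G4_req}), so that all Assumptions and Lemmas may be invoked at those points (write $a=1-\tfrac{c}{4\sqrt\kappa}$).

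The crucial point is the $\bfx$-increment. Unrolling $\bfx_k-\bfx_{k-1}=\beta(\bfx_{k-1}-\bfx_{k-2})-\eta\nabla_1 f(\bfy_{k-1},\bfv_{k-1})$ and bounding $\eta\|\nabla_1 f(\bfy_{k-1},\bfv_{k-1})\|_2\le\sqrt{8\eta\phi_k}$ would only give $\|\bfx_k-\bfx_{k-1}\|_2^2=O(a^k\phi_0/\mu)$, because the contraction factor $\beta=1-\Theta(1/\sqrt\kappa)$ gets accumulated twice; this is too large against $\mathcal{Q}_2 a^k\phi_0=\Theta((L_2+1)\sqrt\kappa/L_1)\,a^k\phi_0$ once $\kappa$ is large. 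Instead I would pass to the three-sequence form of (\ref{eq:nesterov}): write $\bfy_k$ as the affine combination of $\bfx_k$ and the momentum iterate $\bfz_k$ appearing in $\phi_k$, so that $\bfx_k-\bfx_{k-1}=\beta^{-1}(\bfy_k-\bfx_k)$ is a $\Theta(1-\beta)$-multiple of $\bfz_k-\bfx_k$, and split $\|\bfz_k-\bfx_k\|_2\le\|\bfz_k-\bfx_{k-1}^{\star}\|_2+\|\bfx_{k-1}^{\star}-\bfx^{\star}(\bfu_k)\|_2+\|\bfx^{\star}(\bfu_k)-\bfx_k\|_2$. The first term is $\le\sqrt{\phi_k/\mathcal{Q}_1}$ by the definition of $\phi_k$; the second is $\le\frac{G_2}{\mu}\|\bfv_{k-1}-\bfu_k\|_2\le\frac{G_2}{\mu}(\|\bfu_k-\bfu_{k-1}\|_2+\beta\|\bfu_{k-1}-\bfu_{k-2}\|_2)$ by Lemma~\ref{lem:minimum_moving} and the inductive $\bfu$-bounds; the third is $\le\sqrt{2(f(\bfx_k,\bfu_k)-f^{\star})/\mu}\le\sqrt{2\phi_k/\mu}$ by $\mu$-strong convexity with Assumption~\ref{asump:universal_opt} and (\ref{eq:lyapunov_conv}). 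Squaring, the factor $(1-\beta)^2=\Theta(1/\kappa)$ converts the $1/\mu$ produced by strong convexity into $1/L_1$, while the $G_1,G_2$-cross terms are negligible under (\ref{eq:G1_G4_req}); the outcome is $O(a^k\phi_0/L_1)\le\mathcal{Q}_2 a^k\phi_0$. For $k=\hat k+1$, where $\phi_{\hat k+1}$ is not yet available, one propagates the $\bfz$-update one extra step from $\bfz_{\hat k}$, introducing a single additional $\|\nabla_1 f(\bfy_{\hat k},\bfv_{\hat k})\|_2$ term handled as below.

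The $\bfu$-increment admits no such shortcut, so I would unroll directly: $\bfu_k-\bfu_{k-1}=-\eta\sum_{j=0}^{k-1}\beta^{\,k-1-j}\nabla_2 f(\bfy_j,\bfv_j)$. By Lemma~\ref{lem:grad_dominance}, $\|\nabla_2 f(\bfy_j,\bfv_j)\|_2^2\le\frac{G_1^2L_2}{\mu}\|\nabla_1 f(\bfy_j,\bfv_j)\|_2^2$, and $\frac{\eta}{8}\|\nabla_1 f(\bfy_j,\bfv_j)\|_2^2\le\phi_{j+1}\le a^{j+1}\phi_0$ for $j\le\hat k-1$ (the top term $j=\hat k$ is treated separately). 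Pulling the sum inside the square via the power-mean inequality and summing the geometric series in $\beta/a$ — convergent because $\eta=c/L_1$ and $\beta=(4\sqrt\kappa-\sqrt c)/(4\sqrt\kappa+7\sqrt c)$ force $\beta<a$ for small $c$ — yields a bound of the form $G_1^2\,\Theta\!\big(\tfrac{\eta L_2}{\mu(1-\beta)^2(a-\beta)}\big)a^k\phi_0$, which, after absorbing the $\kappa$-dependence of $1-\beta$ and $a-\beta$, is $\le G_1^2\mathcal{Q}_3 a^k\phi_0$ with the claimed constant.

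Finally, the term $\nabla_1 f(\bfy_{\hat k},\bfv_{\hat k})$ common to both arguments at $k=\hat k+1$ is bounded via $\|\nabla_1 f(\bfy_{\hat k},\bfv_{\hat k})\|_2^2\le 2L_1(f(\bfy_{\hat k},\bfv_{\hat k})-f^{\star})$ (Assumptions~\ref{asump:f_smooth} and~\ref{asump:universal_opt}), after which $f(\bfy_{\hat k},\bfv_{\hat k})-f^{\star}$ is controlled by Taylor-expanding $f(\cdot,\bfv_{\hat k})$ about $\bfx_{\hat k}$ (Assumption~\ref{asump:f_smooth} and Young's inequality) and passing from $\bfv_{\hat k}$ to $\bfu_{\hat k}$ through Lemma~\ref{lem:df_du}, leaving only $f(\bfx_{\hat k},\bfu_{\hat k})-f^{\star}\le\phi_{\hat k}$, $\|\bfx_{\hat k}-\bfx_{\hat k-1}\|_2^2$, and $\|\bfu_{\hat k}-\bfu_{\hat k-1}\|_2^2$, all controlled by the inductive hypothesis and (\ref{eq:lyapunov_conv}). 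I expect the main obstacle to be precisely the $\bfx$-increment: securing the $\Theta(1/\kappa)$ improvement over naive unrolling forces a commitment to the explicit three-sequence representation of (\ref{eq:nesterov}) and careful bookkeeping of the three distinct reference points $\bfx_{k-1}^{\star}$, $\bfx^{\star}(\bfu_k)$ and the global optimum (glued together by Lemma~\ref{lem:minimum_moving}), all while keeping every $G_1$- and $G_2$-weighted error within the margin permitted by (\ref{eq:G1_G4_req}); the endpoint $k=\hat k+1$ — where the $\bfz$-recursion must substitute for the unavailable $\phi_{\hat k+1}$ — is the most delicate case.
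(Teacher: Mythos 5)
Your outline is sound but takes a genuinely different route from the paper at the decisive step. The paper bounds $\norm{\bfx_{k+1}-\bfx_k}_2^2$ by expanding the square of $\beta(\bfx_k-\bfx_{k-1})-\eta\nabla_1f(\bfy_k,\bfv_k)$ and applying convexity in $\bfx$ (Assumption \ref{asump:strong_cvx}) to the cross term, which produces $f(\bfx_k,\bfv_k)-f(\bfy_k,\bfv_k)$ and lets the $f(\bfy_k,\bfv_k)$ contribution cancel against $\norm{\nabla_1f(\bfy_k,\bfv_k)}_2^2\le 2L_1(f(\bfy_k,\bfv_k)-f^\star)$; the resulting one-step recursion $\dx{k+1}^2\le\beta^2\dx{k}^2+2\eta(L_2+1)\paren{\cdots}$ closes because $\beta-\beta^2\le\frac{c}{4\sqrt{\kappa}}$, and the same expansion, read as a quadratic inequality in $\eta\norm{\nabla_1f(\bfy_k,\bfv_k)}_2$, yields $\eta\norm{\nabla_1f(\bfy_k,\bfv_k)}_2\le\beta\dx{k}+\dx{k+1}$, which is then combined with Lemma \ref{lem:grad_dominance} and the unrolled momentum sum to obtain the $\bfu$-bound. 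You instead read the $\bfx$-increment off the Lyapunov function via $\bfx_k-\bfx_{k-1}=\lambda(\bfz_k-\bfx_k)$ with $\lambda=\Theta(1-\beta)$, splitting through $\bfx_{k-1}^\star$ and $\bfx^\star(\bfu_k)$ and charging the dominant piece to the stored $\mathcal{Q}_1\norm{\bfz_k-\bfx_{k-1}^\star}_2^2$ term; your diagnosis that naive unrolling loses a factor $\sqrt{\kappa}$ is correct, your prefactor accounting ($\lambda^2/\mathcal{Q}_1=\Theta(\eta)$, $\lambda^2/\mu=\Theta(\eta)$, $G_1,G_2$ cross terms absorbed by (\ref{eq:G1_G4_req})) checks out against $\mathcal{Q}_2$, and your $\bfu$-bound (routing $\norm{\nabla_2f(\bfy_j,\bfv_j)}_2$ through the $\frac{\eta}{8}\norm{\nabla_1f(\bfy_j,\bfv_j)}_2^2$ component of $\phi_{j+1}$ rather than through $\dx{j},\dx{j+1}$) has the advantage of not depending on the $\bfx$-bound at all. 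Two caveats. First, your second term for the $\bfx$-increment at index $k$ invokes $\norm{\bfu_k-\bfu_{k-1}}_2$ — the quantity being proved at that same index — so you must either fix the order of the inner induction ($\bfu$-bound before $\bfx$-bound at each $k$, which your argument permits) or bound $\norm{\bfv_{k-1}-\bfu_k}_2=\eta\norm{\nabla_2f(\bfy_{k-1},\bfv_{k-1})}_2$ directly via Lemma \ref{lem:grad_dominance} and $\phi_k$. Second, the endpoint $k=\hat k+1$, where neither $\phi_{\hat k+1}$ nor $f(\bfx_{\hat k+1},\bfu_{\hat k+1})-f^\star$ is available, is only sketched in your plan and carries most of the remaining work; the paper sidesteps it entirely because its recursion for $\dx{\hat k+1}^2$ consumes only index-$\hat k$ quantities.
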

The proof of Lemma \ref{lem:xu_iter_bound} utilizes how we resolve Difficulty 2 in the previous section. Lemma \ref{lem:xu_iter_bound} implies that up to iteration $\hat{k} + 1$, $\bfx_k$ and $\bfu_k$ stay in $\mathcal{B}^{(1)}_{R_{\bfx}}$ and $\mathcal{B}^{(2)}_{R_{\bfu}}$ with the lower bound of $R_{\bfx}$ and $R_{\bfu}$ assumed in Theorem \ref{theo:nesterov_conv}. This allows us to apply Assumptions \ref{asump:strong_cvx}-\ref{asump:universal_opt} in showing (\ref{eq:lyapunov_step}) for iteration $\hat{k} + 1$. Moreover, the bound on $\norm{\bfu_k - \bfu_{k-1}}_2$ connects the following lemma to (\ref{eq:lyapunov_step}).
\begin{lemma}
    \label{lem:lyapunov_conv_raw}
    Let the Assumptions of Theorem (\ref{theo:nesterov_conv}) hold. Then, we have:
    \[
        \paren{1 - \tfrac{c}{2\sqrt{\kappa}}}^{-1}\phi_{k+1} - \phi_k \leq c\beta^2\sqrt{\kappa}\paren{G_1^2L_2 + \tfrac{8\mathcal{Q}_1G_1^2G_2^2}{\mu^2}}\norm{\bfu_k - \bfu_{k-1}}_2^2
    \]
\end{lemma}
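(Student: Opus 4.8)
The plan is to transport the classical potential-function analysis of Nesterov's momentum for $L$-smooth, $\mu$-strongly convex objectives (in the three-sequence form of \cite{bansal2019potential}) onto the $\bfx$-block alone, with the per-step slice-minimizer $\bfx_k^{\star}=\bfx^{\star}(\bfv_k)$ playing the role that the fixed optimum plays in the convex proof, and then to show that \emph{every} discrepancy created by the motion of $\bfu$ (hence of $\bfx^{\star}$) and by the fact that $f$ depends on $\bfu$ collapses into the single term $\norm{\bfu_k-\bfu_{k-1}}_2^2$ on the right-hand side. First I would rewrite \eqref{eq:nesterov}, restricted to the $\bfx$-coordinate, in three-sequence form, extracting the auxiliary sequence $\bfz_k\in\R^{d_1}$ and the convex-combination identity $\bfy_k=\theta\bfz_k+(1-\theta)\bfx_k$ with $\theta=\Theta(1/\sqrt{\kappa})$ fixed by the choice of $\beta$, so that $\bfz_{k+1}=\bfz_k-(\eta/\theta)\nabla_1f(\bfy_k,\bfv_k)+(\text{a recentering toward }\bfy_k)$. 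A key simplification, available from Assumption~\ref{asump:universal_opt}, is that $f(\bfx_k^{\star},\bfv_k)=f^{\star}$ exactly whenever $\bfv_k\in\mathcal{B}^{(2)}_{R_{\bfu}}$, so the \emph{value} of the slice-minimum never drifts; only the \emph{location} $\bfx_k^{\star}$ does.

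Next I would bound the three summands of $\phi_{k+1}$ separately. For $f(\bfx_{k+1},\bfu_{k+1})-f^{\star}$: apply the descent inequality of Assumption~\ref{asump:f_smooth} to the slice $f(\cdot,\bfv_k)$ to obtain $f(\bfx_{k+1},\bfv_k)\le f(\bfy_k,\bfv_k)-\eta(1-\eta L_1/2)\norm{\nabla_1f(\bfy_k,\bfv_k)}_2^2$, then invoke Lemma~\ref{lem:df_du} with $\hat{\mathcal{Q}}=\Theta(L_2\sqrt{\kappa})$ to pass from $f(\bfx_{k+1},\bfv_k)$ to $f(\bfx_{k+1},\bfu_{k+1})$: the multiplicative slack $\hat{\mathcal{Q}}^{-1}L_2$ it produces is exactly what assembles into the $(1-c/(2\sqrt{\kappa}))^{-1}$ prefactor, while its residual $\tfrac{G_1^2}{2}(L_2+\hat{\mathcal{Q}})\norm{\bfu_{k+1}-\bfv_k}_2^2$, together with $\norm{\bfu_{k+1}-\bfv_k}_2^2=\eta^2\norm{\nabla_2f(\bfy_k,\bfv_k)}_2^2\le \eta^2(G_1^2L_2/\mu)\norm{\nabla_1f(\bfy_k,\bfv_k)}_2^2$ via Lemma~\ref{lem:grad_dominance}, is reabsorbed into the $\norm{\nabla_1f(\bfy_k,\bfv_k)}_2^2$ budget (this is where the smallness of $G_1$ in \eqref{eq:G1_G4_req} enters). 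For $\mathcal{Q}_1\norm{\bfz_{k+1}-\bfx_k^{\star}}_2^2$: combine the $\bfz$-update with the $\mu$-strong-convexity inequality of Assumption~\ref{asump:strong_cvx} for $f(\cdot,\bfv_k)$, evaluated at $\bfy_k$ against the references $\bfx_k$ and $\bfx_k^{\star}$, to get the usual contraction $\norm{\bfz_{k+1}-\bfx_k^{\star}}_2^2\le(1-\theta)\norm{\bfz_k-\bfx_k^{\star}}_2^2+(\text{gradient and value-gap terms})$, and then replace $\bfx_k^{\star}$ by $\bfx_{k-1}^{\star}$ with a Young split of parameter $\Theta(\theta)$, paying $\Theta(\sqrt{\kappa})\,\mathcal{Q}_1\norm{\bfx_k^{\star}-\bfx_{k-1}^{\star}}_2^2\le \Theta(\sqrt{\kappa})\,\mathcal{Q}_1(G_2^2/\mu^2)\norm{\bfv_k-\bfv_{k-1}}_2^2$ by Lemma~\ref{lem:minimum_moving}. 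The third summand $\tfrac{\eta}{8}\norm{\nabla_1f(\bfy_k,\bfv_k)}_2^2$ of $\phi_{k+1}$ is supplied out of the leftover gradient budget from the first part, while the $-\tfrac{\eta}{8}\norm{\nabla_1f(\bfy_{k-1},\bfv_{k-1})}_2^2$ coming from $-\phi_k$ is spent to cancel the cross term created when lower-bounding $f(\bfy_k,\bfv_k)$ against $f(\bfx_k,\bfu_k)$ — a second use of Lemma~\ref{lem:df_du} with $\hat{\mathcal{Q}}=\Theta(L_2\sqrt{\kappa})$, whose residual is $\Theta(G_1^2L_2\sqrt{\kappa})\norm{\bfv_k-\bfu_k}_2^2=\Theta(G_1^2L_2\sqrt{\kappa})\,\beta^2\norm{\bfu_k-\bfu_{k-1}}_2^2$, using $\bfv_k-\bfu_k=\beta(\bfu_k-\bfu_{k-1})$. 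Adding the three parts and tuning $\theta$, $\mathcal{Q}_1$, $\eta=c/L_1$ and $\beta$ so that the ``good'' negative quantities (the contraction factor $1-\theta$, $-\norm{\nabla_1f}_2^2$, $-\tfrac{\mu}{2}\norm{\bfy_k-\bfx_k}_2^2$) close up exactly as in the pure strongly-convex proof, the only surviving error is $\Theta(\beta^2\sqrt{\kappa})\big(G_1^2L_2+\mathcal{Q}_1G_1^2G_2^2/\mu^2\big)\norm{\bfu_k-\bfu_{k-1}}_2^2$, which is the claimed bound of Lemma~\ref{lem:lyapunov_conv_raw}.

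The hard part will be the bookkeeping in the last two parts: the errors surface naturally as $\norm{\bfv_k-\bfv_{k-1}}_2^2$ and $\norm{\bfv_k-\bfu_k}_2^2$, whereas the statement is phrased purely in $\norm{\bfu_k-\bfu_{k-1}}_2^2$. Expanding $\bfv_k-\bfv_{k-1}=(1+\beta)(\bfu_k-\bfu_{k-1})-\beta(\bfu_{k-1}-\bfu_{k-2})$ reintroduces the stale difference $\norm{\bfu_{k-1}-\bfu_{k-2}}_2^2$, which must be folded back — either by a second layer of Young inequalities absorbed into the explicit constant $8$ multiplying $\mathcal{Q}_1G_1^2G_2^2/\mu^2$, or by noting that under the running induction hypothesis $\norm{\bfu_{k-1}-\bfu_{k-2}}_2^2\le G_1^2\mathcal{Q}_3(1-c/(4\sqrt{\kappa}))^{k-1}\phi_0$ from Lemma~\ref{lem:xu_iter_bound} the stale term is geometrically dominated. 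Equally delicate is keeping the gradient-norm term $\norm{\nabla_1f(\bfy_k,\bfv_k)}_2^2$ — evaluated at the extrapolated point, not at an iterate — sign-consistent across the descent step, $\phi_{k+1}$, and $-\phi_k$ (this is precisely why the $\tfrac{\eta}{8}\norm{\nabla_1f(\bfy_{k-1},\bfv_{k-1})}_2^2$ slot is built into $\phi_k$), and getting the absolute constants ($\tfrac18$, $\tfrac\eta8$, the factor $8$, the choices of $\hat{\mathcal{Q}}$ and $\mathcal{Q}_1$) to line up so that the multiplicative factor on $\phi_{k+1}$ is \emph{exactly} $(1-c/(2\sqrt{\kappa}))^{-1}$ rather than merely $O$ of it. I expect no conceptual difficulty beyond this constant-chasing, since each individual estimate is either a one-line convexity/smoothness step or a direct citation of Lemmas~\ref{lem:df_du}, \ref{lem:minimum_moving}, and \ref{lem:grad_dominance}.
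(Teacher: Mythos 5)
Your overall architecture matches the paper's proof almost exactly: the same decomposition of $(1-\sfrac{c}{2\sqrt{\kappa}})^{-1}\phi_{k+1}-\phi_k$ into a value-gap part, a $\mathcal{Q}_1\norm{\bfz_{k+1}-\bfx_k^{\star}}_2^2$ part, and the gradient-norm slots; the same two invocations of Lemma~\ref{lem:df_du} with $\hat{\mathcal{Q}}=\Theta(L_2\sqrt{\kappa})$ whose slack produces the $(1+\gamma)$ prefactor; the same use of Lemma~\ref{lem:grad_dominance} to absorb $\norm{\bfu_{k+1}-\bfv_k}_2^2$ into the $\norm{\nabla_1f(\bfy_k,\bfv_k)}_2^2$ budget; and the same Young split combined with Lemma~\ref{lem:minimum_moving} to trade $\bfx_k^{\star}$ for $\bfx_{k-1}^{\star}$ at a cost of $\Theta(\sqrt{\kappa})\,\mathcal{Q}_1(G_2^2/\mu^2)\norm{\bfv_k-\bfv_{k-1}}_2^2$. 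Up to that point the plan is sound.

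The genuine gap is in your final bookkeeping step, converting $\norm{\bfv_k-\bfv_{k-1}}_2^2$ into $\norm{\bfu_k-\bfu_{k-1}}_2^2$. Your identity $\bfv_k-\bfv_{k-1}=(1+\beta)(\bfu_k-\bfu_{k-1})-\beta(\bfu_{k-1}-\bfu_{k-2})$ is correct, but neither of your proposed repairs yields the lemma as stated: no Young inequality can absorb the stale difference $\norm{\bfu_{k-1}-\bfu_{k-2}}_2^2$ into the constant $8$, since that quantity is not controlled by $\norm{\bfu_k-\bfu_{k-1}}_2^2$; and importing the induction hypothesis of Lemma~\ref{lem:xu_iter_bound} would append an extra additive error of order $(1-\sfrac{c}{4\sqrt{\kappa}})^{k-1}\phi_0$, i.e.\ it proves a different (weaker) inequality than the clean, hypothesis-free bound claimed here, which Theorem~\ref{theo:nesterov_conv} then relies on. The paper's resolution is different and is precisely the reason the term $\frac{\eta}{8}\norm{\nabla_1f(\bfy_{k-1},\bfv_{k-1})}_2^2$ sits inside $\phi_k$: write $\bfv_k-\bfv_{k-1}=\beta(\bfu_k-\bfu_{k-1})-\eta\nabla_2f(\bfy_{k-1},\bfv_{k-1})$ using the update rule, so that $\norm{\bfv_k-\bfv_{k-1}}_2^2\leq 2\beta^2\norm{\bfu_k-\bfu_{k-1}}_2^2+2\eta^2\norm{\nabla_2f(\bfy_{k-1},\bfv_{k-1})}_2^2$, then apply Lemma~\ref{lem:grad_dominance} to the second piece and cancel it against the $-\frac{\eta}{8}\norm{\nabla_1f(\bfy_{k-1},\bfv_{k-1})}_2^2$ coming from $-\phi_k$, using the smallness condition on $G_1^2G_2^2$ in (\ref{eq:G1_G4_req}). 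You instead assigned that Lyapunov slot to cancelling a cross term from the second application of Lemma~\ref{lem:df_du}, which is a misattribution: that application already contributes directly the $\beta^2\frac{G_1^2L_2}{2\gamma}\norm{\bfu_k-\bfu_{k-1}}_2^2$ term via $\bfu_k-\bfv_k=-\beta(\bfu_k-\bfu_{k-1})$ and needs no help from the gradient slot. With this one step replaced by the paper's treatment, your argument closes.
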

The proof of Lemma \ref{lem:lyapunov_conv_raw} has a similar idea to \citet{bansal2019potential} when showing that the Lyapunov function converges. The additional difficulty of Lemma \ref{lem:lyapunov_conv_raw} lies in carefully controlling the errors caused by the change of $\bfu_k$. Plugging the form of $\mathcal{Q}_3$ into the upper bound in (\ref{eq:u_iter_bound}) and then plugging the resulting upper bound into Lemma \ref{lem:lyapunov_conv_raw} yields
\[
    \paren{1 - \tfrac{c}{2\sqrt{\kappa}}}^{-1}\phi_{k+1} - \phi_k \leq \frac{6c\sqrt{\kappa}\eta L_2(L_2 + 1)(1+\beta)^3}{\mu(1-\beta)^3}\paren{G_1^4L_2 + \tfrac{8\mathcal{Q}_1G_1^4G_2^2}{\mu^2}}\paren{1 - \tfrac{c}{4\sqrt{\kappa}}}^k\phi_0
\]
The equation above directly implies (\ref{eq:lyapunov_step}) after we plug in the upper bound on $G_1$ and $G_2$ assumed in Theorem \ref{theo:nesterov_conv}. This establishes the inductive step and thus finishes the proof.
\begin{wrapfigure}{R}{0.45\textwidth}
    \vspace{-1cm}
    \begin{minipage}{0.45\textwidth}
        \centering
        \vspace{1cm}
        \subfigure[$\sigma_{\max}(\bfA_2) = 0.01$]{\includegraphics[width=\textwidth]{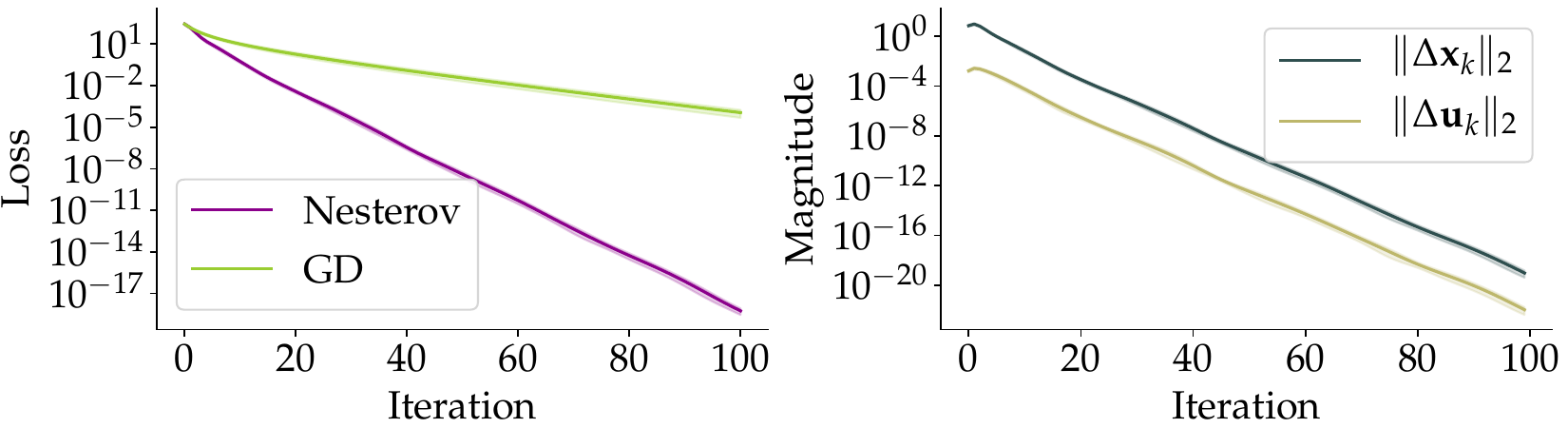}}
        \subfigure[$\sigma_{\max}(\bfA_2) = 0.3$]{\includegraphics[width=\textwidth]{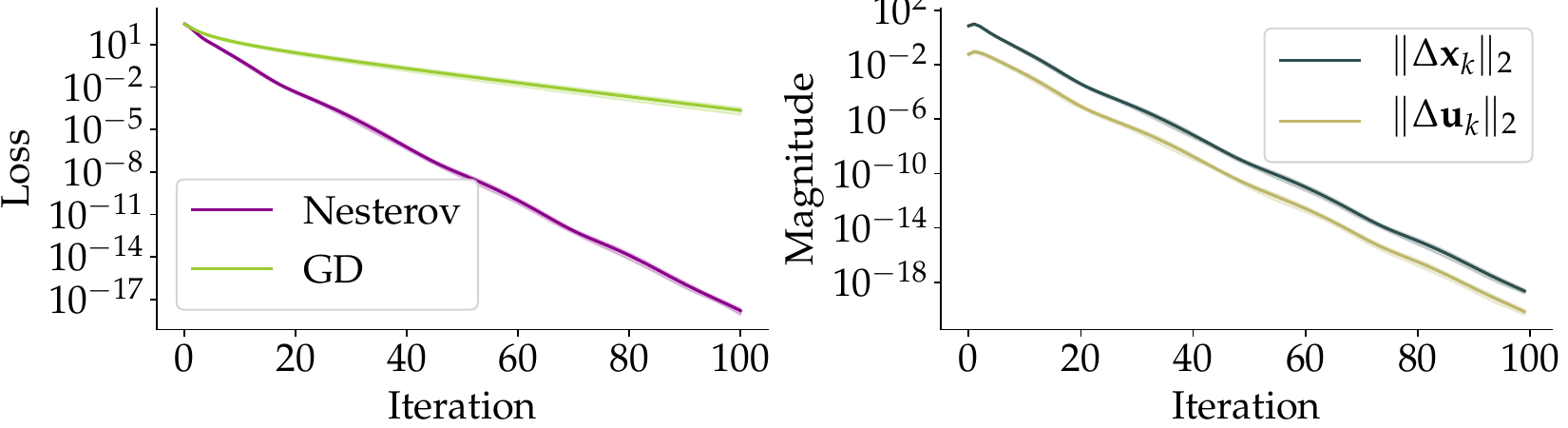}}
        \subfigure[$\sigma_{\max}(\bfA_2) = 5$]{\includegraphics[width=\textwidth]{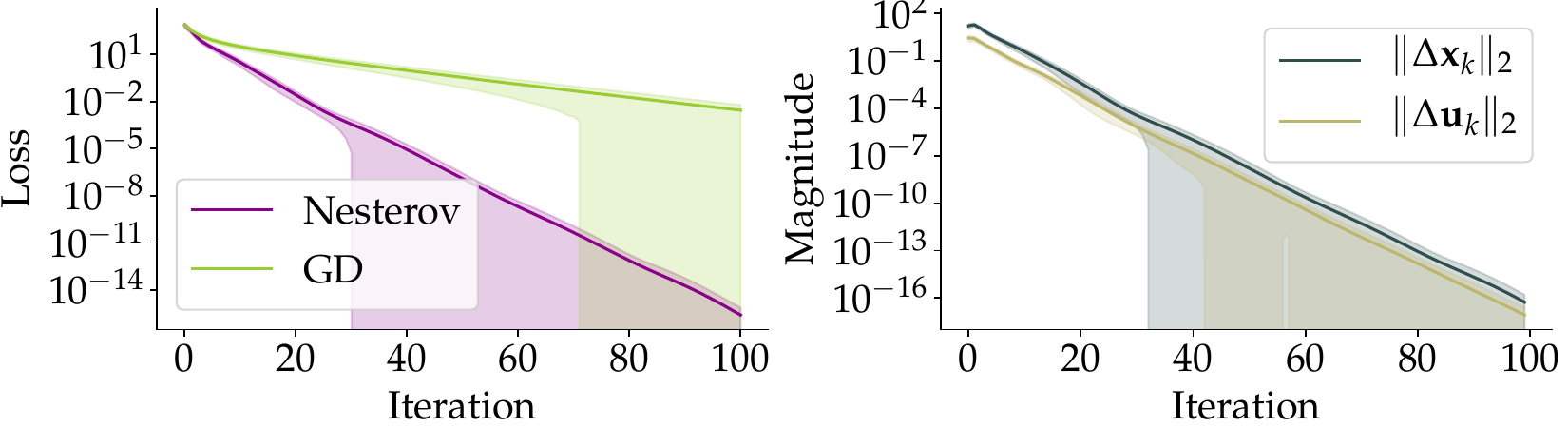}}
    \end{minipage}
    \caption{Experiment of learning additive model with gradient descent and Nesterov's momentum.}
    \label{fig:exp}
    \vspace{-2cm}
\end{wrapfigure}
\section{Realization of Assumption \ref{asump:strong_cvx}-\ref{asump:universal_opt}}
We consider two realizations of problems that satisfy Assumptions \ref{asump:strong_cvx}-\ref{asump:universal_opt}. By enforcing the requirement in Theorem \ref{theo:nesterov_conv} on the two models, we show that although nonconvex and possibly non-smooth, the two models enjoy accelerated convergence when trained with Nesterov's momentum.
\subsection{Additive model}
\label{sec:addition_model}
Given matrices $\bfA_1\in\R^{m\times m},\bfA_2\in\R^{m\times d}$ and a non-linear function $\sigma:\R^{m}\rightarrow \R^m$, we consider $h(\bfx, \bfu) = \bfA_1\bfx + \sigma(\bfA_2\bfu)$ as the summation of a linear model and a non-linear model. If we train $h(\bfx,\bfu)$ over the loss $g(\bfs) = \frac{1}{2}\norm{\bfs - \bfb}_2^2$ for some label $\bfb\in\R^m$, the objective can be written as
\begin{equation}
    \label{eq:addit}
    \begin{aligned}
    f(\bfx, \bfu) & = g(\bfA_1\bfx + \sigma\paren{\bfA_2\bfu})\\& = \frac{1}{2}\norm{\bfA_1\bfx + \sigma\paren{\bfA_2\bfu} - \bfb}_2^2.
    \end{aligned}
\end{equation}
Due to the non-linearity of $\sigma$, $f(\bfx, \bfu)$ is generally non-convex. If we further choose $\sigma$ to be some non-smooth function such as ReLU, i.e., $\sigma(\bfx)_i = \max\{0, x_i\}$, the objective can also be non-smooth. Yet, assuming that $\sigma$ is Lipschitz, we can show that $f(\bfx, \bfu)$ satisfies Assumptions \ref{asump:strong_cvx}-\ref{asump:universal_opt}.
\begin{lemma}
    \label{lem:additive_sat_asump}
    Let $\sigma$ be $B$-Lipschitz and $\sigma_{\min}(\bfA_1) > 0$. Then $f(\bfx,\bfu)$ satisfies Assumptions \ref{asump:strong_cvx}-\ref{asump:universal_opt} with 
    \begin{equation}
        \begin{aligned}
           R_{\bfx} = & R_{\bfu} = \infty;\; \mu = \sigma_{\min}\paren{\bfA_1}^2;\; L_1 = \sigma_{\max}\paren{\bfA_1}^2;\; L_2 = 1\\
            G_1 & = B\sigma_{\max}\paren{\bfA_2};\; G_2 = B\sigma_{\max}\paren{\bfA_1}\sigma_{\max}\paren{\bfA_2}.
        \end{aligned}
    \end{equation}
\end{lemma}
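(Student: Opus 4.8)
The plan is to verify Assumptions~\ref{asump:strong_cvx}--\ref{asump:universal_opt} one at a time, exploiting the fact that for every fixed $\bfu$ the map $\bfx\mapsto f(\bfx,\bfu)$ is a genuine quadratic whose Hessian is the constant matrix $\bfA_1^\top\bfA_1$. Concretely, I would first record $\nabla_1 f(\bfx,\bfu) = \bfA_1^\top(\bfA_1\bfx + \sigma(\bfA_2\bfu) - \bfb)$, so that the second-order Taylor expansion of $f(\cdot,\bfu)$ around any $\bfx$ is exact with remainder $\tfrac12\inner{\bfA_1^\top\bfA_1(\bfy-\bfx)}{\bfy-\bfx}$. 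Sandwiching this quadratic form between $\tfrac{\sigma_{\min}(\bfA_1)^2}{2}\norm{\bfy-\bfx}_2^2$ and $\tfrac{\sigma_{\max}(\bfA_1)^2}{2}\norm{\bfy-\bfx}_2^2$ immediately gives Assumption~\ref{asump:strong_cvx} with $\mu = \sigma_{\min}(\bfA_1)^2$ (positive exactly because $\sigma_{\min}(\bfA_1)>0$) and Assumption~\ref{asump:f_smooth} with $L_1 = \sigma_{\max}(\bfA_1)^2$, uniformly over $\bfu\in\R^{d_2}$, which is what allows $R_{\bfx} = R_{\bfu} = \infty$.

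Next, for Assumption~\ref{asump:g_smooth}: since $g(\bfs) = \tfrac12\norm{\bfs-\bfb}_2^2$ has Hessian $\bfI$, it is $1$-smooth, so $L_2 = 1$, and $\min_{\bfs}g(\bfs) = 0$. To identify this with $\min f$, I would use that $\bfA_1$ is square and $\sigma_{\min}(\bfA_1)>0$, hence invertible: for every $\bfu$, taking $\bfx = \bfA_1^{-1}(\bfb-\sigma(\bfA_2\bfu))$ gives $h(\bfx,\bfu)=\bfb$ and therefore $f(\bfx,\bfu)=0$. This simultaneously shows $f^\star = 0 = \min g$ (Assumption~\ref{asump:g_smooth}) and that $\min_{\bfx}f(\bfx,\bfu) = 0 = f^\star$ for every $\bfu\in\R^{d_2}$ (Assumption~\ref{asump:universal_opt}, with $R_{\bfu}=\infty$).

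For the two cross-Lipschitz conditions I would use only the $B$-Lipschitzness of $\sigma$. Since $h(\bfx,\bfu)-h(\bfx,\bfv) = \sigma(\bfA_2\bfu)-\sigma(\bfA_2\bfv)$, we get $\norm{h(\bfx,\bfu)-h(\bfx,\bfv)}_2\le B\norm{\bfA_2(\bfu-\bfv)}_2\le B\sigma_{\max}(\bfA_2)\norm{\bfu-\bfv}_2$, i.e.\ Assumption~\ref{asump:h_lip} with $G_1 = B\sigma_{\max}(\bfA_2)$; likewise $\nabla_1 f(\bfx,\bfu)-\nabla_1 f(\bfx,\bfv) = \bfA_1^\top(\sigma(\bfA_2\bfu)-\sigma(\bfA_2\bfv))$ yields Assumption~\ref{asump:grad_lip} with $G_2 = B\sigma_{\max}(\bfA_1)\sigma_{\max}(\bfA_2)$. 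All of these bounds are global, consistent with $R_{\bfx} = R_{\bfu} = \infty$.

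There is no genuinely hard step here — the lemma is a verification. The only point that needs care is that $\sigma$ (e.g.\ ReLU) may be non-differentiable, so one must avoid invoking smoothness or convexity of $f$ with respect to $\bfu$ anywhere; this is fine, because Assumptions~\ref{asump:strong_cvx}--\ref{asump:universal_opt} only ask for convexity and smoothness in $\bfx$ (where $f$ is quadratic in $\bfx$ for fixed $\bfu$, so the first-order expansions used in the assumptions hold with equality up to the curvature term) and for Lipschitz control in $\bfu$ (which needs only that $\sigma$ is Lipschitz, not differentiable). The secondary bookkeeping item is to double-check that invertibility of $\bfA_1$ is what makes the global-minimum matching in Assumptions~\ref{asump:g_smooth} and~\ref{asump:universal_opt} go through, since without $\sigma_{\min}(\bfA_1)>0$ the range of $\bfx\mapsto\bfA_1\bfx$ might miss $\bfb-\sigma(\bfA_2\bfu)$.
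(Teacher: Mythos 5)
Your proposal is correct and follows essentially the same route as the paper: both verify the six assumptions directly by noting that $f(\cdot,\bfu)$ is an exact quadratic with Hessian $\bfA_1^\top\bfA_1$ (giving $\mu=\sigma_{\min}(\bfA_1)^2$, $L_1=\sigma_{\max}(\bfA_1)^2$), using invertibility of the square matrix $\bfA_1$ to exhibit a zero-loss minimizer for every $\bfu$, and reading off $G_1,G_2$ from the $B$-Lipschitzness of $\sigma$. The only cosmetic difference is that you write the minimizer as $\bfA_1^{-1}(\bfb-\sigma(\bfA_2\bfu))$ while the paper uses the equivalent normal-equations form.
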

Notice that while $\mu$ and $L_1$ depend entirely on the property of $\bfA_1$, both $G_1$ and $G_2$ can be made smaller by choosing $\bfA_2$ with a small enough $\sigma_{\max}(\bfA_2)$. Intuitively, this means that $G_1$ and $G_2$ can be controlled, as long as the component that introduces the non-convexity and non-smoothness $\sigma\paren{\bfA_2\bfu}$ is small enough. Therefore, we can apply Theorem \ref{theo:nesterov_conv} to the minimization of (\ref{eq:addit}).

\begin{theorem}
    \label{theo:addit_conv}
    Consider the problem in (\ref{eq:addit}) and assume using Nesterov's momentum to minimize $f(\bfx, \bfu)$, where  $\sigma$ is a $B$-Lipschitz function. 
    Let $\kappa = \sfrac{\sigma_{\max}\paren{\bfA_1}^2}{\sigma_{\min}\paren{\bfA_1}^2}$, and suppose:
    \begin{equation}
        \label{eq:addit_conv_req}
        \sigma_{\min}\paren{\bfA_1} \geq \tilde{C}\sigma_{\max}\paren{\bfA_2}B\kappa^{0.75},
    \end{equation}
    for large enough constant $\tilde{C} > 0$.
    Then there exists constant $c > 0$ such that if we choose $\eta = \sfrac{c}{\sigma_{\max}\paren{\bfA_1}^2}$ and $\beta = \sfrac{(4\sqrt{\kappa} - \sqrt{c})}{(4\sqrt{\kappa} + 7\sqrt{c})}$, Nesterov's momentum in (\ref{eq:nesterov}) converges according to:
    \[
        f(\bfx_k,\bfu_k)\leq 2\paren{1 - \frac{c}{4\sqrt{\kappa}}}^kf(\bfx_0, \bfu_0).
    \]
\end{theorem}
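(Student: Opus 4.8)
The plan is to derive Theorem~\ref{theo:addit_conv} as a direct corollary of Theorem~\ref{theo:nesterov_conv} combined with Lemma~\ref{lem:additive_sat_asump}. The lemma already certifies that the additive objective~(\ref{eq:addit}) is an instance of the abstract class with the explicit parameters $\mu = \sigma_{\min}(\bfA_1)^2$, $L_1 = \sigma_{\max}(\bfA_1)^2$, $L_2 = 1$, $G_1 = B\sigma_{\max}(\bfA_2)$, $G_2 = B\sigma_{\max}(\bfA_1)\sigma_{\max}(\bfA_2)$, and $R_{\bfx} = R_{\bfu} = \infty$. So the entire task reduces to (i) checking that the scalar hypothesis~(\ref{eq:addit_conv_req}) implies the quantitative requirements~(\ref{eq:G1_G4_req}) of Theorem~\ref{theo:nesterov_conv} under this substitution, and (ii) identifying $f^\star$.

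First I would dispose of the parts requiring no work. Because $R_{\bfx} = R_{\bfu} = \infty$, the two ball-radius conditions in~(\ref{eq:G1_G4_req}) hold vacuously, so the iterates automatically remain inside $\mathcal{B}^{(1)}_{R_{\bfx}}\times\mathcal{B}^{(2)}_{R_{\bfu}}$. Moreover, the prescribed $\beta = \sfrac{(4\sqrt{\kappa}-\sqrt{c})}{(4\sqrt{\kappa}+7\sqrt{c})}$ gives $1-\beta = \sfrac{8\sqrt{c}}{(4\sqrt{\kappa}+7\sqrt{c})}$ and $1+\beta = \sfrac{(8\sqrt{\kappa}+6\sqrt{c})}{(4\sqrt{\kappa}+7\sqrt{c})}$, hence $\sfrac{(1-\beta)}{(1+\beta)} = \sfrac{8\sqrt{c}}{(8\sqrt{\kappa}+6\sqrt{c})} = \Theta\paren{\sfrac{1}{\sqrt{\kappa}}}$; I keep this ratio symbolic as $\Theta(\kappa^{-1/2})$ with an absolute constant (which is legitimate since $c$ is the absolute constant of Theorem~\ref{theo:nesterov_conv}).

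The heart of the argument is then to verify the two moment inequalities in~(\ref{eq:G1_G4_req}) after substituting $L_2 = 1$ (so $L_2(L_2+1)^2 = 4$ and $L_2(L_2+1) = 2$) and $\sfrac{(1-\beta)}{(1+\beta)} = \Theta(\kappa^{-1/2})$. The first reads $G_1^4 \leq \Theta\paren{\mu^2\kappa^{-3/2}}$, which, using $\mu^{1/2} = \sigma_{\min}(\bfA_1)$ and $G_1 = B\sigma_{\max}(\bfA_2)$, amounts to $B\sigma_{\max}(\bfA_2) \leq \Theta\paren{\sigma_{\min}(\bfA_1)\,\kappa^{-3/8}}$. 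For the second, writing $G_2^2 = B^2\sigma_{\max}(\bfA_1)^2\sigma_{\max}(\bfA_2)^2 = B^2\kappa\,\sigma_{\min}(\bfA_1)^2\sigma_{\max}(\bfA_2)^2$ (by the definition of $\kappa$), one gets $G_1^2G_2^2 = B^4\kappa\,\sigma_{\min}(\bfA_1)^2\sigma_{\max}(\bfA_2)^4 \leq \Theta\paren{\mu^3\kappa^{-3/2}} = \Theta\paren{\sigma_{\min}(\bfA_1)^6\kappa^{-3/2}}$, i.e.\ $B\sigma_{\max}(\bfA_2) \leq \Theta\paren{\sigma_{\min}(\bfA_1)\,\kappa^{-5/8}}$. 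Since $\kappa\geq 1$, the $\kappa^{-5/8}$ bound is the binding one, and it is implied by~(\ref{eq:addit_conv_req}) because $\kappa^{-3/4}\leq \kappa^{-5/8}$. Choosing the absolute constant $\tilde{C}$ large enough to absorb $C_1$, $C_2$, the numerical factors, and the constant hidden in $\sfrac{(1-\beta)}{(1+\beta)}$ then makes both parts of~(\ref{eq:G1_G4_req}) hold.

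Finally I would apply Theorem~\ref{theo:nesterov_conv} with $\eta = \sfrac{c}{L_1} = \sfrac{c}{\sigma_{\max}(\bfA_1)^2}$ and the stated $\beta$, obtaining $f(\bfx_k,\bfu_k) - f^\star \leq 2\paren{1-\sfrac{c}{4\sqrt{\kappa}}}^k(f(\bfx_0,\bfu_0) - f^\star)$, and observe that $f^\star = 0$: the loss $g(\bfs) = \frac{1}{2}\norm{\bfs-\bfb}_2^2$ has minimum value $0$, and since $\sigma_{\min}(\bfA_1) > 0$ forces $\bfA_1$ to be invertible, for any $\bfu$ the choice $\bfx = \bfA_1^{-1}(\bfb - \sigma(\bfA_2\bfu))$ gives $f(\bfx,\bfu) = 0$, so $\min_{\bfx,\bfu}f = 0$ (consistent with Assumption~\ref{asump:g_smooth}). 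Substituting $f^\star = 0$ yields exactly the claimed bound. I expect the only real obstacle to be bookkeeping: carefully tracking the powers of $\kappa$ through the substitution and confirming that the exponent $0.75$ in~(\ref{eq:addit_conv_req}) dominates the worst-case exponent $5/8$ required by~(\ref{eq:G1_G4_req}); there is no conceptual difficulty, as Theorem~\ref{theo:nesterov_conv} and Lemma~\ref{lem:additive_sat_asump} do all the substantive work.
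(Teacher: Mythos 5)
Your proposal is correct and follows essentially the same route as the paper's own proof: reduce to Theorem~\ref{theo:nesterov_conv} via Lemma~\ref{lem:additive_sat_asump}, use $\sfrac{(1-\beta)}{(1+\beta)} = \Theta(\kappa^{-1/2})$ to turn the two conditions of~(\ref{eq:G1_G4_req}) into $B\sigma_{\max}(\bfA_2) \lesssim \sigma_{\min}(\bfA_1)\kappa^{-3/8}$ and the binding $B\sigma_{\max}(\bfA_2) \lesssim \sigma_{\min}(\bfA_1)\kappa^{-5/8}$, both implied by the $\kappa^{-3/4}$ hypothesis, and conclude with $f^\star = 0$. Your explicit remark that the radius conditions are vacuous because $R_{\bfx} = R_{\bfu} = \infty$ is a small point the paper leaves implicit, but otherwise the arguments coincide.
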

Notice that the requirement in (\ref{eq:addit_conv_req}) favors a larger $\sigma_{\min}(\bfA_1)$ and smaller $\sigma_{\max}(\bfA_2), B$ and $\kappa$. 
Using this example, we empirically verify the theoretical result of Theorem \ref{theo:nesterov_conv}, as shown in Figure \ref{fig:exp}. The three rows correspond to the cases of $\sigma_{\max}\paren{\bfA_2} \in\{0.01, 0.3, 5\}$, respectively. The plot lines denote the average loss/distance among ten trials, while the shaded region denotes the standard deviation. Observing the plots in the left column, in all three cases, Nesterov's momentum achieves a faster convergence compared with gradient descent, while the case with the largest $\sigma_{\max}\paren{\bfA_2}$ introduces the largest variance between results of the ten trials. Recall that $\sigma_{\max}\paren{\bfA_2}$ controls the magnitude of $G_1$ and $G_2$. Thus, this phenomenon shows that when $G_1$ and $G_2$ become larger, the theoretical guarantee in Theorem \ref{theo:nesterov_conv} begins to break down, and the result depends more on the initialization. Figures in the right column plots the evolution of $\norm{\bfx_{k}-\bfx_{k-1}}_2^2$ and $\norm{\bfu_{k}-\bfu_{k-1}}_2^2$. All three figures show that the two quantity decrease linearly. This phenomenon supports the linear decay of the two quantities, as shown in Lemma \ref{lem:xu_iter_bound}. Moreover, as $\sigma_{\max}\paren{\bfA_2}$ increases, the relative magnitude of $\norm{\bfu_{k}-\bfu_{k-1}}_2^2$ to $\norm{\bfx_{k}-\bfx_{k-1}}_2^2$ also increase (the line of ``$\|\Delta \bfu_k\|_2$'' get closer to ``$\|\Delta \bfx_k\|_2$''). This supports that $\norm{\bfu_{k}-\bfu_{k-1}}_2^2$ scales with $G_1$, as shown in Lemma \ref{lem:xu_iter_bound}.

\subsection{Deep ReLU Neural Networks}
\label{sec:dnn}
Consider the $\Lambda$-layer ReLU neural network with layer widths $\{d_\ell\}_{\ell=0}^{\Lambda}$. Denoting the number of training samples with $n$, we consider the input and label of the training data given by $\bfX\in\R^{n\times d_0}$ and $\bfY\in\R^{n\times d_{\Lambda}}$. Let the weight matrix in the $\ell$-th layer be $\bfW_{\ell}$. We use $\bm{\theta} = \{\bfW_{\ell}\}_{\ell=1}^{\Lambda}$ to denote the collection of all weights and $\sigma(\bfA)_{ij} = \max\{0,a_{ij}\}$ to denote the ReLU function. Then, the output of each layer is given by:
\begin{equation}
    \label{eq:relu_net }
    \bfF_{\ell}\paren{\bm{\theta}} = \begin{cases}
    \bfX, & \text{if }\ell = 0;\\
    \sigma\paren{\bfF_{\ell-1}\paren{\bm{\theta}}\bfW_{\ell}}, & \text{if }\ell\in[\Lambda-1];\\
    \bfF_{\Lambda-1}\paren{\bm{\theta}}\bfW_{\Lambda}, & \text{if }\ell=\Lambda.
    \end{cases}
\end{equation}
We consider the training of $\bfF_{\Lambda}\paren{\bm{\theta}}$ over the MSE loss, as in $\calL(\bm{\theta}) = \frac{1}{2}\norm{\bfF_{\Lambda}\paren{\bm{\theta}} - \bm{Y}}_F^2$.
We can interpret the scenario using our partition model in (\ref{eq:partition_model}). Let $g(\bfs) = \frac{1}{2}\norm{\bfs - \texttt{V}\paren{\bfY}}_2^2$. If we partition the parameter $\bm{\theta}$ into $\bfx = \texttt{V}\paren{\bfW_{\Lambda}}$ and $\bfu = \left(\texttt{V}\paren{\bfW_1},\dots,\texttt{V}\paren{\bfW_{\Lambda-1}}\right)$, then we can write:
\begin{equation}
    \label{eq:nn_obj}
    h(\bfx, \bfu) = \texttt{V}\paren{\bfF_{\Lambda}\paren{\bm{\theta}}};\quad f(\bfx,\bfu) = \frac{1}{2}\norm{\texttt{V}\paren{\bfF_{\Lambda}\paren{\bm{\theta}}} - \texttt{V}\paren{\bfY}}_2^2 = \frac{1}{2}\norm{\bfF_{\Lambda}\paren{\bm{\theta}} - \bfY}_F^2.
\end{equation}
For some given $\bfx$ and $\bfu$, we let $\bfW_{\Lambda}(\bfx)$ be the matrix such that $\bfx = \texttt{V}\paren{\bfW_{\Lambda}(\bfx)}$; similarly, we let $\bfW_{\ell}(\bfu)$ with $\ell\in[L-1]$ be the matrices such that $\bfu = \left(\texttt{V}\paren{\bfW_1(\bfu)},\dots,\texttt{V}\paren{\bfW_{\Lambda-1}(\bfu)}\right)$. Denote $\lambda_{\Lambda} = \sup_{\bfx\in\mathcal{B}^{(1)}_{R_{\bfx}}}\sigma_{\max}\paren{\bfW_{\Lambda}(\bfx)}$ and $\lambda_{\ell} = \sup_{\bfu\in\mathcal{B}^{(2)}_{R_{\bfx}}}\sigma_{\max}\paren{\bfW_{\ell}(\bfu)}$ for $\ell\in[\Lambda-1]$. Moreover, denote $\lambda_{i\rightarrow j} = \prod_{\ell=i}^j\lambda_{\ell}$. Then we can show that $f(\bfx,\bfu)$ defined in (\ref{eq:nn_obj}) satisfies Assumptions \ref{asump:strong_cvx}-\ref{asump:universal_opt}.
\begin{lemma}
    \label{lem:nn_sat_asump}
    Let $\bm{\theta}(0)$ be the initialization of the ReLU network in (\ref{eq:relu_net }) and $\alpha_0 = \sigma_{\min}\paren{\bfF_{\Lambda-1}\paren{\bm{\theta}(0)}}$. Assume that each $\bfW_{\ell}$ is initialized such that $\norm{\bfW_{\ell}(0)}_2\leq \frac{\lambda_{\ell}}{2}$ and $\alpha_0 > 0$. Then, $f(\bfx, \bfu)$ satisfies Assumptions \ref{asump:strong_cvx}-\ref{asump:universal_opt} with:
    \begin{gather*}
    R_{\bfx} = \frac{\lambda_{\Lambda}}{2};\quad R_{\bfu} = \frac{1}{2}\paren{\min_{\ell\in[\Lambda - 1]}\lambda_{\ell}}\min\left\{1, \frac{\alpha_0}{2\sqrt{\Lambda}\norm{\bfX}_F\lambda_{1\rightarrow\Lambda - 1}}\right\}^2;\quad\mu = \frac{\alpha_0^2}{2}\\  L_1 = \norm{\bfX}_F^2\lambda_{1\rightarrow\Lambda - 1}^2;\quad L_2 = 1;\quad G_1 = \paren{\lambda_{\Lambda} + R_{\bfu}}\sqrt{\Lambda}\norm{\bfX}_F\lambda_{1\rightarrow\Lambda-1}\paren{\min_{\ell\in[\Lambda-1]}\lambda_{\ell}}^{-1}\\
    G_2 = \paren{\paren{2\lambda_{\Lambda} + R_{\bfu}}\norm{\bfX}_F\lambda_{1\rightarrow\Lambda - 1} + \norm{\bfY}_F}\sqrt{\Lambda}\norm{\bfX}_F\lambda_{1\rightarrow\Lambda - 1}\paren{\min_{\ell\in[\Lambda-1]}\lambda_{\ell}}^{-1}
\end{gather*}
\end{lemma}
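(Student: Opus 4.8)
\textit{Proof proposal.} The plan is to check Assumptions \ref{asump:strong_cvx}-\ref{asump:universal_opt} one at a time, exploiting that in the partition of (\ref{eq:nn_obj}) the variable $\bfx=\texttt{V}(\bfW_\Lambda)$ enters $f$ \emph{linearly}, via $\bfF_\Lambda(\bm{\theta})=\bfF_{\Lambda-1}(\bfu)\bfW_\Lambda(\bfx)$, so $f(\cdot,\bfu)$ is a convex quadratic in $\bfx$ with Hessian $\bfI_{d_\Lambda}\otimes\big(\bfF_{\Lambda-1}(\bfu)^\top\bfF_{\Lambda-1}(\bfu)\big)$. Hence Assumptions \ref{asump:strong_cvx} and \ref{asump:f_smooth} follow once we show $\mu\le\sigma_{\min}(\bfF_{\Lambda-1}(\bfu))^2$ and $\sigma_{\max}(\bfF_{\Lambda-1}(\bfu))^2\le L_1$ for every $\bfu\in\mathcal{B}^{(2)}_{R_{\bfu}}$; Assumption \ref{asump:g_smooth} is immediate because $g$ is the $1$-smooth squared loss (so $L_2=1$); and Assumptions \ref{asump:h_lip}, \ref{asump:grad_lip}, \ref{asump:universal_opt} reduce, respectively, to Lipschitz control of the feature map $\bfu\mapsto\bfF_{\Lambda-1}(\bfu)$, of $\nabla_1f$, and to invertibility of $\bfF_{\Lambda-1}(\bfu)$. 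Everything flows from two layer-wise estimates.

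First I would prove a \emph{forward bound}: since ReLU is $1$-Lipschitz with $\sigma(0)=0$ and $\norm{\bfA\bfB}_F\le\norm{\bfA}_F\,\sigma_{\max}(\bfB)$, an induction over layers gives $\norm{\bfF_\ell}_F\le\norm{\bfX}_F\,\lambda_{1\rightarrow\ell}$ whenever $\sigma_{\max}(\bfW_j)\le\lambda_j$ for all $j\le\ell$, whence $\sigma_{\max}(\bfF_{\Lambda-1}(\bfu))\le\norm{\bfF_{\Lambda-1}(\bfu)}_F\le\norm{\bfX}_F\lambda_{1\rightarrow\Lambda-1}$, yielding $L_1=\norm{\bfX}_F^2\lambda_{1\rightarrow\Lambda-1}^2$. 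The hypotheses $\norm{\bfW_\ell(0)}_2\le\tfrac{\lambda_\ell}{2}$ together with $R_{\bfx}\le\tfrac{\lambda_\Lambda}{2}$ and $R_{\bfu}\le\tfrac12\min_\ell\lambda_\ell$ are exactly what the triangle inequality needs to guarantee $\sigma_{\max}(\bfW_\ell)\le\lambda_\ell$ and $\sigma_{\max}(\bfW_\Lambda(\bfx))\le\lambda_\Lambda$ throughout the two balls. Second, a \emph{perturbation bound}: expanding $\bfF_{\Lambda-1}(\bfu)-\bfF_{\Lambda-1}(\bfv)$ as a telescoping sum over the layer at which the two weight collections are switched, bounding each increment by (input Frobenius norm)$\,\times\,\norm{\bfW_i(\bfu)-\bfW_i(\bfv)}_F\,\times\,\lambda_{i+1\rightarrow\Lambda-1}\le\tfrac{\norm{\bfX}_F\lambda_{1\rightarrow\Lambda-1}}{\min_\ell\lambda_\ell}\norm{\bfW_i(\bfu)-\bfW_i(\bfv)}_F$, and Cauchy--Schwarz over the $\Lambda-1$ layers, gives $\norm{\bfF_{\Lambda-1}(\bfu)-\bfF_{\Lambda-1}(\bfv)}_F\le G_1\norm{\bfu-\bfv}_2$ with $G_1=\tfrac{\sqrt{\Lambda}\,\norm{\bfX}_F\lambda_{1\rightarrow\Lambda-1}}{\min_\ell\lambda_\ell}$. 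Assumption \ref{asump:h_lip} then follows since $h$ is this feature map post-multiplied by $\bfW_\Lambda(\bfx)$ (whose norm is controlled by $R_{\bfx}$), and Assumption \ref{asump:grad_lip} follows by writing $\nabla_1f(\bfx,\bfu)=\texttt{V}\big(\bfF_{\Lambda-1}(\bfu)^\top(\bfF_{\Lambda-1}(\bfu)\bfW_\Lambda(\bfx)-\bfY)\big)$, splitting $\nabla_1f(\bfx,\bfu)-\nabla_1f(\bfx,\bfv)$ into a term where $\bfF_{\Lambda-1}^\top$ changes and a term where the residual changes, and bounding both via the forward and perturbation bounds together with $\norm{\bfF_{\Lambda-1}(\bfu)\bfW_\Lambda(\bfx)-\bfY}_F\le\norm{\bfX}_F\lambda_{1\rightarrow\Lambda}+\norm{\bfY}_F$; this produces the prefactor $2\norm{\bfX}_F\lambda_{1\rightarrow\Lambda}+\norm{\bfY}_F$ on $G_1$, i.e.\ the stated $G_2$.

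For the lower singular-value bound I would apply the perturbation bound with $\bfv=\bm{\theta}(0)$: the second factor in the definition of $R_{\bfu}$ is calibrated so that $\norm{\bfF_{\Lambda-1}(\bfu)-\bfF_{\Lambda-1}(\bm{\theta}(0))}_F\le G_1R_{\bfu}\le\tfrac14\alpha_0$, and since singular values are $1$-Lipschitz in the Frobenius norm (Weyl) this gives $\sigma_{\min}(\bfF_{\Lambda-1}(\bfu))\ge\alpha_0-\tfrac14\alpha_0\ge\tfrac{\alpha_0}{\sqrt2}$, hence $\sigma_{\min}(\bfF_{\Lambda-1}(\bfu))^2\ge\mu=\tfrac{\alpha_0^2}{2}$. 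The same estimate shows $\bfF_{\Lambda-1}(\bfu)$ is invertible for every $\bfu\in\mathcal{B}^{(2)}_{R_{\bfu}}$, so $\bfF_{\Lambda-1}(\bfu)\bfW_\Lambda=\bfY$ is solvable; any solution yields $f=0=\min_\bfs g(\bfs)$, which proves Assumption \ref{asump:universal_opt} and the matching-minima clause of Assumption \ref{asump:g_smooth}.

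I expect the crux to be this lower bound on $\sigma_{\min}(\bfF_{\Lambda-1}(\bfu))$: it is the only place where the network's nonconvexity and non-smoothness genuinely interact with the geometry, and it dictates the precise two-part form of $R_{\bfu}$ --- the first part ($\tfrac12\min_\ell\lambda_\ell$) keeps all intermediate weights in the regime where the forward and telescoping estimates hold, while the second part ($\propto\alpha_0/(\sqrt{\Lambda}\norm{\bfX}_F\lambda_{1\rightarrow\Lambda-1})$) stops the feature map from drifting far enough to annihilate its smallest singular value. Care is also needed that every appearance of ReLU is handled only through its $1$-Lipschitzness and $\sigma(0)=0$ (never differentiating it), and that the telescoping hybrids are arranged so each increment depends on exactly one weight matrix --- routine bookkeeping, but error-prone.
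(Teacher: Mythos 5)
Your proposal is correct and follows essentially the same route as the paper: the Kronecker-product Hessian $\bfI_{d_\Lambda}\otimes\bfF_{\Lambda-1}^\top\bfF_{\Lambda-1}$ for Assumptions \ref{asump:strong_cvx}--\ref{asump:f_smooth}, a Weyl-type perturbation of $\sigma_{\min}(\bfF_{\Lambda-1})$ calibrated by the second factor in $R_{\bfu}$, full row rank of $\bfF_{\Lambda-1}$ for Assumption \ref{asump:universal_opt}, and the split of $\nabla_1f(\bfx,\bfu)-\nabla_1f(\bfx,\bfv)$ for $G_2$. The only cosmetic difference is that you derive the forward and telescoping perturbation bounds from scratch where the paper imports them as Lemma 2.1 of \cite{nguyen2021ontheproof}; your derived Lipschitz constant for $h$ (feature-map constant times $\lambda_\Lambda$, i.e.\ with $\lambda_{1\rightarrow\Lambda}$) matches the paper's appendix computation rather than the $\lambda_{1\rightarrow\Lambda-1}$ appearing in the lemma statement, which is an inconsistency in the paper, not in your argument.
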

As shown in Lemma 3.3 by \citet{nguyen2021ontheproof}, we can guarantee that $\alpha_0 > 0$ with sufficient over-parameterization. To show the acceleration of Nesterov's momentum when training (\ref{eq:relu_net }), we need to $i)$ guarantee that the condition of $R_{\bfx}$ and $R_{\bfu}$ in (\ref{eq:G1_G4_req}) satisfies the upper bound in Lemma \ref{lem:nn_sat_asump}, and $ii)$ the quantities $\mu, L_1, L_2, G_1$ and $G_2$ defined in Lemma \ref{lem:nn_sat_asump} satisfy the requirement in (\ref{eq:G1_G4_req}). Enforcing the two conditions with sufficient over-parameterization gives us the following theorem.
\begin{theorem}
    \label{theo:nn_nesterov_conv}
    Consider training the ReLU neural network in (\ref{eq:relu_net }) using the MSE loss, or equivalently, minimizing $f(\bfx, \bfu)$ defined in (\ref{eq:nn_obj}) with Nesterov's momentum with $\eta = \sfrac{c}{L_1}$ and $\beta = \frac{4\sqrt{\kappa} - \sqrt{c}}{4\sqrt{\kappa} + 7\sqrt{c}}$, where $\kappa = \tfrac{2L_1}{\alpha_0^2}$ and $\alpha_0, L_1$ defined in Lemma \ref{lem:nn_sat_asump}. If the width of the network satisfies:
    \begin{equation}
        \label{eq:overparam_req}
        d_{\ell} = \Theta\paren{m}\quad \forall \ell \in[\Lambda-2];\quad d_{\Lambda-1} = \Omega\paren{n^{4.5}\max\left\{n,d^2\right\}}, 
    \end{equation}
    for some $m\geq \max\{d_0, d_{\Lambda}\}$, and we initialize the weights according to:
    \[
        \left[\bfW_{\ell}(0)\right]_{ij} \sim\mathcal{N}\paren{0, d_{\ell-1}^{-1}},\quad \forall \ell\in[\Lambda-1];\quad \left[\bfW_{\Lambda}(0)\right]_{ij} \sim\mathcal{N}\paren{0, d_{\Lambda-1}^{-\frac{3}{2}}}.
    \]
    Then, with high probability over the initialization, there exists an absolute constant $c>0$ such that:
    \begin{equation}
        \label{eq:nn_nest_conv}
        \calL\paren{\bm{\theta}(k)} \leq 2\paren{1 - \tfrac{c}{4\sqrt{\kappa}}}^k\calL(\bm{\theta}(0)).
    \end{equation}
\end{theorem}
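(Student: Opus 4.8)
\textit{Proof proposal.} The plan is to instantiate Theorem~\ref{theo:nesterov_conv} for the objective $f$ in \eqref{eq:nn_obj} and translate the conclusion back to $\calL$. Since Nesterov's momentum is partition-agnostic (Remark 1), the weight-space recursion in the statement is exactly \eqref{eq:nesterov} under the partition $\bfx = \texttt{V}\paren{\bfW_{\Lambda}}$, $\bfu = \paren{\texttt{V}\paren{\bfW_1},\dots,\texttt{V}\paren{\bfW_{\Lambda-1}}}$, so it suffices to verify the hypotheses of Theorem~\ref{theo:nesterov_conv} for $f$. Lemma~\ref{lem:nn_sat_asump} already supplies this reduction: once we fix the auxiliary radii $\lambda_\ell$ so that $\norm{\bfW_{\ell}(0)}_2 \le \lambda_\ell/2$ and once $\alpha_0 = \sigma_{\min}\paren{\bfF_{\Lambda-1}\paren{\bm{\theta}(0)}} > 0$, the objective obeys Assumptions~\ref{asump:strong_cvx}--\ref{asump:universal_opt} with $\mu,L_1,L_2=1,G_1,G_2$ and admissible $R_{\bfx},R_{\bfu}$ given explicitly in terms of $\alpha_0,\norm{\bfX}_F,\norm{\bfY}_F$ and the $\lambda_\ell$'s. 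In particular $\kappa=2L_1/\alpha_0^2$ matches the statement, and $f^\star=0$: because $R_{\bfu}$ is taken small enough that $\sigma_{\min}\paren{\bfF_{\Lambda-1}(\bfu)}\ge\alpha_0/2>0$ throughout $\mathcal{B}^{(2)}_{R_{\bfu}}$ (hence at $\bfu_0$), the last-layer map $\bfW_{\Lambda}\mapsto\bfF_{\Lambda-1}(\bfu_0)\bfW_{\Lambda}$ is onto $\R^{n\times d_{\Lambda}}$, so $\min_{\bfx}f(\bfx,\bfu_0)=0$, which by Assumption~\ref{asump:universal_opt} forces $f^\star=0$. Consequently $\calL\paren{\bm{\theta}(k)}=f(\bfx_k,\bfu_k)-f^\star$, and \eqref{eq:nn_nest_conv} will be precisely \eqref{eq:nesterov_conv}.

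It remains to make the choices and verify the quantitative hypotheses \eqref{eq:G1_G4_req} on a high-probability event over the initialization. I would set each $\lambda_\ell$ to twice a high-probability upper bound on $\sigma_{\max}\paren{\bfW_{\ell}(0)}$, via the standard estimate $\sigma_{\max}\paren{\bfW_{\ell}(0)}\lesssim\paren{\sqrt{d_{\ell-1}}+\sqrt{d_{\ell}}}\cdot(\text{init.\ std.})$ for Gaussian matrices; under \eqref{eq:overparam_req} this gives $\lambda_\ell=\Theta(1)$ for the $\Theta(m)$-wide hidden layers, a polynomially larger value for the $\Theta(n^4m^2)$-wide layer, and a polynomially small value for the last layer (whose entries have variance $d_{\Lambda-1}^{-3/2}$). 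The remaining requirement $\alpha_0>0$, together with an explicit polynomial lower bound on $\alpha_0$, I would obtain from Lemma~3.3 of \cite{nguyen2021ontheproof}, for which the width $d_{\Lambda-1}=\Theta(n^4m^2)$ is exactly what is needed; this then pins down $\mu=\tfrac12\alpha_0^2$. Substituting these values into \eqref{eq:G1_G4_req} and using $\eta=\sfrac{c}{L_1}$, $1-\beta=\Theta\paren{\sfrac{1}{\sqrt{\kappa}}}$, $1+\beta=\Theta(1)$, the two smallness conditions reduce (as in the discussion after Theorem~\ref{theo:nesterov_conv}) to $G_1^4\le O\paren{\sfrac{\mu^{7/2}}{L_1^{3/2}}}$ and $G_1^2G_2^2\le O\paren{\sfrac{\mu^{9/2}}{L_1^{3/2}}}$, while the radius conditions become $R_{\bfx}\ge\Omega\paren{L_1^{1/4}\mu^{-3/4}}\cdot\calL\paren{\bm{\theta}(0)}^{1/2}$ and $R_{\bfu}\ge\Omega\paren{G_1L_1^{3/4}\mu^{-7/4}}\cdot\calL\paren{\bm{\theta}(0)}^{1/2}$. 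One must then check that these lower bounds are dominated by the upper bounds $R_{\bfx}\le\lambda_\Lambda/2$ and $R_{\bfu}\le\tfrac{1}{2}\min_{\ell\in[\Lambda-1]}\lambda_\ell\cdot\min\cbrace{1,\tfrac{\alpha_0}{2\sqrt{\Lambda}\norm{\bfX}_F\lambda_{1\rightarrow\Lambda-1}}}$ from Lemma~\ref{lem:nn_sat_asump} (using the freedom to pick $\lambda_\Lambda$ in a window: large enough to accommodate the $R_{\bfx}$ bound, small enough that $G_2=\paren{2\norm{\bfX}_F\lambda_{1\rightarrow\Lambda}+\norm{\bfY}_F}G_1$ still fits the $G_1^2G_2^2$ budget), and bound $\calL\paren{\bm{\theta}(0)}=\tfrac12\norm{\bfF_{\Lambda-1}(\bm{\theta}(0))\bfW_{\Lambda}(0)-\bfY}_F^2$ from above using the same concentration bounds.

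The main obstacle is precisely this simultaneous polynomial balancing: $L_1,G_1,G_2,\kappa$, the two radii, and $\calL\paren{\bm{\theta}(0)}$ must all be made to fit a single over-parameterization knob, and the exponents $n^4$ and $m^2$ in \eqref{eq:overparam_req} are calibrated so that $\mu$ (which grows with $d_{\Lambda-1}$) dominates the polynomially-growing Lipschitz-type constants while keeping $\kappa$ and the ball radii inside the ranges Lemma~\ref{lem:nn_sat_asump} permits; tracking these exponents and their dependence on $\Lambda,\norm{\bfX}_F,\norm{\bfY}_F$ is the bulk of the work. Once every inequality in \eqref{eq:G1_G4_req} is verified on the good event, Theorem~\ref{theo:nesterov_conv} applies verbatim with $\eta=\sfrac{c}{L_1}$ and $\beta=\sfrac{(4\sqrt{\kappa}-\sqrt{c})}{(4\sqrt{\kappa}+7\sqrt{c})}$, yielding $\bfx_k,\bfy_k\in\mathcal{B}^{(1)}_{R_{\bfx}}$, $\bfu_k,\bfv_k\in\mathcal{B}^{(2)}_{R_{\bfu}}$ and $f(\bfx_k,\bfu_k)-f^\star\le2\paren{1-\sfrac{c}{4\sqrt{\kappa}}}^k\paren{f(\bfx_0,\bfu_0)-f^\star}$ for all $k$; substituting $f^\star=0$ and $f(\bfx_k,\bfu_k)=\calL\paren{\bm{\theta}(k)}$ gives \eqref{eq:nn_nest_conv}, with the probabilistic ingredients (Gaussian operator-norm concentration and the $\alpha_0$ bound of \cite{nguyen2021ontheproof}) contributing only the ``with high probability'' qualifier.
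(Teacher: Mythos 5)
Your proposal follows essentially the same route as the paper's proof: reduce the theorem to verifying the conditions in (\ref{eq:G1_G4_req}) for the constants supplied by Lemma \ref{lem:nn_sat_asump}, instantiate the $\lambda_{\ell}$'s via Gaussian operator-norm concentration, lower-bound $\alpha_0$ polynomially via Lemma 3.3 of \cite{nguyen2021ontheproof}, bound $\norm{\bfX}_F,\norm{\bfY}_F$ and $\calL(\bm{\theta}(0))$, and then invoke Theorem \ref{theo:nesterov_conv} with $f^\star = 0$ to read off (\ref{eq:nn_nest_conv}). The only difference is that the paper executes the polynomial bookkeeping explicitly (its simplified requirements $\mathcal{R}_1$--$\mathcal{R}_4$ and their verification under $d_{\Lambda-1}=\Theta(n^4m^2)$), which you correctly identify as the remaining, routine bulk of the work.
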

As in prior work \citep{nguyen2021ontheproof}, we treat the depth of the neural network $\Lambda$ to be a constant when computing the over-parameterization requirement. Next, we compare our result with Theorem 2.2 and Corollary 3.2 of \citet{nguyen2021ontheproof}. To deal with the additional complexity of Nesterov's momentum as introduced in Section \ref{sec:difficulty}, our over-parameterization is slightly larger than the over-parameterization of $d_{\Lambda-1} = \Theta\paren{n^3m^3}$ in Corollary 3.2 of \citet{nguyen2021ontheproof}. Moreover, in Theorem 2.2 of \citet{nguyen2021ontheproof}, since their choice of $\eta$ is also $O\paren{\sfrac{1}{L_1}}$, the convergence rate they achieve for gradient descent is $1 - \Theta\paren{\sfrac{1}{\kappa}}$. Compared with this rate, Theorem \ref{theo:nn_nesterov_conv} achieves a faster convergence of $1 - \Theta\paren{\sfrac{1}{\sqrt{\kappa}}}$. This shows that Nesterov's momentum enjoys acceleration when training deep ReLU neural networks.

\section{Conclusion and Broader Impact}
We consider the minimization of a new class of objective functions, namely the partition model, where the function is smooth and strongly convex with respect to only a subset of its parameters. This class of objectives is more general than the class of smooth and strongly convex functions. We prove the convergence of gradient descent and Nesterov's momentum on this class of objectives under certain assumptions and show that Nesterov's momentum achieves an accelerated convergence rate of $1 - \Theta\paren{\sfrac{1}{\sqrt{\kappa}}}$ compared to the $1 - \Theta\paren{\sfrac{1}{\kappa}}$ convergence rate of gradient descent. Moreover, we considered the training of the additive model and deep ReLU networks as two realizations of the partition model. We showed the acceleration of Nesterov's momentum on these two realizations.

Future works can focus on three aspects. First, one can consider the case where Assumption \ref{asump:universal_opt} does not hold, and study whether Nesterov's momentum can still converge to up to some error with acceleration under a milder condition of this assumption. Second, one can extend the analysis to different neural network architectures by investigating whether Assumptions \ref{asump:strong_cvx}-\ref{asump:universal_opt} hold on CNNs and ResNets. Lastly, since the weight selection process is extensively studied by literature on neural network pruning, one can study whether neural network pruning keeps weights with good optimization properties and potentially connects our result with the theory of pruning methods such as the Lottery Ticket Hypothesis.

\acks{This work is supported by NSF CMMI no. 2037545 and NSF CAREER award no. 2145629, Welch Foundation Grant \#A22-0307, a Microsoft Research Award, and an Amazon Research Award.}

\bibliography{alt2024}
\clearpage

\appendix
\tableofcontents

\section{Proofs for Section \ref{sec:preliminary}}
\label{sec:proof_preliminary}
\subsection{Proof of Theorem \ref{theo:strong_cvx_smooth_suffice}}
Let $\tilde{f}:\R^d\rightarrow\R$ be a $\tilde{L}$-smooth and $\tilde{\mu}$-strongly convex function. Let $\{\bm{0}\}$ be the zero-dimensional vector space. We define $h(\bfx, \bfu):\R^d\times \{\bm{0}\}\rightarrow\R^d$ as $h(\bfx,\bfu) = \bfx$ for all $\bfx\in\R^d$. Moreover, we define $g:\R\rightarrow\R$ as $g(s) = \tilde{f}(\bfs)$ for all $s\in\R$, and $f(\bfx,\bfu) = g(h(\bfx,\bfu))$. Then $f(\bfx,\bfu)$ is a partitioned equivalence of $\tilde{f}(\bfx)$ since
\[
    f(\bfx,\bfu) = g(\bfx) = \tilde{f}(\bfx)\quad\forall \bfx\in\R^d
\]
Choosing $R_{\bfx} = R_{\bfu} = \infty$, we have that $\mathcal{B}^{(1)}_{R_{\bfx}} = \R^d$ and $\mathcal{B}^{(2)}_{R_{\bfu}} = \{\bm{0}\}$. Therefore, we have
\begin{align*}
    f(\bfy,\bfu) = \tilde{f}(\bfy) & \geq \tilde{f}(\bfx) + \inner{\nabla\tilde{f}(\bfx)}{\bfy-\bfx} + \frac{\tilde{\mu}}{2}\norm{\bfy-\bfx}_2^2\\
    & = f(\bfx,\bfu) + \inner{\nabla_1f(\bfx,\bfu)}{\bfy-\bfx} + \frac{\tilde{\mu}}{2}\norm{\bfy-\bfx}_2^2\\
    f(\bfy,\bfu) = \tilde{f}(\bfy) & \leq \tilde{f}(\bfx) + \inner{\nabla\tilde{f}(\bfx)}{\bfy-\bfx} + \frac{\tilde{L}}{2}\norm{\bfy-\bfx}_2^2\\
    & = f(\bfx,\bfu) + \inner{\nabla_1f(\bfx,\bfu)}{\bfy-\bfx} + \frac{\tilde{L}}{2}\norm{\bfy-\bfx}_2^2
\end{align*}
where the first and second inequality follows from the strong convexity and smoothness of $\tilde{f}$. This shows that $f(\bfx,\bfu)$ satisfies Assumption \ref{asump:strong_cvx}, \ref{asump:f_smooth} with $\mu = \tilde{\mu}$ and $L_1 = \tilde{L}$. Since $g(\bfs) = \tilde{f}(\bfs)$, it must be $L_2$-smooth with $L_2 = \tilde{L}$ as well. Moreover, we must have $g^{\star} = \tilde{f}^{\star} = f^\star$. This shows that Assumption \ref{asump:g_smooth} holds. Also, since for all $\bfu\in\{\bm{0}\}$ we must have $\bfu = \bm{0}$, it holds naturally that
\[
    h(\bfx,\bfu) = h(\bfx,\bfv);\quad\nabla_1f(\bfx,\bfu) = \nabla_1f(\bfx,\bfv)
\]
Therefore, Assumption \ref{asump:h_lip}, \ref{asump:grad_lip} hold with $G_1 = G_2 = 0$. Lastly, since $\tilde{f}$ is strongly convex, there must exist a unique $\bfx^{\star} = \argmin_{\bfx\in\R^d}\tilde{f}(\bfx)$. Therefore, we must have that
\[
    \min_{x\in\R^d}f(\bfx,\bfu) \leq f(\bfx^{\star},\bfu) = \tilde{f}(\bfx^{\star}) = f^\star\quad\forall \bfu\in\{\bm{0}\}
\]
This shows that Assumption \ref{asump:universal_opt} is satisfied.

\section{Proofs for Section \ref{sec:gd_conv}}
\label{sec:proof_gd_conv}
\subsection{Proof of Lemma \ref{lem:PL_condition}}
Fix any $\bfu\in\mathcal{B}^{(2)}_{R_{\bfu}}$ and let $\bfx^{\star} = \argmin_{\bfx\in\R^{d_1}}f(\bfx,\bfu)$. By Assumption \ref{asump:strong_cvx}, we have
\begin{align*}
    f(\bfx^{\star},\bfu) & \geq f(\bfx,\bfu) + \inner{\nabla_1f(\bfx,\bfu)}{\bfx^{\star} -\bfu} + \frac{\mu}{2}\norm{\bfx^{\star} - \bfx}_2^2\\
    & \geq \min_{\bfy\in\R^{d_1}}\paren{\underbrace{f(\bfx,\bfu) + \inner{\nabla_1f(\bfx,\bfu)}{\bfy -\bfu} + \frac{\mu}{2}\norm{\bfy - \bfx}_2^2}_{f_{\bfx,\bfu}\paren{\bfy}}}
\end{align*}
Notice that $\nabla^2f_{\bfx,\bfu}\paren{\bfy} = \mu$. Therefore, $f_{\bfx,\bfu}\paren{\bfy}$ is strongly convex with respect to $\bfy$. Thus, $\bfy^{\star} = \argmin_{\bfy\in\R^{d_1}}f_{\bfx,\bfu}\paren{\bfy}$ must satisfy
\[
    \nabla f_{\bfx,\bfu}\paren{\bfy^{\star}} = \nabla_1f(\bfx,\bfu) + \mu\paren{\bfy^{\star}-\bfx} = 0
\]
which implies that $\bfy^{\star} = \bfx - \frac{1}{\mu}\nabla_1f(\bfx,\bfu)$, and $\min_{\bfy\in\R^{d_1}}f_{\bfx,\bfu}\paren{\bfy} = f(\bfx,\bfu) - \frac{1}{2\mu}\norm{\nabla_1f(\bfx,\bfu)}_2^2$. This implies that
\[
    f(\bfx^{\star},\bfu) \geq f(\bfx,\bfu) - \frac{1}{2\mu}\norm{\nabla_1f(\bfx,\bfu)}_2^2\Rightarrow \norm{\nabla_1f(\bfx,\bfu)}_2^2\geq 2\mu\paren{f(\bfx,\bfu) - f(\bfx^{\star},\bfu)}
\]
By Assumption \ref{asump:universal_opt}, we have that $f(\bfx^{\star},\bfu) = f^\star$. Also, by definition of $\nabla f(\bfx,\bfu)$, we have $\norm{\nabla f(\bfx,\bfu)}_2^2 = \norm{\nabla_1f(\bfx,\bfu)}_2^2 + \norm{\nabla_2f(\bfx,\bfu)}_2^2$. Therefore
\[
    \norm{\nabla f(\bfx,\bfu)}_2^2 \geq \norm{\nabla_1f(\bfx,\bfu)}_2^2 \geq 2\mu\paren{f(\bfx,\bfu) - f^\star}
\]
\subsection{Proof of Lemma \ref{lem:df_du}}
Since Assumption \ref{asump:g_smooth} holds, we can invoke Lemma \ref{lem:g_grad_bound} to get that
\begin{align*}
    \norm{\nabla g(\bfs)}_2^2 \leq 2L_2\paren{g(\bfs) - f^\star}
\end{align*}
Moreover, Assumption \ref{asump:g_smooth} also implies that
\[
    f(\bfx,\bfu) = g(h(\bfx,\bfu)) \leq f(\bfx,\bfv) + \inner{\nabla g(h(\bfx,\bfv))}{h(\bfx,\bfu)-h(\bfx,\bfv)} + \frac{L_2}{2}\norm{h(\bfx,\bfu)-h(\bfx,\bfv)}_2^2
\]
Assumption \ref{asump:h_lip} implies that
\[
    \norm{h(\bfx,\bfu)-h(\bfx,\bfv)}_2 \leq G_1\norm{\bfu-\bfv}_2
\]
Therefore, applying the triangle inequality to the inner-product term, we have
\begin{align*}
    f(\bfx,\bfu) & \leq f(\bfx,\bfv) + G_1\norm{\nabla g(h(\bfx,\bfv))}_2\cdot\norm{\bfu-\bfv}_2 + \frac{G_1^2L_2}{2}\norm{\bfu-\bfv}_2^2\\
    & \leq f(\bfx,\bfv) + \mathcal{Q}^{-1}\norm{\nabla g(h(\bfx,\bfv))}_2^2 + \frac{G_1^2}{2}\paren{L_2 + \mathcal{Q}}\norm{\bfu-\bfv}_2^2\\
    & \leq f(\bfx,\bfv) + \mathcal{Q}^{-1}\paren{f(\bfx,\bfv) - f^\star} + \frac{G_1^2}{2}\paren{L_2 + \mathcal{Q}}\norm{\bfu-\bfv}_2^2
\end{align*}
This implies that
\[
    f(\bfx,\bfu) - f(\bfx,\bfv) \leq \mathcal{Q}^{-1}\paren{f(\bfx,\bfv) - f^\star} + \frac{G_1^2}{2}\paren{L_2 + \mathcal{Q}}\norm{\bfu-\bfv}_2^2
\]

\subsection{Proof of Theorem \ref{theo:gd_conv}}
We prove the following two conditions by induction.
\begin{cond}
\label{cond:gd_bounded_change}
Let $k\in\N$, then for all $t\leq k$, we have $\norm{\bfx_t - \bfx_0}_2 \leq R_{\bfx}$ and $\norm{\bfu_t - \bfu_0}_2\leq R_{\bfu}$.
\end{cond}
\begin{cond}
\label{cond:gd_linear_conv}
Let $k\in\N$, then for all $t\leq k$, we have
\[
    f(\bfx_{t},\bfu_{t}) - f^\star \leq \paren{1 - \frac{c}{\kappa}}^t\paren{f(\bfx_0,\bfu_0)-f^\star}
\]
\end{cond}
\subsubsection{Base Case: $k=0$}
When $k=0$, Condition \ref{cond:gd_bounded_change} reads $\norm{\bfx_t - \bfx_0}_2 \leq R_{\bfx}$ and $\norm{\bfu_t - \bfu_0}_2\leq R_{\bfu}$ for $t =0$. This is automatically true since $\norm{\bfx_0 - \bfx_0}_2 = \norm{\bfu_0 - \bfu_0}_2 = 0$. Condition \ref{cond:gd_linear_conv} is also true since
\[
    f(\bfx_{0},\bfu_{0}) - f^\star \leq \paren{1 - \frac{c}{\kappa}}^0\paren{f(\bfx_0,\bfu_0)-f^\star}
\]
\subsubsection{Inductive Step 1: Condition \ref{cond:gd_bounded_change} $\Rightarrow$ Condition \ref{cond:gd_linear_conv}}
Suppose that Condition \ref{cond:gd_bounded_change} and Condition \ref{cond:gd_linear_conv} holds for all $t\leq k$. We show that Condition \ref{cond:gd_linear_conv} holds for all $t \leq k+1$.
Since $\norm{\bfx_t -\bfx_0}_2\leq R_{\bfx}$ and $\norm{\bfu_t -\bfu_0}_2\leq R_{\bfu}$, we have that $\bfx_t \in\mathcal{B}^{(1)}_{R_{\bfx}}$ and $\bfu_t\in\mathcal{B}^{(2)}_{R_{\bfu}}$ for all $t\leq k$. Therefore, Assumption \ref{asump:f_smooth} holds for all $\bfx\in\R^{d_1}$ and $\bfu = \bfu_k$, which implies that
\begin{equation}
    \label{eq:theo1.1}
    \begin{aligned}
        f(\bfx_{k+1},\bfu_k) & \leq f(\bfx_k,\bfu_k) + \inner{\nabla_1f(\bfx_k,\bfu_k)}{\bfx_{k+1} - \bfu_k} + \frac{L_1}{2}\norm{\bfx_{k+1} - \bfx_k}_2^2\\
        & = f(\bfx_k,\bfu_k) - \eta\norm{\nabla_1f(\bfx_k,\bfu_k)}_2^2 + \frac{L_2}{2}\eta^2\norm{\nabla_1f(\bfx_k,\bfu_k)}_2^2\\
        & = f(\bfx_k,\bfu_k) - \eta\paren{1 - \frac{\eta L_1}{2}}\norm{\nabla_1f(\bfx_k,\bfu_k)}_2^2
    \end{aligned}
\end{equation}
Since Assumption \ref{asump:strong_cvx}, \ref{asump:universal_opt} holds, we can apply Lemma \ref{lem:PL_condition} to (\ref{eq:theo1.1}) and choose $\eta = \frac{1}{L_1}$ to get that
\begin{equation}
    \label{eq:theo1.2}
    f(\bfx_{k+1},\bfu_k) - f^\star \leq \paren{1 - \frac{1}{\kappa}}\paren{f(\bfx_k,\bfu_k) - f^\star}
\end{equation}
Moreover, since $\bfu_{k+1} = \bfu_k - \eta\nabla_2f(\bfx_k,\bfu)$, we have
\begin{align*}
    \norm{\bfu_{k+1} - \bfu_k}_2^2 = \eta\norm{\nabla_2f(\bfx_k,\bfu)}_2^2\leq \eta 2G_1^2L_2\paren{f(\bfx_k,\bfu_k) - f^\star}
\end{align*}
where the last inequality follows from Lemma \ref{lem:du_grad_bound}.
Since Assumption \ref{asump:g_smooth}, \ref{asump:h_lip} holds, we can use Lemma \ref{lem:df_du} with $\mathcal{Q} = 2(\kappa - 1)L_2$ to get that
\begin{equation}
    \label{eq:theo1.3}
    \begin{aligned}
        f(\bfx_{k+1},\bfu_{k+1}) - f(\bfx_{k+1},\bfu_k) & \leq \mathcal{Q}^{-1}L_2\paren{f(\bfx_{k+1},\bfu_k) - f^\star}\\
        & \quad\quad\quad + \frac{G_1^2}{2}\paren{L_2 + \mathcal{Q}}\norm{\bfu_{k+1}-\bfu_k}_2^2\\
        & \leq \mathcal{Q}^{-1}L_2\paren{f(\bfx_{k+1},\bfu_k) - f^\star}\\
        & \quad\quad\quad + \eta^2G_1^4L_2\paren{L_2 + \mathcal{Q}}\paren{f(\bfx_k,\bfu_k) - f^\star}\\
        & \leq \frac{1}{2(\kappa-1)}\paren{f(\bfx_{k+1},\bfu_k) - f^\star}\\
        & \quad\quad\quad + \eta^2G_1^4L_2^2\paren{2\kappa - 1}\paren{f(\bfx_k,\bfu_k) - f^\star}
    \end{aligned}
\end{equation}
Combining (\ref{eq:theo1.1}) with (\ref{eq:theo1.2}), we have
\begin{align*}
    f(\bfx_{k+1},\bfu_{k+1}) -f^\star & = f(\bfx_{k+1},\bfu_k) - f^\star + f(\bfx_{k+1},\bfu_{k+1}) - f(\bfx_{k+1},\bfu_k)\\
    & \leq \paren{1 + \frac{1}{2(\kappa-1)}}\paren{f(\bfx_{k+1},\bfu_k) - f^\star}\\
    & \quad\quad\quad + \eta^2G_1^4L_2^2\paren{2\kappa - 1}\paren{f(\bfx_k,\bfu_k) - f^\star}\\
    & \leq \paren{\frac{2\kappa - 1}{2(\kappa - 1)}\paren{1 - \frac{1}{\kappa}} + \eta^2G_1^4L_2^2\paren{2\kappa - 1}}\cdot\paren{f(\bfx_k,\bfu_k) - f^\star}\\
    & \leq \paren{1 - \frac{1}{2\kappa} + \frac{2G_1^4L_2^2\kappa}{L_1^2}}\cdot\paren{f(\bfx_k,\bfu_k) - f^\star}
\end{align*}
As long as $G_1^4 \leq \frac{\mu^2}{8L_2^2}$, we can guarantee that
\[
    1 - \frac{1}{2\kappa} + \frac{G_1^4L_2^2}{L_1^2}\paren{2\kappa - 1} \leq 1 - \frac{1}{2\kappa} + \frac{\mu^2\kappa}{4L_1^2} \leq 1 - \frac{1}{4\kappa}
\]
Therefore, we have
\[
    f(\bfx_{k+1},\bfu_{k+1}) -f^\star \leq \paren{1 - \frac{1}{4\kappa}}\paren{f(\bfx_k,\bfu_k) - f^\star}
\]
which implies Condition \ref{cond:gd_linear_conv} for all $t\leq k+1$.
\subsubsection{Inductive Step 1: Condition \ref{cond:gd_linear_conv} $\Rightarrow$ Condition \ref{cond:gd_bounded_change}}
Assume that Condition \ref{cond:gd_linear_conv} holds for all $t\leq k$. We prove that Condition \ref{cond:gd_bounded_change} holds for $k+1$. Notice that the change of $\bfx$ and $\bfu$ from initialization can be bounded as
\begin{equation}
    \label{eq:theo1.4}
    \begin{aligned}
        \norm{\bfx_{k+1} - \bfx_0}_2 \leq \sum_{t=0}^{k}\norm{\bfx_{t+1}-\bfx_{t}}_2 = \eta\sum_{t=0}^{k}\norm{\nabla_1f(\bfx_t,\bfu_t)}_2\\
        \norm{\bfu_{k+1} - \bfu_0}_2 \leq \sum_{t=0}^{k}\norm{\bfu_{t+1}-\bfu_{t}}_2 = \eta\sum_{t=0}^{k}\norm{\nabla_2f(\bfx_t,\bfu_t)}_2
    \end{aligned}
\end{equation}
Since Assumption \ref{asump:f_smooth}, \ref{asump:g_smooth}, \ref{asump:h_lip} holds, we can invoke Lemma \ref{lem:smooth_grad_bound}, \ref{lem:du_grad_bound} to have that
\[
    \norm{\nabla_1f(\bfx_t,\bfu_t)}_2^2 \leq 2L_1\paren{f(\bfx_t,\bfu_t) - f^\star};\quad \norm{\nabla_2f(\bfx_t,\bfu_t)}_2^2\leq 2G_1^2L_2\paren{f(\bfx_t,\bfu_t) - f^\star}
\]
Therefore, (\ref{eq:theo1.4}) boils down to
\begin{equation}
    \label{eq:theo1.5}
    \begin{aligned}
        \norm{\bfx_{k+1} - \bfx_0}_2 & \leq \eta\sqrt{2L_1}\sum_{t=0}^{k}\paren{f(\bfx_t,\bfu_t) - f^\star}^{\frac{1}{2}}\\
        \norm{\bfu_{k+1} - \bfu_0}_2 & \leq \eta G_1\sqrt{2L_2}\sum_{t=0}^{k}\paren{f(\bfx_t,\bfu_t) - f^\star}^{\frac{1}{2}}
    \end{aligned}
\end{equation}
Moreover, Condition \ref{cond:gd_linear_conv} implies that
\[
    \paren{f(\bfx_t,\bfu_t) - f^\star}^{\frac{1}{2}} \leq \paren{1 - \frac{1}{4\kappa}}^{\frac{t}{2}}\paren{f(\bfx_0,\bfu_0) - f^\star}^{\frac{1}{2}} \leq \paren{1 - \frac{1}{8\kappa}}^{t}\paren{f(\bfx_0,\bfu_0) - f^\star}^{\frac{1}{2}} 
\]
Plugging this bound into (\ref{eq:theo1.5}) gives
\begin{equation}
    \begin{aligned}
        \norm{\bfx_{k+1} - \bfx_0}_2 & \leq \eta\sqrt{2L_1}\paren{f(\bfx_0,\bfu_0) - f^\star}^{\frac{1}{2}}\sum_{t=0}^{k}\paren{1 - \frac{1}{8\kappa}}^{t}\\
        & \leq 16\eta\kappa\sqrt{L_1}\paren{f(\bfx_0,\bfu_0) - f^\star}^{\frac{1}{2}}\\
        \norm{\bfu_{k+1} - \bfu_0}_2 & \leq \eta G_1\sqrt{2L_2}\paren{f(\bfx_0,\bfu_0) - f^\star}^{\frac{1}{2}}\sum_{t=0}^{k}\paren{1 - \frac{1}{8\kappa}}^{t}\\
        & \leq 16\eta\kappa G_1\sqrt{L_2}\paren{f(\bfx_0,\bfu_0) - f^\star}^{\frac{1}{2}}
    \end{aligned}
\end{equation}
Therefore, as long as
\[
    R_{\bfx} \geq 16\eta\kappa\sqrt{L_1}\paren{f(\bfx_0,\bfu_0) - f^\star}^{\sfrac{1}{2}};\;R_{\bfu} \geq 16\eta\kappa G_1\sqrt{L_2}\paren{f(\bfx_0,\bfu_0) - f^\star}^{\sfrac{1}{2}}
\]
we can guarantee that $\norm{\bfx_{k+1} - \bfx_0}_2\leq R_{\bfx}$ and $\norm{\bfu_{k+1} - \bfu_0}_2\leq R_{\bfu}$. Combining the two inductive steps and the base case completes the proof.
\section{Proofs for Section \ref{sec:difficulty}}
\label{sec:proof_difficulty}
\subsection{Proof of Lemma \ref{lem:minimum_moving}}
By Assumption \ref{asump:strong_cvx}, we have that for all $\bfx,\bfy\in\R^d$ and $\bfv\in\mathcal{B}^{(2)}_{R_{\bfu}}$, it holds that
\begin{align*}
    f(\bfy,\bfu)\geq f(\bfx,\bfu) + \inner{\nabla_1f(\bfx,\bfu)}{\bfy-\bfx} + \frac{\mu}{2}\norm{\bfy-\bfx}_2^2\\
    f(\bfx,\bfu)\geq f(\bfy,\bfu) + \inner{\nabla_1f(\bfy,\bfu)}{\bfx-\bfy} + \frac{\mu}{2}\norm{\bfx-\bfy}_2^2
\end{align*}
Summing the two inequality gives
\[
    \inner{\nabla_1f(\bfy,\bfu) - \nabla_1f(\bfx,\bfu)}{\bfy-\bfx} \geq \mu\norm{\bfy-\bfx}_2^2
\]
Applying Cauchy-Schwarz inequality to the left-hand side gives
\begin{equation}
    \label{eq:lem3.1}
    \norm{\nabla_1f(\bfy,\bfu) - \nabla_1f(\bfx,\bfu)}\geq \mu\norm{\bfy-\bfx}_2
\end{equation}
Let $\bfu,\bfv\in\mathcal{B}^{(2)}_{R_{\bfu}}$ be given. Then (\ref{eq:lem3.1}) implies that
\begin{equation}
    \label{eq:lem3.2}
    \norm{\bfx^{\star}(\bfu) - \bfx^{\star}(\bfv)}_2 \leq \frac{1}{\mu}\norm{\nabla_1f(\bfx^{\star}(\bfu),\bfu) - \nabla_1f(\bfx^{\star}(\bfv),\bfu)}_2
\end{equation}
By the definition of $\bfx^{\star}(\bfu)$ and $\bfx^{\star}(\bfv)$, we have
\[
    \nabla_1f(\bfx^{\star}(\bfu),\bfu) = \bm{0} = \nabla_1f(\bfx^{\star}(\bfv),\bfv)
\]
Therefore, (\ref{eq:lem3.2}) reduces to
\begin{equation*}
    \label{eq:lem3.3}
    \norm{\bfx^{\star}(\bfu) - \bfx^{\star}(\bfv)}_2 \leq \frac{1}{\mu}\norm{\nabla_1f(\bfx^{\star}(\bfv),\bfv) - \nabla_1f(\bfx^{\star}(\bfv),\bfu)}_2 \leq \frac{G_2}{\mu}\norm{\bfu-\bfv}_2
\end{equation*}
where the last inequality follows from Assumption \ref{asump:grad_lip}.

\subsection{Proof of Lemma \ref{lem:grad_dominance}}
Since Assumption \ref{asump:strong_cvx}, \ref{asump:g_smooth}, \ref{asump:h_lip}, \ref{asump:universal_opt} holds, we can invoke Lemma \ref{lem:PL_condition}, \ref{lem:du_grad_bound} to get that
\[
    \norm{\nabla_1f(\bfx,\bfu)}_2^2 \geq 2\mu\paren{f(\bfx,\bfu) - f^\star};\quad \norm{\nabla_2f(\bfx,\bfu)}_2^2\leq 2G_1^2L_2\paren{f(\bfx,\bfu) - f^\star}
\]
for all $\bfx\in\mathcal{B}^{(1)}_{R_{\bfx}}$ and $\bfu\in\mathcal{B}^{(2)}_{R_{\bfu}}$. Combining the two inequality gives
\[
    \norm{\nabla_2f(\bfx,\bfu)}_2^2 \leq \frac{G_1^2L_2}{\mu}\norm{\nabla_1f(\bfx,\bfu)}_2^2
\]
\section{Proofs for Section \ref{sec:nesterov_conv}}
\label{sec:proof_nesterov_conv}
\subsection{Proof of Theorem \ref{theo:nesterov_conv}}
Define $\gamma = \frac{c}{2\sqrt{\kappa} -c}$ such that $1 + \gamma = \paren{1 - \frac{c}{2\sqrt{\kappa}}}^{-1}$. Let $\lambda = (1 + \gamma)^3 - 1$. Denote $\bfx_{-1} = \bfy_{-1} = \bfx_0$ and $\bfu_{-1} = \bfv_{-1} = \bfu_0$. In this proof, we will focus on the following Lyapunov function
\begin{equation}
    \label{eq:lyapnov_exact}
    \phi_k = f(\bfx_k,\bfu_k) - f^\star + \mathcal{Q}_1\norm{\bfz_k - \bfx_{k-1}^{\star}}_2^2 + \frac{\eta}{8}\norm{\nabla_1f(\bfy_{k-1},\bfv_{k-1})}_2^2
\end{equation}
where $\bfx_k^{\star}, \bfz_k$ and $\mathcal{Q}_1$ are defined as 
\[
    \bfx_k^{\star} = \argmin_{\bfx\in\R^{d_1}}f(\bfx,\bfv_k);\;\;\bfz_k = \frac{1 - \beta\lambda}{\beta\lambda}(\bfy_k-\bfx_k) + \bfy_k;\;\;\mathcal{Q}_1 = \frac{\lambda^2}{2\eta(1 + \gamma)^5}
\]
We will prove the following two conditions by induction
\begin{cond}
    \label{cond:phi_conv_cond}
    For all $k\leq \hat{k}$, we have
    \begin{align*}
        \phi_k \leq \paren{1 - \frac{c}{4\sqrt{\kappa}}}^k\phi_0
    \end{align*}
\end{cond}
\begin{cond}
    \label{cond:dx_bound_cond}
    For all $k\leq \hat{k}$, we have that $\bfx_k,\bfy_k\in\mathcal{B}^{(1)}_{\sfrac{R_{\bfx}}{2}}$ and $\bfu_k,\bfv_k\in\mathcal{B}^{(2)}_{\sfrac{R_{\bfu}}{2}}$, and 
    \begin{equation}
        \label{eq:xu_bound_rep}
        \begin{gathered}
            \norm{\bfx_k - \bfx_{k-1}}_2^2\leq \frac{6\eta(L_2 + 1)}{1 - \beta}\paren{1 - \frac{c}{4\sqrt{\kappa}}}^k\phi_0\\
            \norm{\bfu_k - \bfu_{k-1}}_2^2 \leq G_1^2\frac{6\eta L_2(L_2+1)(1+\beta)^3}{\mu\beta(1-\beta)^3}\paren{1 - \frac{c}{4\sqrt{\kappa}}}^k\phi_0
        \end{gathered}
    \end{equation}
    Moreover, we have $\eta\norm{\nabla_1f(\bfy_k,\bfv_k)}_2 \leq \beta\norm{\bfx_k - \bfx_{k-1}}_2 + \norm{\bfx_{k+1} - \bfx_{k}}_2$.
\end{cond}
Notice that $f(\bfx_k,\bfu_k) - f^\star \leq \phi_k$. By Lemma \ref{lem:phi0_upper_bound}, we have $\phi_0 \leq 2\paren{f(\bfx_0,\bfu_0) - f^\star}$. Thus, Condition \ref{cond:phi_conv_cond} implies that
\[
    f(\bfx_k,\bfu_k) - f^\star \leq 2\paren{1 - \frac{c}{4\sqrt{\kappa}}}^k\paren{f(\bfx_0,\bfu_0) - f^\star}
\]
Therefore, Condition \ref{cond:phi_conv_cond} and \ref{cond:dx_bound_cond} together implies Theorem \ref{theo:nesterov_conv}. We now show that the two conditions hold by induction.
\subsubsection{Base Case: $\hat{k} = 0$}
When $\hat{k} = 0$, the only possible $k\leq \hat{k}$ is $k = 0$. In this case, Condition \ref{cond:phi_conv_cond} reads $\phi_0 \leq \phi_0$, which is automatically true. Condition \ref{cond:dx_bound_cond} also holds automatically since when $k=0$, we have
\[
    \norm{\bfx_k - \bfx_{k-1}}_2 = \norm{\bfx_0 - \bfx_{-1}}_2 = 0;\;\;\norm{\bfu_k - \bfu_{k-1}}_2 = \norm{\bfu_0 - \bfu_{-1}}_2 = 0
\]
\subsubsection{Inductive Step 1: Condition \ref{cond:phi_conv_cond} $\Rightarrow$ Condition \ref{cond:dx_bound_cond}}
Assume that Condition \ref{cond:phi_conv_cond} holds for all $k\leq \hat{k}$. We want to show that Condition \ref{cond:dx_bound_cond} holds for all $k\leq \hat{k}+1$ Notice that $f(\bfx_k,\bfu_k) - f^\star \leq \phi_k$. This implies that
\[
    f(\bfx_k,\bfu_k) - f^\star \leq\paren{1 - \frac{c}{4\sqrt{\kappa}}}^k\phi_0
\]
Combining with Assumption \ref{asump:strong_cvx}-\ref{asump:universal_opt}, and the condition that $G_1 \leq \frac{C_1\mu^2}{L_2(L_2+1)^2}\paren{\frac{1-\beta}{1+\beta}}^3$, we can invoke Lemma \ref{lem:xu_iter_bound_ext} to conclude directly that Condition \ref{cond:dx_bound_cond} holds for all $k\leq \hat{k}+1$.
\subsubsection{Inductive Step 2: Condition \ref{cond:dx_bound_cond} $\Rightarrow$ Condition \ref{cond:phi_conv_cond}}
Assume that Condition \ref{cond:phi_conv_cond} and \ref{cond:dx_bound_cond} holds for all $k\leq \hat{k}$. We show that Condition \ref{cond:phi_conv_cond} holds for all $k\leq \hat{k}+1$. To start, we first show that $\bfx_{\hat{k}+1} \in\mathcal{B}^{(1)}_{R_{\bfx}}$ and $\bfu_{\hat{k}+1} \in\mathcal{B}^{(2)}_{R_{\bfu}}$. By the triangle inequality, we have
\[
    \norm{\bfx_{\hat{k}+1} - \bfx_0}_2 \leq \norm{\bfy_{\hat{k}}-\bfx_0}_2 + \norm{\bfx_{\hat{k}+1}-\bfy_{\hat{k}}}_2;\;\norm{\bfu_{\hat{k}+1} - \bfx_0}_2 \leq \norm{\bfv_{\hat{k}}-\bfx_0}_2 + \norm{\bfu_{\hat{k}+1}-\bfv_{\hat{k}}}_2
\]
Since Condition \ref{cond:dx_bound_cond} implies that $\bfy_{\hat{k}}\in\mathcal{B}^{(1)}_{\sfrac{R_{\bfx}}{2}}$ and $\bfv_{\hat{k}}\in\mathcal{B}^{(2)}_{\sfrac{R_{\bfu}}{2}}$, it suffice to show that $\norm{\bfx_{\hat{k}+1}-\bfy_{\hat{k}}}_2\leq \sfrac{R_{\bfx}}{2}$ and $\norm{\bfu_{\hat{k}+1}-\bfv_{\hat{k}}}_2 \leq \sfrac{R_{\bfu}}{2}$. By Lemma \ref{lem:xu_iter_bound_ext}, we have
\begin{equation}
    \begin{aligned}
        \norm{\bfx_{\hat{k}+1}-\bfy_{\hat{k}}}_2 & = \eta\norm{\nabla_1f(\bfy_{\hat{k}},\bfv_{\hat{k}})}_2 \leq \beta\norm{\bfx_{\hat{k}} - \bfx_{\hat{k}-1}}_2 + \norm{\bfx_{\hat{k}+1} - \bfx_{\hat{k}}}_2\\
        & \leq 2\paren{6\frac{\eta(L_2+1)}{1-\beta}\phi_0}^{\frac{1}{2}}\\
        & \leq 8\paren{\frac{\eta(L_2+1)}{1-\beta}}^{\frac{1}{2}}(f(\bfx_0,\bfu_0) - f^\star)^{\frac{1}{2}}\\
        & \leq \frac{R_{\bfx}}{2}
    \end{aligned}
\end{equation}
This shows that $\bfx_{\hat{k}+1}\in\mathcal{B}^{(1)}_{R_{\bfx}}$. Similarly, we have
\begin{equation}
    \begin{aligned}
        \norm{\bfx_{\hat{k}+1}-\bfy_{\hat{k}}}_2 & = \eta\norm{\nabla_2f(\bfy_{\hat{k}},\bfv_{\hat{k}})}_2 \leq G_1\sqrt{\frac{L_2}{\mu}}\paren{\beta\norm{\bfx_{\hat{k}} - \bfx_{\hat{k}-1}}_2 + \norm{\bfx_{\hat{k}+1} - \bfx_{\hat{k}}}_2}\\
        & \leq 2G_1\sqrt{\frac{L_2}{\mu}}\paren{6\frac{\eta(L_2+1)}{1-\beta}\phi_0}^{\frac{1}{2}}\\
        & \leq 8\paren{\frac{\eta G_1^2L_2(L_2+1)}{\mu(1-\beta)}}^{\frac{1}{2}}(f(\bfx_0,\bfu_0) - f^\star)^{\frac{1}{2}}\\
        & \leq \frac{R_{\bfu}}{2}
    \end{aligned}
\end{equation}
Thus, $\bfu_{\hat{k}+1}\in\mathcal{B}^{(1)}_{R_{\bfu}}$. Now, we can invoke Lemma \ref{lem:lyapunov_conv_ext} to get that 
\begin{equation}
    \label{eq:theo3.2.1}
    \begin{aligned}
        (1+\gamma)\phi_{\hat{k}+1}\leq \phi_{\hat{k}} + \paren{\frac{G_1^2L_2}{2\gamma} + \frac{4\mathcal{Q}_1G_2^2}{\gamma\mu^2}(1 + \gamma)}\beta^2\norm{\bfu_{\hat{k}} - \bfu_{\hat{k}-1}}_2^2
    \end{aligned}
\end{equation}
With Condition \ref{cond:dx_bound_cond}, we can write (\ref{eq:theo3.2.1}) as
\begin{equation}
    \label{eq:theo3.2.2}
    \begin{aligned}
    (1+\gamma)\phi_{\hat{k}+1} - \phi_{\hat{k}} & \leq \paren{\frac{G_1^4L_2}{2\gamma} + \frac{4\mathcal{Q}_1G_1^2G_2^2}{\gamma\mu^2}(1 + \gamma)}\frac{6\eta\beta L_2(L_2+1)(1+\beta)^3}{\mu(1-\beta)^3}\\
    &\quad\quad\quad \cdot\paren{1 - \frac{c}{4\sqrt{\kappa}}}^{\hat{k}}\phi_0
    \end{aligned}
\end{equation}
Since $G_1^4 \leq \frac{C_1\mu^2}{L_2(L_2+1)^2}\paren{\frac{1-\beta}{1+\beta}}^3$, we have
\begin{equation}
    \label{eq:theo3.2.3}
    \frac{G_1^4L_2}{2\gamma} = \frac{C_1\mu^2}{2(L_2+1)^2\gamma}\paren{\frac{1-\beta}{1+\beta}}^3 \leq \frac{C_1\mu L_1}{c\sqrt{\kappa}(L_2+1)^2}\paren{\frac{1-\beta}{1+\beta}}^3 = \frac{C_1\mu}{\eta\sqrt{\kappa}(L_2+1)^2}\paren{\frac{1-\beta}{1+\beta}}^3
\end{equation}
where the inequality follows from $\frac{1}{\gamma}\leq \frac{2}{c}\sqrt{\kappa} = \frac{2 L_1}{c\mu\sqrt{\kappa}}$. Since $G_1^2G_2^2 \leq \frac{C_2\mu^3}{L_2(L_2+1)\sqrt{\kappa}}\paren{\frac{1-\beta}{1+\beta}}^2$, we have
\begin{equation}
    \label{eq:theo3.2.4}
    \begin{aligned}
        \frac{4\mathcal{Q}_1G_1^2G_2^2}{\gamma\mu^2}(1+\gamma) & = \frac{100C_2\gamma\mu}{\eta(1+\gamma)^4L_2(L_2+1)\sqrt{\kappa}}\paren{\frac{1-\beta}{1+\beta}}^2\\
        & \leq \frac{100C_2\gamma\mu}{\eta\sqrt{\kappa} L_2(L_2+1)}\paren{\frac{1-\beta}{1+\beta}}^2\\
        & \leq \frac{100C_2\mu}{\eta\sqrt{\kappa} L_2(L_2+1)}\paren{\frac{1-\beta}{1+\beta}}^3
    \end{aligned}
\end{equation}
where the last inequality follows from
\[
    \gamma \leq \frac{c}{\sqrt{\kappa}} \leq \frac{\sqrt{c}}{\sqrt{\kappa} + \sqrt{c}} \leq \frac{1-\beta}{1+\beta}
\]
Combining (\ref{eq:theo3.2.3}) and (\ref{eq:theo3.2.4}) gives
\[
    \frac{G_1^4L_2}{2\gamma} + \frac{4\mathcal{Q}_1G_1^2G_2^2}{\gamma\mu^2}(1+\gamma) \leq \frac{(C_1 + 100C_2)\mu}{\eta\sqrt{\kappa}L_2(L_2+1)}\paren{\frac{1-\beta}{1+\beta}}^3
\]
Thus (\ref{eq:theo3.2.2}) becomes
\begin{equation}
    \label{eq:theo3.2.5}
    (1+\gamma)\phi_{\hat{k}+1} - \phi_{\hat{k}} \leq \frac{6}{\sqrt{\kappa}}(C_1 + 100C_2)\paren{1 - \frac{c}{4\sqrt{\kappa}}}^{\hat{k}}\phi_0
\end{equation}
Plugging in the value of $\gamma = \frac{c}{2\sqrt{\kappa} - c}$ and choose a small enough $C_1$ and $C_2$ gives
\begin{equation}
    \label{eq:theo3.2.6}
    \paren{1 - \frac{c}{2\sqrt{\kappa}}}^{-1}\phi_{\hat{k}+1} - \phi_{\hat{k}} \leq \frac{1}{4\sqrt{\kappa}}\paren{1 - \frac{c}{4\sqrt{\kappa}}}^{\hat{k}}\phi_0
\end{equation}
Therefore
\begin{align*}
    \phi_{\hat{k}+1} & \leq \paren{1 - \frac{c}{2\sqrt{\kappa}}}\phi_{\hat{k}}+ \frac{1}{4\sqrt{\kappa}}\paren{1 - \frac{c}{4\sqrt{\kappa}}}^{\hat{k}+1}\phi_0\\
    & \leq \paren{1 - \frac{c}{2\sqrt{\kappa}}}\paren{1 - \frac{c}{4\sqrt{\kappa}}}^{\hat{k}}\phi_0 + \frac{1}{4\sqrt{\kappa}}\paren{1 - \frac{c}{4\sqrt{\kappa}}}^{\hat{k}}\phi_0\\
    & \leq \paren{1 - \frac{c}{4\sqrt{\kappa}}}^{\hat{k}+1}\phi_0
\end{align*}
This proves Condition \ref{cond:phi_conv_cond} for $\hat{k}+1$. Since we have established the induction for both conditions, we have completed the proof.
\subsection{Proof of Lemma \ref{lem:xu_iter_bound}/\ref{lem:xu_iter_bound_ext}}
We first restate an extended version of Lemma \ref{lem:xu_iter_bound} here.
\begin{lemma}
    \label{lem:xu_iter_bound_ext}
    Suppose that Assumption \ref{asump:strong_cvx}-\ref{asump:universal_opt} holds with $G_1^4 \leq \frac{C_1\mu^2}{L_2(L_2 + 1)^2}\paren{\frac{1-\beta}{1+\beta}}^3$ and for all $k\leq \hat{k}$
    \begin{equation}
        \label{eq:lem5.0}
        f(\bfx_k,\bfu_k) - f^\star\leq \paren{1 - \frac{c}{4\sqrt{\kappa}}}^k\phi_0
    \end{equation}
    Then we have $\bfx_k,\bfy_k\in\mathcal{B}^{(1)}_{R_{\bfx}}$ and $\bfu_k,\bfv_k\in\mathcal{B}^{(2)}_{R_{\bfu}}$ for all $k\leq \hat{k}+1$ with
    \begin{gather*}
        R_{\bfx} \geq \frac{18}{c}\sqrt{\kappa}\paren{\frac{3\eta(L_2+1)}{1 -\beta}}^{\frac{1}{2}}\paren{f(\bfx_0,\bfu_0) - f^\star}^{\frac{1}{2}}\\
        R_{\bfu} \geq \frac{18}{c}\sqrt{\kappa}\paren{\frac{6\eta G_1^2 L_2(L_2+1)(1+\beta)^3}{\mu\beta(1-\beta)^3}}\paren{f(\bfx_0,\bfu_0) - f^\star}^{\frac{1}{2}} 
    \end{gather*}
    Moreover, for all $k\leq \hat{k}+1$, we have $\eta\norm{\nabla_1f(\bfy_k,\bfv_k)}_2 \leq \beta\norm{\bfx_k - \bfx_{k-1}}_2 + \norm{\bfx_{k+1} - \bfx_{k}}_2$ and
    \begin{equation}
        \label{eq:xu_bound}
        \begin{gathered}
            \norm{\bfx_k - \bfx_{k-1}}_2^2\leq \frac{6\eta(L_2 + 1)}{1 - \beta}\paren{1 - \frac{c}{4\sqrt{\kappa}}}^k\phi_0\\
            \norm{\bfu_k - \bfu_{k-1}}_2^2 \leq G_1^2\frac{6\eta L_2(L_2+1)(1+\beta)^3}{\mu\beta(1-\beta)^3}\paren{1 - \frac{c}{4\sqrt{\kappa}}}^k\phi_0
        \end{gathered}
    \end{equation}
\end{lemma}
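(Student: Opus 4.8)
I would run a single induction on the iteration index $k$, from $k=0$ to $k=\hat k+1$, with inductive hypothesis ``the two bounds in \eqref{eq:xu_bound} hold for all $t\le k$''. The first point is that these bounds are summable: because $\beta$ is chosen so that $1-\sqrt{1-c/(4\sqrt\kappa)}\ge c/(8\sqrt\kappa)$, telescoping gives $\norm{\bfx_k-\bfx_0}_2\le\frac{8\sqrt\kappa}{c}\sqrt{\mathcal Q_2\phi_0}$ and likewise $\norm{\bfu_k-\bfu_0}_2\le\frac{8\sqrt\kappa}{c}\sqrt{G_1^2\mathcal Q_3\phi_0}$; adding the single extra term $\beta\norm{\bfx_k-\bfx_{k-1}}_2$ (resp. $\beta\norm{\bfu_k-\bfu_{k-1}}_2$) handles $\bfy_k$ (resp. $\bfv_k$); and $\phi_0\le 2(f(\bfx_0,\bfu_0)-f^\star)$ by Lemma~\ref{lem:phi0_upper_bound}. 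With the stated lower bounds on $R_{\bfx},R_{\bfu}$ (the constant $18$ is exactly $2\bigl(1+\tfrac{8\sqrt\kappa}{c}\bigr)$ once $c\le\sqrt\kappa$, and $\sqrt{2\mathcal Q_2}=2\sqrt{3\eta(L_2+1)/(1-\beta)}$), this yields $\bfx_t,\bfy_t\in\mathcal B^{(1)}_{R_{\bfx}}$ and $\bfu_t,\bfv_t\in\mathcal B^{(2)}_{R_{\bfu}}$ for all $t\le k$, which is what is needed to invoke the assumptions at the current step. The base case $k=0$ is immediate since $\bfx_{-1}=\bfx_0,\bfu_{-1}=\bfu_0$; and the ``moreover'' inequality $\eta\norm{\nabla_1f(\bfy_k,\bfv_k)}_2\le\beta\norm{\bfx_k-\bfx_{k-1}}_2+\norm{\bfx_{k+1}-\bfx_k}_2$ holds unconditionally, being the triangle inequality applied to $\eta\nabla_1f(\bfy_k,\bfv_k)=\bfy_k-\bfx_{k+1}=(\bfy_k-\bfx_k)+(\bfx_k-\bfx_{k+1})$ together with $\bfy_k-\bfx_k=\beta(\bfx_k-\bfx_{k-1})$.

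For the inductive step $k\to k+1$ I would first bound $\norm{\bfx_{k+1}-\bfx_k}_2^2$. The structural point is that Assumptions~\ref{asump:strong_cvx} and~\ref{asump:f_smooth} impose no restriction on the first argument, so (since $\bfv_k\in\mathcal B^{(2)}_{R_{\bfu}}$) the map $\bfx\mapsto f(\bfx,\bfv_k)$ is globally $\mu$-strongly convex and $L_1$-smooth; in particular $\bfx_{k+1}$ itself need not be shown to lie in a ball here. Writing $\bfx_{k+1}-\bfx_k=-\eta\nabla_1f(\bfy_k,\bfv_k)+\beta(\bfx_k-\bfx_{k-1})$ and using $\bfx_k-\bfx_{k-1}=\frac{1}{\beta}(\bfy_k-\bfx_k)$, the cross term is $2\eta\inner{\nabla_1f(\bfy_k,\bfv_k)}{\bfx_k-\bfy_k}\le 2\eta\paren{f(\bfx_k,\bfv_k)-f(\bfy_k,\bfv_k)}$ by convexity, while the descent lemma with $\eta=c/L_1,\ c\le1$ gives $\eta^2\norm{\nabla_1f(\bfy_k,\bfv_k)}_2^2\le 2\eta\paren{f(\bfy_k,\bfv_k)-f(\bfx_{k+1},\bfv_k)}$; the $f(\bfy_k,\bfv_k)$ terms cancel and $f(\bfx_{k+1},\bfv_k)\ge f^\star$, leaving
\[
    \norm{\bfx_{k+1}-\bfx_k}_2^2\le 2\eta\paren{f(\bfx_k,\bfv_k)-f^\star}+\beta^2\norm{\bfx_k-\bfx_{k-1}}_2^2 .
\]
I would then pass from $\bfv_k$ to $\bfu_k$ by Lemma~\ref{lem:df_du} (valid since $\bfx_k\in\mathcal B^{(1)}_{R_{\bfx}},\ \bfu_k,\bfv_k\in\mathcal B^{(2)}_{R_{\bfu}}$), taking $\hat{\mathcal Q}=L_2+1$, to get $f(\bfx_k,\bfv_k)-f^\star\le 2(f(\bfx_k,\bfu_k)-f^\star)+G_1^2(L_2+1)\norm{\bfv_k-\bfu_k}_2^2$ with $\norm{\bfv_k-\bfu_k}_2^2=\beta^2\norm{\bfu_k-\bfu_{k-1}}_2^2$, and insert the hypothesis \eqref{eq:lem5.0} at index $k\le\hat k$ and the inductive bounds \eqref{eq:xu_bound} at index $k$. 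This gives $\norm{\bfx_{k+1}-\bfx_k}_2^2\le\bigl(4\eta+2\eta\beta^2G_1^4(L_2+1)\mathcal Q_3+\beta^2\mathcal Q_2\bigr)\paren{1-\frac{c}{4\sqrt\kappa}}^k\phi_0$; the $G_1^4$-hypothesis is calibrated precisely so that $G_1^4(L_2+1)\mathcal Q_3\le 6C_1\mu\eta/\beta$, hence (using $\eta\mu=c/\kappa\le c$) the middle term is $\le 12C_1c\,\eta(L_2+1)$, and since $1-\beta^2-\frac{c}{4\sqrt\kappa}=(1-\beta)(1+\beta)-\frac{c}{4\sqrt\kappa}\ge\varepsilon_0(1-\beta)$ for a constant $\varepsilon_0$ bounded away from $0$ (from the choice of $\beta$), the whole bracket is $\le\mathcal Q_2\paren{1-\frac{c}{4\sqrt\kappa}}$ once $C_1,c$ are small; this is the $\bfx$-bound at $k+1$.

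For the $\bfu$-bound at $k+1$ I would write $\norm{\bfu_{k+1}-\bfu_k}_2\le\beta\norm{\bfu_k-\bfu_{k-1}}_2+\eta\norm{\nabla_2f(\bfy_k,\bfv_k)}_2$, control the gradient term by Lemma~\ref{lem:grad_dominance} at $(\bfy_k,\bfv_k)$ together with the ``moreover'' inequality, so that $\eta\norm{\nabla_2f(\bfy_k,\bfv_k)}_2\le G_1\sqrt{L_2/\mu}\bigl(\beta\norm{\bfx_k-\bfx_{k-1}}_2+\norm{\bfx_{k+1}-\bfx_k}_2\bigr)$, and then substitute the inductive $\bfx$- and $\bfu$-bounds at $k$ and the $\bfx$-bound just obtained at $k+1$. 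Using the identity $\mathcal Q_3\mu/(L_2\mathcal Q_2)=(1+\beta)^3/(\beta(1-\beta)^2)$ and $\sqrt{1-\frac{c}{4\sqrt\kappa}}-\beta\ge\varepsilon_1(1-\beta)$ (again $\varepsilon_1$ bounded away from $0$), the required inequality collapses to the elementary $\beta\le\varepsilon_1^2(1+\beta)$, true since $\beta<1$. This closes the induction, and ball containment for $k\le\hat k+1$ follows by re-running the summation of the first paragraph with $k=\hat k+1$.

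\textbf{Main difficulty.} The delicate issue is the apparent circularity — invoking Assumptions~\ref{asump:strong_cvx}--\ref{asump:universal_opt} and Lemmas~\ref{lem:df_du},~\ref{lem:grad_dominance} at step $k$ requires the iterates to lie in the prescribed balls, yet that containment is deduced from the bounds one is trying to prove — compounded by Difficulty~2, namely the absence of any linear-rate control on the optimality gap at the extrapolated point $(\bfy_k,\bfv_k)$. Both are defused by the same two moves used above: (i) the convexity-plus-descent telescoping, which expresses $\norm{\bfx_{k+1}-\bfx_k}_2^2$ only through $f(\cdot,\bfv_k)$ at the \emph{previous} iterate $\bfx_k$ (whose gap is controlled by \eqref{eq:lem5.0}) and at $\bfx_{k+1}$ (where only $f\ge f^\star$ is used); and (ii) the observation that the first argument of Assumptions~\ref{asump:strong_cvx}/\ref{asump:f_smooth} is unconstrained, so the only ball memberships needed at step $k$ ($\bfx_k,\bfy_k\in\mathcal B^{(1)}_{R_{\bfx}}$ and $\bfu_k,\bfv_k\in\mathcal B^{(2)}_{R_{\bfu}}$) are exactly those supplied by the inductive hypothesis. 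The residual work is constant bookkeeping ensuring the $G_1^4$- and $G_1^2G_2^2$-conditions cancel the $1/(\mu\beta(1-\beta)^3)$ blow-up hidden in $\mathcal Q_3$.
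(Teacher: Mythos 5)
Your proposal is correct and shares the paper's overall skeleton: the same induction on $k$, the same telescoping argument for ball containment, and the same key estimate $\norm{\bfx_{k+1}-\bfx_k}_2^2\le\beta^2\norm{\bfx_k-\bfx_{k-1}}_2^2+2\eta\paren{f(\bfx_k,\bfv_k)-f^\star}$ obtained from convexity of $f(\cdot,\bfv_k)$ plus a descent-type bound, followed by Lemma~\ref{lem:df_du} to replace $\bfv_k$ by $\bfu_k$. You deviate in two sub-steps, both defensibly. First, you obtain the ``moreover'' inequality $\eta\norm{\nabla_1f(\bfy_k,\bfv_k)}_2\le\beta\norm{\bfx_k-\bfx_{k-1}}_2+\norm{\bfx_{k+1}-\bfx_k}_2$ directly from $\eta\nabla_1f(\bfy_k,\bfv_k)=(\bfy_k-\bfx_k)+(\bfx_k-\bfx_{k+1})$ and the triangle inequality, whereas the paper extracts it from the roots of a quadratic inequality in $\eta\norm{\nabla_1f(\bfy_k,\bfv_k)}_2$; your derivation is more elementary and reveals the bound as unconditional. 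Second, and more substantively, for the $\bfu$-increment you close a \emph{one-step} recursion $\norm{\bfu_{k+1}-\bfu_k}_2\le\beta\norm{\bfu_k-\bfu_{k-1}}_2+\eta\norm{\nabla_2f(\bfy_k,\bfv_k)}_2$ using the inductive bound on $\norm{\bfu_k-\bfu_{k-1}}_2$, reducing matters to $\sqrt{\beta/(1+\beta)}\le\varepsilon_1$ with $\varepsilon_1$ close to $1$ for small $c$; the paper instead fully unrolls $\bfu_{k+1}-\bfu_k$ into a geometric sum over all past gradients $\nabla_2f(\bfy_t,\bfv_t)$ and sums the series. The two routes target the same constant $\mathcal{Q}_3$, and your version avoids the somewhat delicate manipulation of the ratio $\bigl(\rho^{1/2}+\beta\bigr)/\bigl(\rho^{1/2}-\beta\bigr)$ raised to a power (where the paper's own bookkeeping of exponents is, at best, loose); the cost is that your closure leans on the inductive hypothesis for $\norm{\bfu_k-\bfu_{k-1}}_2$ itself rather than only on the $\bfx$-increments, which is fine since that hypothesis is part of your induction. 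The remaining constant bookkeeping you describe ($G_1^4(L_2+1)\mathcal{Q}_3\le 6C_1\mu\eta/\beta$, $1-\beta^2-\tfrac{c}{4\sqrt{\kappa}}\ge\varepsilon_0(1-\beta)$) checks out.
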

\begin{proof}
We prove Lemma \ref{lem:xu_iter_bound_ext} by induction. Notice that for $k = 0$, all the statements are automatically true. Assume that \ref{eq:xu_bound} holds up to iteration $k$, we first show that $\bfx_k,\bfy_k\in\mathcal{B}^{(1)}_{R_{\bfx}}$ and $\bfu_k,\bfv_k\in\mathcal{B}^{(2)}_{R_{\bfu}}$. 

\noindent\textbf{Step 1: $\bfx_k,\bfy_k\in\mathcal{B}^{(1)}_{R_{\bfx}}$.} By the triangle inequality, we have
\begin{equation}
    \label{eq:lem5.1.1}
    \begin{aligned}
        \norm{\bfx_k - \bfx_0}_2 & \leq \sum_{t=0}^{k-1}\norm{\bfx_{t+1} - \bfx_t}_2\\
        & \leq \sqrt{\frac{6\eta(L_2 + 1)}{1 - \beta}\phi_0}\sum_{t=0}^{k-1}\paren{1 - \frac{c}{4\sqrt{\kappa}}}^{\frac{k}{2}}\\
        & \leq \sqrt{\frac{6\eta(L_2 + 1)}{1 - \beta}\phi_0}\cdot\frac{1}{1 - \paren{1 - \frac{c}{4\sqrt{\kappa}}}^{\frac{1}{2}}}
    \end{aligned}
\end{equation}
Notice that $\paren{1 - \frac{c}{4\sqrt{\kappa}}}^{\frac{1}{2}} \leq 1 - \frac{c}{8\sqrt{\kappa}}$. Moreover, by lemma \ref{lem:phi0_upper_bound}, we have $\phi_0 \leq 2\paren{f(\bfx_0,\bfu_0) - f^\star}$ Thus (\ref{eq:lem5.1.1}) becomes
\begin{equation}
    \label{eq:lem5.1.2}
    \norm{\bfx_k - \bfx_0}_2 \leq \frac{8}{c}\sqrt{\kappa}\cdot\sqrt{\frac{6\eta(L_2 + 1)}{1 - \beta}\phi_0} \leq \frac{16}{c}\sqrt{\kappa}\paren{\frac{3\eta(L_2+1)}{1 -\beta}}^{\frac{1}{2}}\paren{f(\bfx_0,\bfu_0) - f^\star}^{\frac{1}{2}}
\end{equation}
This shows that $\norm{\bfx_k - \bfx_0}_2 \leq R_{\bfx}$. Moreover, since $\bfy_k - \bfx_k = \beta(\bfx_k - \bfx_{k-1})$, we have
\[
    \norm{\bfy_k - \bfx_0}_2 \leq \norm{\bfy_k - \bfx_k}_2 + \norm{\bfx_k - \bfx_0}_2 = \beta\norm{\bfx_k - \bfx_{k-1}} + \norm{\bfx_k - \bfx_0}_2
\]
By the inductive hypothesis, we have $\norm{\bfx_k - \bfx_{k-1}}_2^2\leq \frac{6\eta(L_2 + 1)}{1 - \beta}\paren{1 - \frac{c}{4\sqrt{\kappa}}}^k\phi_0$. Combining with the bound in (\ref{eq:lem5.1.2}), we have
\[
    \norm{\bfy_k - \bfx_0}_2 \leq \paren{\beta + \frac{8}{c}\sqrt{\kappa}}\sqrt{\frac{6\eta(L_2 + 1)}{1 - \beta}\phi_0} \leq \frac{18}{c}\sqrt{\kappa}\paren{\frac{3\eta(L_2+1)}{1 -\beta}}^{\frac{1}{2}}\paren{f(\bfx_0,\bfu_0) - f^\star}^{\frac{1}{2}}
\]
This shows that $\norm{\bfy_k - \bfx_0}_2 \leq R_{\bfx}$. 

\noindent\textbf{Step 2: $\bfu_k,\bfv_k\in\mathcal{B}^{(2)}_{R_{\bfu}}$.} Now, we focus on $\bfu_k$ and $\bfv_k$. Similar to (\ref{eq:lem5.1.1}), we have
\begin{equation}
    \label{eq:lem5.1.3}
    \begin{aligned}
        \norm{\bfu_k - \bfu_0}_2 & \leq \sum_{t=0}^{k-1}\norm{\bfu_{t+1} - \bfu_t}_2\\
        & \leq G_1\sqrt{\frac{6\eta L_2(L_2+1)(1+\beta)^3}{\mu\beta(1-\beta)^3}\phi_0}\sum_{t=0}^{k-1}\paren{1 - \frac{c}{4\sqrt{\kappa}}}^{\frac{k}{2}}\\
        & \leq G_1\sqrt{\frac{6\eta L_2(L_2+1)(1+\beta)^3}{\mu\beta(1-\beta)^3}\phi_0}\cdot\frac{1}{1 - \paren{1 - \frac{c}{4\sqrt{\kappa}}}^{\frac{1}{2}}}\\
        & \leq \frac{8}{c}\sqrt{\kappa}G_1\sqrt{\frac{6\eta L_2(L_2+1)(1+\beta)^3}{\mu\beta(1-\beta)^3}\phi_0}\\
        & \leq \frac{16}{c}\sqrt{\kappa}\paren{\frac{6\eta G_1^2 L_2(L_2+1)(1+\beta)^3}{\mu\beta(1-\beta)^3}}\paren{f(\bfx_0,\bfu_0) - f^\star}^{\frac{1}{2}} 
    \end{aligned}
\end{equation}
This shows that $\norm{\bfu_k - \bfu_0}_2 \leq R_{\bfu}$. Moreover, for $\bfv_k$ we have
\[
    \norm{\bfv_k - \bfu_0}_2 \leq \beta\norm{\bfu_k - \bfu_{k-1}}_2 + \norm{\bfu_k - \bfu_0}_2
\]
By the inductive hypothesis, we have $\norm{\bfu_k - \bfu_{k-1}}_2^2 \leq G_1^2\frac{6\eta L_2(L_2+1)(1+\beta)^3}{\mu\beta(1-\beta)^3}\paren{1 - \frac{c}{4\sqrt{\kappa}}}^k\phi_0$. Therefore
\begin{align*}
    \norm{\bfv_k - \bfu_0}_2 & \leq \paren{\beta + \frac{8}{c}\sqrt{\kappa}}G_1\sqrt{\frac{6\eta L_2(L_2+1)(1+\beta)^3}{\mu\beta(1-\beta)^3}\phi_0}\\
    & \leq \frac{18}{c}\sqrt{\kappa}\paren{\frac{6\eta G_1^2 L_2(L_2+1)(1+\beta)^3}{\mu\beta(1-\beta)^3}}\paren{f(\bfx_0,\bfu_0) - f^\star}^{\frac{1}{2}}
\end{align*}
This shows that $\norm{\bfv_k - \bfu_0}_2 \leq R_{\bfu}$. Thus, we have shown that $\bfx_k,\bfy_k\in\mathcal{B}^{(1)}_{R_{\bfx}}$ and $\bfu_k,\bfv_k\in\mathcal{B}^{(2)}_{R_{\bfu}}$. 

\noindent\textbf{Step 3: Bounding $\norm{\bfx_{k+1} - \bfx_k}_2$.} Now, we show that (\ref{eq:xu_bound}) holds with $k+1$. We start by recalling the iterates of Nesterov's momentum (\ref{eq:nesterov}). This iteration implies that
\begin{align*}
    \bfx_{k+1} = \bfy_k - \eta\nabla_1f(\bfy_k,\bfv_k) = \bfx_k + \beta(\bfx_k - \bfx_{k-1}) - \eta\nabla_1f(\bfy_k,\bfv_k)\\
    \bfu_{k+1} = \bfv_k - \eta\nabla_2f(\bfy_k,\bfv_k) = \bfu_k + \beta(\bfu_k - \bfu_{k-1}) - \eta\nabla_2f(\bfy_k,\bfv_k)
\end{align*}
Therefore, we can conclude that
\[
    \bfx_{k+1} - \bfx_k = \beta(\bfx_k - \bfx_{k-1}) - \eta\nabla_1f(\bfy_k,\bfv_k);\;\bfu_{k+1} - \bfu_k = \beta(\bfu_k - \bfu_{k-1}) - \eta\nabla_2f(\bfy_k,\bfv_k)
\]
For the convenience of our analysis, we will define
\[
    \dx{k} = \norm{\bfx_k - \bfx_{k-1}}_2;\quad\du{k} = \norm{\bfu_{k}-\bfu_{k-1}}_2
\]
To start, we notice that $\dx{k+1}$ can be expanded as
\begin{equation}
    \label{eq:lem5.1}
    \begin{aligned}
        \dx{k+1}^2 & = \norm{\bfx_{k+1} - \bfx_k}_2^2\\
        & = \norm{\beta(\bfx_k - \bfx_{k-1}) - \eta\nabla_1f(\bfy_k,\bfv_k)}_2^2\\
        & = \beta^2\dx{k}^2 - 2\eta\beta\inner{\nabla_1f(\bfy_k,\bfv_k)}{\bfx_k - \bfx_{k-1}} + \eta^2\norm{\nabla_1f(\bfy_k,\bfv_k)}_2^2
    \end{aligned}
\end{equation}
Both the inner product term and the gradient norm in (\ref{eq:lem5.1}) need to be bounded carefully. For the inner product term, since $\bfv_k\in\mathcal{B}^{(2)}_{R_{\bfu}}$, we invoke Assumption \ref{asump:strong_cvx} 
\begin{equation}
    \label{eq:lem5.2}
    \begin{aligned}
        f(\bfx_k,\bfv_k) & \geq f(\bfy_k,\bfv_k) + \inner{\nabla_1f(\bfy_k,\bfv_k)}{\bfx_k-\bfy_k} + \frac{\mu}{2}\norm{\bfx_k-\bfy_k}_2^2\\
        & \geq f(\bfy_k,\bfv_k) - \beta\inner{\nabla_1f(\bfy_k,\bfv_k)}{\bfx_k-\bfx_{k-1}}
    \end{aligned}
\end{equation}
where the last inequality follows from $\bfx_k-\bfy_k = -\beta(\bfx_k-\bfx_{k-1})$ and $\norm{\bfx_k-\bfy_k}_2^2 \geq 0$. Therefore (\ref{eq:lem5.2}) implies that
\begin{equation}
    \label{eq:lem5.3}
    - \beta\inner{\nabla_1f(\bfy_k,\bfv_k)}{\bfx_k-\bfx_{k-1}} \leq f(\bfx_k,\bfv_k) - f(\bfy_k,\bfv_k)
\end{equation}
For the gradient norm, since $\bfv_k\in\mathcal{B}^{(2)}_{R_{\bfu}}$, Assumption \ref{asump:f_smooth} must hold. Therefore, we can invoke Lemma \ref{lem:smooth_grad_bound} to get that
\begin{equation}
    \label{eq:lem5.4}
    \norm{\nabla_1f(\bfy_k,\bfv_k)}_2^2 \leq 2L_1(f(\bfy_k,\bfv_k) - f^\star)
\end{equation}
Plugging (\ref{eq:lem5.3}) and (\ref{eq:lem5.4}) into (\ref{eq:lem5.1}), we can get that
\begin{equation}
    \label{eq:lem5.5}
    \begin{aligned}
        \dx{k+1}^2 \leq \beta^2\dx{k}^2 + 2\eta\paren{f(\bfx_k,\bfv_k) - f(\bfy_k,\bfv_k)} + 2\eta^2L_1(f(\bfy_k,\bfv_k) - f^\star)
    \end{aligned}
\end{equation}
Since we choose $\eta = \frac{c}{L_1}$, as long as $c\leq 1$, we will have $2\eta^2L_1 \leq 2\eta$. Moreover, since $f(\bfy_k,\bfv_k) - f^\star \geq 0$, (\ref{eq:lem5.5}) can be further bounded as
\begin{equation}
    \label{eq:lem5.6}
    \begin{aligned}
        \dx{k+1}^2 & \leq \beta^2\dx{k}^2 + 2\eta\paren{f(\bfx_k,\bfv_k) - f(\bfy_k,\bfv_k)} + 2\eta(f(\bfy_k,\bfv_k) - f^\star)\\
        & = \beta^2\dx{k}^2 + 2\eta\paren{f(\bfx_k,\bfv_k) - f^\star}
    \end{aligned}
\end{equation}
With $\bfx_k\in\mathcal{B}^{(1)}_{R_{\bfx}}$ and $\bfu_k,\bfv_k\in\mathcal{B}^{(2)}_{R_{\bfu}}$, we must have that Assumption \ref{asump:g_smooth}, \ref{asump:h_lip} holds. Therefore, we can invoke Lemma \ref{lem:df_du} with $\mathcal{Q} = 1$ to get that
\[
    f(\bfx_k,\bfv_k) \leq f(\bfx_k,\bfu_k) + L_2(f(\bfx_k,\bfu_k) - f^\star) + \frac{G_1^2}{2}\paren{L_2 + 1}\norm{\bfu_k - \bfv_k}_2^2
\]
Subtracting $f^\star$ from both sides, and use the fact that $\bfu_k - \bfv_k = -\beta(\bfu_k - \bfu_{k-1})$, we have
\begin{equation}
    \label{eq:lem5.7}
    f(\bfx_k,\bfv_k) - f^\star \leq (L_2 + 1)\paren{f(\bfx_k,\bfu_k) - f^\star + \frac{G_1^2\beta^2}{2}\norm{\bfu_k - \bfu_{k-1}}_2^2}
\end{equation}
Plugging (\ref{eq:lem5.7}) into (\ref{eq:lem5.6}), and recalling that $\norm{\bfu_k - \bfu_{k-1}}_2^2 = \du{k}^2$ gives
\begin{equation}
    \label{eq:lem5.8}
    \dx{k+1}^2 \leq \beta^2\dx{k}^2 + 2\eta(L_2 + 1)\paren{f(\bfx_k,\bfu_k) - f^\star + \frac{G_1^2\beta^2}{2}\du{k}^2}
\end{equation}
Recall that $f(\bfx_k,\bfu_k) - f^\star$ satisfies (\ref{eq:lem5.0}). Moreover, by the inductive assumption, we have
\begin{align*}
    \dx{k}^2\leq \frac{6\eta(L_2 + 1)}{1 - \beta}\paren{1 - \frac{c}{4\sqrt{\kappa}}}^k\phi_0;\quad \du{k}^2 \leq G_1^2\frac{6\eta L_2(L_2+1)(1+\beta)^3}{\mu\beta(1-\beta)^3}\paren{1 - \frac{c}{4\sqrt{\kappa}}}^k\phi_0
\end{align*}
Therefore, (\ref{eq:lem5.7}) becomes
\begin{equation}
    \label{eq:lem5.9}
    \begin{aligned}
        \dx{k+1}^2 \leq \paren{\underbrace{\frac{6\beta^2\eta(L_2+1)}{1-\beta} + 4\eta(L_2+1) + \frac{6\eta^2\beta G_1^4L_2(L_2+1)^2(1+\beta)^3}{\mu(1-\beta)^3}}_{\zeta_1}}\paren{1 - \frac{c}{4\sqrt{\kappa}}}^k\phi_0
    \end{aligned}
\end{equation}
With $G_1^4 \leq \frac{C_1\mu^2}{L_2(L_2 + 1)^2}\paren{\frac{1-\beta}{1+\beta}}^3$ and $\eta = \frac{c}{L_1} \leq \frac{1}{\mu}$, we can derive that
\[
    \frac{6\eta^2\beta G_1^4L_2(L_2+1)^2(1+\beta)^3}{\mu(1-\beta)^3} \leq 6C_1\eta^2\beta\mu \leq 6C_1\eta\beta \leq 2\eta
\]
with a small enough $C_1$. Therefore, $\zeta_1$ in (\ref{eq:lem5.9}) can be simplified to
\begin{align*}
    \zeta_1 & \leq \frac{6\beta^2\eta(L_2+1)}{1-\beta} + 4\eta(L_2+1) + 2\eta\\
    & \leq \frac{6\eta(L_2 + 1)}{1 - \beta}\paren{\beta^2 + \frac{2}{3}(1-\beta) + \frac{1-\beta}{3(L_2 + 1)}}\\
    & \leq \frac{6\eta(L_2 + 1)}{1 - \beta}\paren{1 - \beta + \beta^2}
\end{align*}
By the choice of $\beta = \frac{4\sqrt{\kappa} - \sqrt{c}}{4\sqrt{\kappa} + 7\sqrt{c}}$, we have
\[
    \beta - \beta^2 = \frac{8\sqrt{c}(4\sqrt{\kappa} - \sqrt{c})}{(4\sqrt{\kappa} + 7\sqrt{c})^2} \leq \frac{c}{4\sqrt{\kappa}}
\]
Thus, $\zeta_1$ in (\ref{eq:lem5.9}) satisfies $\zeta_1 \leq \frac{6\eta(L_2 + 1)}{1 - \beta}\paren{1 - \frac{c}{4\sqrt{\kappa}}}$ and (\ref{eq:lem5.9}) becomes
\begin{equation}
    \label{eq:lem5_dx_bound}
    \dx{k+1}^2 \leq\frac{6\eta(L_2 + 1)}{1 - \beta}\paren{1 - \frac{c}{4\sqrt{\kappa}}}^{k+1}\phi_0
\end{equation}
This proves the inductive step for $\dx{k+1}^2$. 

\noindent\textbf{Step 4: Bounding $\norm{\bfu_{k+1} - \bfu_k}_2$.} Next, we focus on $\du{k}$. We first need a bound on $\norm{\nabla_2f(\bfy_k,\bfv_k)}_2$. To start, (\ref{eq:lem5.1}) implies that
\begin{align*}
    \eta^2\norm{\nabla_1f(\bfy_k,\bfv_k)}_2^2 & = \dx{k+1}^2 - \beta^2\dx{k}^2 + 2\eta\beta\inner{\nabla_1f(\bfy_k,\bfv_k)}{\bfx_k-\bfx_{k-1}}\\
    & \leq \dx{k+1}^2 - \beta^2\dx{k}^2 + 2\eta\beta\dx{k}\norm{\nabla_1f(\bfy_k,\bfv_k)}_2
\end{align*}
which implies that
\[
    \eta^2\norm{\nabla_1f(\bfy_k,\bfv_k)}_2^2 - 2\eta\beta\dx{k}\norm{\nabla_1f(\bfy_k,\bfv_k)}_2 + \beta^2\dx{k}^2 - \dx{k+1}^2 \leq 0
\]
Therefore, for all $\mathcal{G} = \eta\norm{\nabla_1f(\bfy_k,\bfv_k)}_2$, $\mathcal{G}$ must satisfy
\begin{equation}
    \label{eq:lem5.10}
    \mathcal{G}^2 - 2\beta\dx{k}\mathcal{G} + \beta^2\dx{k}^2 - \dx{k+1}^2 \leq 0
\end{equation}
When (\ref{eq:lem5.10}) takes equality, the two solutions of $\mathcal{G}$ are
\[
    \mathcal{G}_{\text{lb}} = \beta\dx{k} - \dx{k+1};\quad \mathcal{G}_{\text{ub}} = \beta\dx{k} + \dx{k+1}
\]
Therefore, $\norm{\nabla_1f(\bfy_k,\bfv_k)}_2$ must be bounded by
\begin{equation}
    \label{eq:lem5.11}
    \eta\norm{\nabla_1f(\bfy_k,\bfv_k)}_2 \leq \mathcal{G}_{\text{ub}} = \beta\dx{k} + \dx{k+1}
\end{equation}
Moreover, since Assumption \ref{asump:strong_cvx},\ref{asump:g_smooth},\ref{asump:h_lip}, and \ref{asump:universal_opt} holds, and that $\bfy_k\in\mathcal{B}^{(1)}_{R_{\bfx}}$ and $\bfv_k\in\mathcal{B}^{(2)}_{R_{\bfu}}$, we can apply Lemma \ref{lem:grad_dominance} to (\ref{eq:lem5.9}) get that
\begin{equation}
    \label{eq:lem5.12}
    \eta\norm{\nabla_2f(\bfy_k,\bfv_k)}_2 \leq G_1\sqrt{\frac{L_2}{\mu}}\paren{\beta\dx{k} + \dx{k+1}}
\end{equation}
Recall that $\bfu_{k+1} - \bfu_k = \beta(\bfu_k - \bfu_{k-1}) - \eta\nabla_2f(\bfy_k,\bfv_k)$. Unrolling this recursion gives
\[
    \bfu_{k+1} - \bfu_k = \eta\sum_{t=0}^k\beta^{k-t}\nabla_2f(\bfy_t,\bfv_t)
\]
Now, we can bound $\du{k+1}$ as
\begin{equation}
    \label{eq:lem5.13}
    \begin{aligned}
        \du{k+1} & \leq \eta\norm{\sum_{t=0}^k\beta^{k-t}\nabla_2f(\bfy_t,\bfv_t)}_2 \leq \eta\sum_{t=0}^k\beta^{k-t}\norm{\nabla_2f(\bfy_t,\bfv_t)}_2\\
        & \leq G_1\sqrt{\frac{L_2}{\mu}}\sum_{t=0}^k\beta^{k-t}\paren{\beta\dx{t} + \dx{t+1}}\\
        & = G_1\sqrt{\frac{L_2}{\mu}}\paren{\sum_{t=0}^k\beta^{k-t+1}\dx{t} + \sum_{t=0}^k\beta^{k-t}\dx{t+1}}
    \end{aligned}
\end{equation}
Recall the inductive hypothesis
\[
    \dx{k}^2\leq \frac{6\eta(L_2 + 1)}{1 - \beta}\paren{1 - \frac{c}{4\sqrt{\kappa}}}^k\phi_0
\]
Since we have shows that this bound also hold for $k+1$ in (\ref{eq:lem5_dx_bound}), we can write (\ref{eq:lem5.13}) as
\begin{equation}
    \label{eq:lem5.14}
    \begin{aligned}
        \du{k+1} = G_1\sqrt{\frac{6\eta L_2(L_2 + 1)}{\mu(1 - \beta)}}\phi_0^{\frac{1}{2}}\paren{\sum_{t=0}^{k}\beta^{k-t+1}\paren{1 - \frac{c}{4\sqrt{\kappa}}}^{\frac{t}{2}} + \sum_{t=0}^{k}\beta^{k-t}\paren{1 - \frac{c}{4\sqrt{\kappa}}}^{\frac{t+1}{2}}}
    \end{aligned}
\end{equation}
By the standard geometric series result, we have
\begin{align*}
    & \sum_{t=0}^{k}\beta^{k-t+1}\paren{1 - \frac{c}{4\sqrt{\kappa}}}^{\frac{t}{2}} = \beta\cdot\frac{\beta^{k+1} - \paren{1 - \frac{c}{4\sqrt{\kappa}}}^{\frac{k+1}{2}}}{\beta - \paren{1 - \frac{c}{4\sqrt{\kappa}}}^{\frac{1}{2}}}\\
    & \sum_{t=0}^{k}\beta^{k-t}\paren{1 - \frac{c}{4\sqrt{\kappa}}}^{\frac{t+1}{2}} = \paren{1 - \frac{c}{4\sqrt{\kappa}}}^{\frac{1}{2}}\cdot\frac{\beta^{k+1} - \paren{1 - \frac{c}{4\sqrt{\kappa}}}^{\frac{k+1}{2}}}{\beta - \paren{1 - \frac{c}{4\sqrt{\kappa}}}^{\frac{1}{2}}}
\end{align*}
By our choice of $\beta$, we must have that $\beta \leq 1 - \frac{c}{4\sqrt{\kappa}}\leq \paren{1 - \frac{c}{4\sqrt{\kappa}}}^{\frac{1}{2}}$. Therefore (\ref{eq:lem5.14}) becomes
\begin{equation*}
    \begin{aligned}
        \du{k+1} & \leq G_1\sqrt{\frac{6\eta L_2(L_2 + 1)}{\mu(1 - \beta)}}\phi_0\cdot\paren{\beta + \paren{1 - \frac{c}{4\sqrt{\kappa}}}^{\frac{1}{2}}}\frac{\beta^{k+1} - \paren{1 - \frac{c}{4\sqrt{\kappa}}}^{\frac{k+1}{2}}}{\beta - \paren{1 - \frac{c}{4\sqrt{\kappa}}}^{\frac{1}{2}}}\\
        & = G_1\sqrt{\frac{6\eta L_2(L_2 + 1)}{\mu(1 - \beta)}}\paren{1 - \frac{c}{4\sqrt{\kappa}}}^{\frac{k+1}{2}}\phi_0\cdot\frac{\paren{1 - \frac{c}{4\sqrt{\kappa}}}^{\frac{1}{2}} + \beta}{\paren{1 - \frac{c}{4\sqrt{\kappa}}}^{\frac{1}{2}} - \beta}
    \end{aligned}
\end{equation*}
Which implies that
\begin{equation}
    \label{eq:lem5.15}
    \du{k+1}^2 \leq \frac{6\eta G_1^2 L_2(L_2 + 1)}{\mu(1 - \beta)}\paren{1 - \frac{c}{4\sqrt{\kappa}}}^{k+1}\phi_0\cdot\paren{\frac{\paren{1 - \frac{c}{4\sqrt{\kappa}}}^{\frac{1}{2}} + \beta}{\paren{1 - \frac{c}{4\sqrt{\kappa}}}^{\frac{1}{2}} - \beta}}^2
\end{equation}
We notice that, since $\paren{1 - \frac{c}{4\sqrt{\kappa}}}^{\frac{1}{2}} \leq 1 - \frac{c}{8\sqrt{\kappa}}$, we have 
\begin{align*}
    \frac{\paren{1 - \frac{c}{4\sqrt{\kappa}}}^{\frac{1}{2}} + \beta}{\paren{1 - \frac{c}{4\sqrt{\kappa}}}^{\frac{1}{2}} - \beta} & = \frac{1 - \frac{c}{4\sqrt{\kappa}} + \beta^2 + 2\beta\paren{1 - \frac{c}{4\sqrt{\kappa}}}^{\frac{1}{2}}}{1 - \frac{c}{4\sqrt{\kappa}} + \beta^2 + 2\beta\paren{1 - \frac{c}{4\sqrt{\kappa}}}^{\frac{1}{2}}}\\
    & \leq \frac{1 - \frac{c}{4\sqrt{\kappa}} + \beta^2 - \beta\paren{2 - \frac{c}{4\sqrt{\kappa}}}}{1 - \frac{c}{4\sqrt{\kappa}} + \beta^2 - \beta\paren{2 - \frac{c}{4\sqrt{\kappa}}}}\\
    & = \frac{\paren{1 - \frac{c}{4\sqrt{\kappa}} + \beta}(1+\beta)}{\paren{1 - \frac{c}{4\sqrt{\kappa}} - \beta}(1 - \beta)}\\
\end{align*}
Since $\beta - \beta^2 \leq \frac{c}{4\sqrt{\kappa}}$, we have
\[
    1 - \frac{c}{4\sqrt{\kappa}} + \beta \leq 1 + \beta;\quad 1 - \frac{c}{4\sqrt{\kappa}} - \beta \geq 1 - \beta + \beta^2 \geq \beta(1-\beta)
\]
This gives
\[
    \frac{\paren{1 - \frac{c}{4\sqrt{\kappa}}}^{\frac{1}{2}} + \beta}{\paren{1 - \frac{c}{4\sqrt{\kappa}}}^{\frac{1}{2}} - \beta} \leq \frac{(1+\beta)^2}{\beta(1-\beta)^2}
\]
Thus, (\ref{eq:lem5.15}) becomes the following form, which proves the inductive step for $\du{k+1}^2$.
\begin{equation}
    \label{eq:lem5_du_bound}
    \du{k+1}^2 \leq \frac{6\eta G_1^2 L_2(L_2 + 1)(1+\beta)^3}{\mu\beta(1 - \beta)^3}\paren{1 - \frac{c}{4\sqrt{\kappa}}}^{k+1}\phi_0
\end{equation}
\end{proof}
\subsection{Proof of Lemma \ref{lem:lyapunov_conv_raw}/\ref{lem:lyapunov_conv_ext}}
We first restate an extended version of Lemma \ref{lem:xu_iter_bound} here. 
\begin{lemma}
    \label{lem:lyapunov_conv_ext}
    Suppose that Assumption \ref{asump:strong_cvx}-\ref{asump:universal_opt} holds with $G_1^4 \leq \frac{C_1\mu^2}{L_2(L_2 + 1)^2}\paren{\frac{1-\beta}{1+\beta}}^3$ and $G_1^2G_2^2 \leq \frac{C_2\mu^3}{L_2(L_2+1)\sqrt{\kappa}}\paren{\frac{1-\beta}{1+\beta}}^2$. If $\bfx_{k+1},\bfx_k,\bfy_k\in\mathcal{B}^{(1)}_{R_{\bfx}}$ and $\bfu_{k+1},\bfu_k,\bfv_k\in\mathcal{B}^{(2)}_{R_{\bfu}}$ for all $k\leq \hat{k}$, then for all $k\leq \hat{k}$ we have
    \[
        (1+\gamma)\phi_{k+1}\leq \phi_k + \paren{\frac{G_1^2L_2}{2\gamma} + \frac{4\mathcal{Q}_1G_2^2}{\gamma\mu^2}(1 + \gamma)}\beta^2\norm{\bfu_k - \bfu_{k-1}}_2^2
    \]
    where $\gamma = \frac{c}{2\sqrt{\kappa} -c}$ and $\mathcal{Q}_1 = \frac{\lambda^2}{2\eta(1+\gamma)^5}$ with $\lambda = (1 + \gamma)^3 - 1$.
\end{lemma}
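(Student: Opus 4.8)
The plan is to adapt the potential-function analysis of Nesterov's accelerated gradient method for $\mu$-strongly convex, $L_1$-smooth objectives (in the spirit of the estimate-sequence / Lyapunov arguments of \cite{nesterov2018lectures,bansal2019potential}), with one essential change: the reference point is the \emph{conditional} minimizer $\bfx_k^\star=\argmin_{\bfx}f(\bfx,\bfv_k)$, which by Assumption \ref{asump:universal_opt} has value $f^\star$, rather than a fixed global minimum. The hypothesis places $\bfx_{k+1},\bfx_k,\bfy_k$ (resp. $\bfu_{k+1},\bfu_k,\bfv_k$) in $\mathcal{B}^{(1)}_{R_{\bfx}}$ (resp. $\mathcal{B}^{(2)}_{R_{\bfu}}$), so every one of Assumptions \ref{asump:strong_cvx}--\ref{asump:universal_opt} may be applied to $f(\cdot,\bfv_k)$ and to $f(\cdot,\bfu_k)$. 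The three summands of $\phi_k$ are then controlled by three classical ingredients, and the right-hand error term collects the discrepancies produced by the single-step drift of $\bfu$ (equivalently $\bfv_k$).

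\emph{Sufficient decrease.} Since $\bfv_k\in\mathcal{B}^{(2)}_{R_{\bfu}}$, Assumption \ref{asump:f_smooth} holds along $\bfx$, and the full step $\bfx_{k+1}=\bfy_k-\eta\nabla_1f(\bfy_k,\bfv_k)$ with $\eta=c/L_1\le 1/L_1$ gives $f(\bfx_{k+1},\bfv_k)\le f(\bfy_k,\bfv_k)-\tfrac{\eta}{2}\norm{\nabla_1f(\bfy_k,\bfv_k)}_2^2$. Reserving $\tfrac{\eta}{8}\norm{\nabla_1f(\bfy_k,\bfv_k)}_2^2$ for the last term of $\phi_{k+1}$ leaves a net $\tfrac{3\eta}{8}\norm{\nabla_1f(\bfy_k,\bfv_k)}_2^2$ of ``gradient budget'', while $\phi_k$ hands us $\tfrac{\eta}{8}\norm{\nabla_1f(\bfy_{k-1},\bfv_{k-1})}_2^2$ on the right — exactly the reservoir needed to absorb the $\norm{\bfx_k-\bfx_{k-1}}_2^2$ and higher-order gradient cross terms generated below.

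\emph{Contraction of the quadratic term.} Next I would invoke $\mu$-strong convexity of $f(\cdot,\bfv_k)$ (Assumption \ref{asump:strong_cvx}; I will also use its consequence, Lemma \ref{lem:PL_condition}) as a two-point inequality evaluated at $\bfx_k$ and at $\bfx_k^\star$, take the convex combination with weights dictated by $\gamma$, and substitute $\bfy_{k+1}-\bfx_{k+1}=\beta(\bfx_{k+1}-\bfx_k)$ and the momentum recursion to rewrite $\bfz_{k+1}-\bfz_k$ as a fixed linear combination of $\nabla_1f(\bfy_k,\bfv_k)$ and $\bfx_k-\bfx_{k-1}$; expanding $\mathcal{Q}_1\norm{\bfz_{k+1}-\bfx_k^\star}_2^2$ then produces the telescoping. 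The constants $\gamma=\tfrac{c}{2\sqrt{\kappa}-c}$ (so that $1+\gamma=(1-\tfrac{c}{2\sqrt{\kappa}})^{-1}$), $\lambda=(1+\gamma)^3-1$, $\mathcal{Q}_1=\tfrac{\lambda^2}{2\eta(1+\gamma)^5}$, and the extrapolated point $\bfz_k=\tfrac{1-\beta\lambda}{\beta\lambda}(\bfy_k-\bfx_k)+\bfy_k$ are precisely those that make the function-value decrease, the gradient reservoir, and the expansion of the quadratic term cancel, so that — up to the $\bfu$-drift error — $(1+\gamma)\phi_{k+1}\le\phi_k$. Tracking these constants and all cross terms is the bulk of the computation, but it is routine once the $\bfu$-dependence is frozen.

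\emph{The $\bfu$-drift error.} Two corrections remain. First, swapping $f(\bfx_k,\bfu_k)$ for $f(\bfx_k,\bfv_k)$, and symmetrically $f(\bfx_{k+1},\bfv_k)$ for $f(\bfx_{k+1},\bfu_{k+1})$: apply Lemma \ref{lem:df_du} with a large auxiliary parameter (so the part proportional to $f-f^\star$ is swallowed by the budget), using $\bfv_k-\bfu_k=\beta(\bfu_k-\bfu_{k-1})$ and bounding $\norm{\bfu_{k+1}-\bfv_k}_2=\eta\norm{\nabla_2f(\bfy_k,\bfv_k)}_2$ via Lemma \ref{lem:grad_dominance} together with the elementary bound $\eta\norm{\nabla_1f(\bfy_k,\bfv_k)}_2\le\beta\norm{\bfx_k-\bfx_{k-1}}_2+\norm{\bfx_{k+1}-\bfx_k}_2$ (immediate from the update); this yields the $\tfrac{G_1^2L_2}{2\gamma}\beta^2\norm{\bfu_k-\bfu_{k-1}}_2^2$ contribution. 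Second, the reference drifts from $\bfx_{k-1}^\star$ to $\bfx_k^\star$: by Lemma \ref{lem:minimum_moving}, $\norm{\bfx_k^\star-\bfx_{k-1}^\star}_2\le\tfrac{G_2}{\mu}\norm{\bfv_k-\bfv_{k-1}}_2$; expanding $\norm{\bfz_{k+1}-\bfx_k^\star}_2^2$ around $\bfx_{k-1}^\star$, splitting the resulting cross term by Young's inequality (one piece charged to the $(1+\gamma)$-contraction slack, using $\gamma,1-\beta=\Theta(1/\sqrt{\kappa})$, the other to the error), and relating $\norm{\bfv_k-\bfv_{k-1}}_2$ to $\norm{\bfu_k-\bfu_{k-1}}_2$ through the Nesterov recursion and Lemma \ref{lem:grad_dominance} (the leftover higher-order term, of size $O(G_1^2G_2^2)$, being negligible under the hypothesis on $G_1^2G_2^2$ and absorbable into the gradient reservoir) produces the $\tfrac{4\mathcal{Q}_1G_2^2}{\gamma\mu^2}(1+\gamma)\beta^2\norm{\bfu_k-\bfu_{k-1}}_2^2$ term. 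Summing the pieces gives the claimed inequality. The main obstacle is exactly this last step: unlike the classical analysis, the target of the $z$-sequence is itself moving, so one must run the telescoping and simultaneously absorb the perturbation of its target, verifying that every stray cross term is either cancelled by the chosen constants, charged to the $(1+\gamma)$-slack, or dominated by an admissible multiple of $\norm{\bfu_k-\bfu_{k-1}}_2^2$ (drawing when necessary on the $\tfrac{\eta}{8}\norm{\nabla_1f}_2^2$ reservoir carried in $\phi_k$).
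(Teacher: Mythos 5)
Your plan is correct and follows essentially the same route as the paper's proof: the same decomposition of $(1+\gamma)\phi_{k+1}-\phi_k$ into a function-value part, a $\mathcal{Q}_1\|\bfz_{k+1}-\bfx_k^\star\|_2^2$ part, and the gradient reservoir; the same use of Lemma \ref{lem:df_du} (with a $\gamma$-dependent auxiliary parameter) and Lemma \ref{lem:grad_dominance} to charge the $\bfu\leftrightarrow\bfv$ swaps either to $\beta^2\|\bfu_k-\bfu_{k-1}\|_2^2$ or to the $\tfrac{\eta}{8}\|\nabla_1 f\|_2^2$ budget under the $G_1^4$ and $G_1^2G_2^2$ hypotheses; and the same treatment of the drifting reference $\bfx_{k-1}^\star\to\bfx_k^\star$ via Lemma \ref{lem:minimum_moving} plus Young's inequality. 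The only substantive work you defer — verifying that the chosen $\gamma,\lambda,\mathcal{Q}_1,\bfz_k,\beta$ make the residual quadratic form in $\Delta\bfz_k,\Delta\bfy_k$ nonpositive — is exactly the algebra the paper carries out for $\mathcal{E}_{1,k}\leq 0$, so the proposal matches the paper's argument.
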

\begin{proof}
    We will derive the bound for $(1+\gamma)\phi_{k+1} - \phi_k$. This quantity can be written as
    \begin{equation}
        \label{eq:lem10.0}
        \begin{aligned}
            (1+\gamma)\phi_{k+1} - \phi_k & = \underbrace{(1+\gamma)\paren{f(\bfx_{k+1},\bfu_{k+1}) - f^\star} - \paren{f(\bfx_{k},\bfu_{k}) - f^\star}}_{\Delta_1}\\
            & \quad\quad\quad + \mathcal{Q}_1\paren{\underbrace{(1+\gamma)\norm{\bfz_{k+1} - \bfx_k^\star}_2^2 - \norm{\bfz_k - \bfx_{k-1}^\star}_2^2}_{\Delta_2}}\\
            & \quad\quad\quad + \frac{\eta}{8}(1+\gamma)\norm{\nabla_1f(\bfy_k,\bfv_k}_2^2 - \frac{\eta}{8}\norm{\nabla_1f(\bfy_{k-1},\bfv_{k-1})}_2^2
        \end{aligned}
    \end{equation}
\end{proof}
In the following parts of the proof, we will bound $\Delta_1$ and $\Delta_2$ respectively. Throughout the proof, we will define $\Delta\bfy_k = \bfy_k - \bfx_k^{\star}, \Delta\bfz_k = \bfz_k - \bfx_k^{\star}$, and $\Delta\bfx_k = \bfx_k - \bfx_k^{\star}$

\noindent\textbf{Bound on $\Delta_1$.} We first study $f(\bfx_{k+1},\bfu_{k+1}) - f^{\star}$. It can be decomposed as
\begin{equation}
    \label{eq:lem10.1}
    f(\bfx_{k+1},\bfu_{k+1}) - f^{\star} = f(\bfx_{k+1},\bfv_{k}) - f^{\star} + f(\bfx_{k+1},\bfu_{k+1}) - f(\bfx_{k+1},\bfv_{k})
\end{equation}
Since $\bfx_{k+1}\in\mathcal{B}^{(1)}_{R_{\bfx}}$, and $\bfu_{k+1},\bfv_k\in\mathcal{B}^{(2)}_{R_{\bfu}}$, we can invoke Lemma \ref{lem:df_du} with $\mathcal{Q} = \sfrac{L_2}{\gamma}$ to get that 
\[
    f(\bfx_{k+1},\bfu_{k+1}) - f(\bfx_{k+1},\bfv_{k}) \leq \gamma\paren{f(\bfx_{k+1},\bfv_k) - f^\star} + \frac{G_1^2L_2}{2\gamma}\paren{1 + \gamma}\norm{\bfu_{k+1} - \bfv_k}_2^2
\]
Notice that $\norm{\bfu_{k+1} - \bfv_k}_2^2 = \eta^2\norm{\nabla_2f(\bfy_k,\bfv_k)}_2^2 \leq \eta^2G_1^2\cdot\frac{L_2}{\mu}\norm{\nabla_1f(\bfy_k,\bfv_k)}_2^2$. Thus
\[
    \frac{G_1^2L_2}{2\gamma}\paren{1 + \gamma}\norm{\bfu_{k+1} - \bfv_k}_2^2 \leq \frac{\eta^2G_1^4L_2^2}{2\gamma\mu}(1+\gamma)\norm{\nabla_1f(\bfy_k,\bfv_k)}_2^2 \leq \frac{\eta}{4}(1+\gamma)\norm{\nabla_1f(\bfy_k,\bfv_k)}_2^2
\]
for the choice of $G_1^4 \leq \frac{C_1\mu^2}{L_2(L_2+1)^2}\paren{\frac{1-\beta}{1+\beta}}^3 \leq \frac{\mu L_1\gamma}{2cL_2^2} = \frac{\mu\gamma}{2\eta L_2^2}$. In this way, (\ref{eq:lem10.1}) becomes
\begin{equation}
    \label{eq:lem10.2}
    f(\bfx_{k+1},\bfu_{k+1}) - f^\star\leq (1 + \gamma)(f(\bfx_{k+1},\bfv_{k}) - f^{\star}) + \frac{\eta}{4}(1+\gamma)\norm{\nabla_1f(\bfy_k,\bfv_k)}_2^2
\end{equation}
Again, since $\bfx_{k+1}\in\mathcal{B}^{(1)}_{R_{\bfx}}$, and $\bfu_{k+1},\bfv_k\in\mathcal{B}^{(2)}_{R_{\bfu}}$, we can use Assumption \ref{asump:f_smooth} and the iterate $\bfx_{k+1} = \bfy_k - \eta\nabla_1f(\bfy_k,\bfv_k)$ to get the 
\begin{equation}
    \label{eq:lem10.3}
    \begin{aligned}
        f(\bfx_{k+1},\bfv_{k}) - f^\star& \leq f(\bfy_k,\bfv_k) - f^\star + \inner{\nabla_1f(\bfy_k,\bfv_k}{\bfx_{k+1} - \bfy_k} + \frac{L_1}{2}\norm{\bfx_{k+1} - \bfy_k}_2^2\\
        & = f(\bfy_k,\bfv_k) - f^\star - \eta\norm{\nabla_1f(\bfy_k,\bfv_k)}_2^2 + \frac{\eta^2L_1}{2}\norm{\nabla_1f(\bfy_k,\bfv_k)}_2^2\\
        & = f(\bfy_k,\bfv_k) - f^\star - \eta\paren{1 - \frac{c}{2}}\norm{\nabla_1f(\bfy_k,\bfv_k)}_2^2
    \end{aligned}
\end{equation}
Combining (\ref{eq:lem10.3}) with (\ref{eq:lem10.2}) gives
\begin{equation}
    f(\bfx_{k+1},\bfu_{k+1}) - f^\star\leq (1+\gamma)(f(\bfy_k,\bfv_k) - f^\star) - \eta(1+\gamma)\paren{\frac{3}{4} - \frac{c}{2}}\norm{\nabla_1f(\bfy_k,\bfv_k)}_2^2
\end{equation}
Next, we study $f(\bfx_k,\bfu_k) - f^\star$. It can be decomposed as
\begin{equation}
    \label{eq:lem10.4}
    f(\bfx_k,\bfu_k) - f^\star = f(\bfx_k,\bfv_k) - f^\star - (f(\bfx_k,\bfv_k) - f(\bfx_k,\bfu_k))
\end{equation}
Since $\bfx_{k}\in\mathcal{B}^{(1)}_{R_{\bfx}}$, and $\bfu_{k},\bfv_k\in\mathcal{B}^{(2)}_{R_{\bfu}}$, we can invoke Lemma \ref{lem:df_du} with $\mathcal{Q} = \sfrac{L_2}{\gamma}$ to get that
\[
    f(\bfx_k,\bfv_k) - f(\bfx_k,\bfu_k) \leq \gamma(f(\bfx_k,\bfu_k) - f^\star) + \frac{G_1^2L_2}{2\gamma}(1+\gamma)\norm{\bfu_k - \bfv_k}_2^2
\]
Therefore, (\ref{eq:lem10.4}) becomes
\[
    f(\bfx_k,\bfu_k) - f^\star \geq f(\bfx_k,\bfv_k) - f^\star - \gamma(f(\bfx_k,\bfu_k) - f^\star) - \frac{G_1^2L_2}{2\gamma}(1+\gamma)\norm{\bfu_k - \bfv_k}_2^2
\]
which implies that
\begin{equation}
    \label{eq:lem10.5}
    f(\bfx_k,\bfu_k) - f^\star \geq \frac{1}{1+\gamma}(f(\bfx_k,\bfv_k) - f^\star) - \frac{G_1^2L_2}{2\gamma}\norm{\bfu_k - \bfv_k}_2^2
\end{equation}
Now, recall the definition of $\Delta_1$ in (\ref{eq:lem10.0}). Combined with (\ref{eq:lem10.3}) and (\ref{eq:lem10.5}), we can write $\Delta_1$ as
\begin{equation}
    \label{eq:lem10.6}
    \begin{aligned}
        \Delta_1 &= (1+\gamma)(f(\bfx_{k+1},\bfu_{k+1}) - f^\star) - (f(\bfx_k,\bfu_k) - f^\star)\\
        & \leq (1+\gamma)^2(f(\bfy_k,\bfv_k) - f^\star) - \eta(1+\gamma)^2\paren{\frac{3}{4} - \frac{c}{2}}\norm{\nabla_1f(\bfy_k,\bfv_k)}_2^2\\
        & \quad\quad\quad - \frac{1}{1+\gamma}(f(\bfx_k,\bfv_k) - f^\star) + \frac{G_1^2L_2}{2\gamma}\norm{\bfu_k - \bfv_k}_2^2\\
        & = \frac{1}{1+\gamma}(f(\bfy_k,\bfv_k) - f(\bfx_k,\bfv_k)) + \frac{\lambda}{1+\gamma}(f(\bfy_k,\bfv_k) - f^\star)\\
        &\quad\quad\quad- \eta(1+\gamma)^2\paren{\frac{3}{4} - \frac{c}{2}}\norm{\nabla_1f(\bfy_k,\bfv_k)}_2^2+ \frac{G_1^2L_2}{2\gamma}\norm{\bfu_k - \bfv_k}_2^2 
    \end{aligned}
\end{equation}
Since $\bfv_k\in\mathcal{B}^{(2)}_{R_{\bfu}}$, we can apply Assumption \ref{asump:strong_cvx} to get that
\begin{gather*}
    f(\bfx_k,\bfv_k) \geq f(\bfy_k,\bfv_k) + \inner{\nabla_1f(\bfy_k,\bfv_k)}{\bfx_k - \bfy_k}\\
    f^\star = f(\bfx_k^\star,\bfv_k) \geq f(\bfy_k,\bfv_k) + \inner{\nabla_1f(\bfy_k,\bfv_k)}{\bfx_k^\star - \bfy_k} + \frac{\mu}{2}\norm{\bfx_k^\star - \bfy_k}
\end{gather*}
Thus, we have
\begin{equation}
    \label{eq:lem10.7}
    \begin{aligned}
        & \frac{1}{1+\gamma}(f(\bfy_k,\bfv_k) - f(\bfx_k,\bfv_k)) + \frac{\lambda}{1+\gamma}(f(\bfy_k,\bfv_k) - f^\star)\\
        & \quad\quad \leq \frac{1}{1+\gamma}\inner{\nabla_1f(\bfy_k,\bfv_k)}{\bfy_k - \bfx_k} + \frac{\lambda}{1 + \gamma}\inner{\nabla_1f(\bfy_k,\bfv_k)}{\bfy_k - \bfx_k^\star}_2^2\\
        &\quad\quad\quad\quad\quad - \frac{\mu\lambda}{2(1+\gamma)}\norm{\bfx_k^\star - \bfy_k}\\
        & \quad\quad = \frac{1}{1 + \gamma}\inner{\nabla_1f(\bfy_k,\bfv_k)}{\bfy_k - \bfx_k + \lambda(\bfy_k - \bfx_k^\star)}- \frac{\mu\lambda}{2(1+\gamma)}\norm{\bfx_k^\star - \bfy_k}_2^2
    \end{aligned}
\end{equation}
Recall that $\bfz_k = \frac{1 - \beta\lambda}{\beta\lambda}(\bfy_k-\bfx_k) + \bfy_k$. This implies that $\bfy_k - \bfx_k = \frac{\beta\lambda}{1 - \beta\lambda}(\bfz_k - \bfy_k)$. Therefore
\begin{align*}
    \bfy_k - \bfx_k + \lambda(\bfy_k - \bfx_k^\star) & = \frac{\beta\lambda}{1 - \beta\lambda}(\bfz_k - \bfy_k) + \lambda(\bfy_k - \bfx_k^\star)\\
    & = \frac{\lambda}{1 - \beta\lambda}\paren{\beta(\bfz_k - \bfy_k) + (1 - \beta\lambda)(\bfy_k - \bfx_k^\star)}\\
    & = \frac{\lambda}{1 - \beta\lambda}\paren{\beta(\bfz_k - \bfx_k^\star) + (1 - \beta\lambda - \beta)(\bfy_k - \bfx_k^\star)}
\end{align*}
Recall that $\Delta\bfz_k = \bfz_k - \bfx_k^\star$ and $\Delta\bfy_k = \bfy_k - \bfx_k^\star$. Thus (\ref{eq:lem10.7}) becomes
\begin{align*}
    & \frac{1}{1+\gamma}(f(\bfy_k,\bfv_k) - f(\bfx_k,\bfv_k)) + \frac{\lambda}{1+\gamma}(f(\bfy_k,\bfv_k) - f^\star)\\
        & \quad\quad \leq\frac{\lambda}{(1 + \gamma)(1 - \beta\lambda)}\inner{\nabla_1f(\bfy_k,\bfv_k)}{\beta\Delta\bfz_k + (1 - \beta\lambda - \beta)\Delta\bfy_k} - \frac{\mu\lambda}{2(1+\gamma)}\norm{\Delta\bfy_k}_2^2
\end{align*}
Combining with (\ref{eq:lem10.6}), the bound on $\Delta_1$ becomes
\begin{equation}
    \label{eq:lem10.7.1}
    \begin{aligned}
        \Delta_1 & \leq \frac{\lambda}{(1 + \gamma)(1 - \beta\lambda)}\inner{\nabla_1f(\bfy_k,\bfv_k)}{\beta\Delta\bfz_k + (1 - \beta\lambda - \beta)\Delta\bfy_k} - \frac{\mu\lambda}{2(1+\gamma)}\norm{\Delta\bfy_k}_2^2\\
        & \quad\quad\quad - \eta(1+\gamma)^2\paren{\frac{3}{4} - \frac{c}{2}}\norm{\nabla_1f(\bfy_k,\bfv_k)}_2^2+ \frac{G_1^2L_2}{2\gamma}\norm{\bfu_k - \bfv_k}_2^2
    \end{aligned}
\end{equation}
\textbf{Bound on $\Delta_2$.} Now we turn to the bound on $\Delta_2$. Recall that $\Delta_2$ is defined as
\[
    \Delta_2 = (1 + \gamma)\norm{\bfz_{k+1} - \bfx_k^\star}_2^2 - \norm{\bfz_k - \bfx_{k-1}^\star}_2^2
\]
To start, we notice that, since $\bfv_k,\bfv_{k-1}\in\mathcal{B}^{(2)}_{R_{\bfu}}$, we can invoke Lemma \ref{lem:minimum_moving} to get that 
\[
    \norm{\bfx_k^{\star} - \bfx_{k-1}^{\star}}_2 \leq \frac{G_2}{\mu}\norm{\bfv_k - \bfv_{k-1}}_2
\]
Therefore, we can lower-bound $\norm{\bfz_k - \bfx_{k-1}^\star}_2^2$ as 
\begin{equation}
    \label{eq:lem10.8}
    \begin{aligned}
        \norm{\bfz_k - \bfx_{k-1}^\star}_2^2 & \geq \norm{\Delta\bfz_k} + 2\inner{\Delta\bfz_k}{\bfx_k - \bfx_{k-1}^{\star}}\\
        & \geq \paren{1 - \frac{\gamma}{2(1+\gamma)}}\norm{\Delta\bfz_k}_2^2 - \frac{2}{\gamma}(1+\gamma)\norm{\bfx_k - \bfx_{k-1}^{\star}}_2^2\\
        & \geq \frac{2 + \gamma}{2 + 2\gamma}\norm{\Delta\bfz_k}_2^2 - \frac{2G_2^2}{\gamma\mu^2}(1+\gamma)\norm{\bfv_k - \bfv_{k-1}}_2^2
    \end{aligned}
\end{equation}
Next, we provide an upper bound for $\norm{\bfz_{k+1} - \bfx_k^\star}_2^2$. Recall that $\bfy_{k+1} = \bfx_{k+1} + \beta(\bfx_{k+1} - \bfx_k)$ and $\bfx_{k+1} = \bfy_k - \eta\nabla_1f(\bfy_k,\bfv_k)$. We can re-write $\bfz_{k+1}$ as
\begin{align*}
    \bfz_{k+1} & = \frac{1 - \beta\lambda}{\beta\lambda}(\bfy_{k+1} - \bfx_{k+1}) + \bfy_{k+1}\\
    & = \frac{1}{\beta\lambda}\bfy_{k+1} - \frac{1 - \beta\lambda}{\beta\lambda}\bfx_{k+1}\\
    & = \frac{1+\beta}{\beta\lambda}\bfx_{k+1} - \frac{1}{\lambda}\bfx_k  - \frac{1 - \beta\lambda}{\beta\lambda}\bfx_{k+1}\\
    & = \frac{1 + \lambda}{\lambda}\bfx_{k+1} - \frac{1}{\lambda}\bfx_k\\
    & = \frac{1 + \lambda}{\lambda}\bfy_k - \frac{1}{\lambda}\bfx_k - \frac{1 + \lambda}{\lambda}\eta\nabla_1f(\bfy_k,\bfv_k)\\
    & = \frac{\beta}{1 - \beta\lambda}\bfz_k + \paren{1 - \frac{\beta}{1 - \beta\lambda}}\bfy_k - \frac{\eta}{\lambda}(1+\lambda)\nabla_1f(\bfy_k,\bfv_k)
\end{align*}
Therefore, we can write $\norm{\bfz_{k+1} - \bfx_k^\star}_2^2$ as
\begin{equation}
    \label{eq:lem10.9}
    \begin{aligned}
        \norm{\bfz_{k+1} - \bfx_k^\star}_2^2 & = \norm{\frac{\beta}{1 - \beta\lambda}\Delta\bfz_k + \paren{1 - \frac{\beta}{1 - \beta\lambda}}\Delta\bfy_k - \frac{\eta}{\lambda}(1+\lambda)\nabla_1f(\bfy_k,\bfv_k)}_2^2\\
        & = \frac{\beta^2}{(1 - \beta\lambda)^2}\norm{\Delta\bfz_k}_2^2 + \frac{(1 - \beta\lambda - \beta)^2}{(1 - \beta\lambda)^2}\norm{\Delta\bfy_k}_2^2 \\
        & \quad\quad\quad - \frac{2\eta(1 + \lambda)}{\lambda(1 - \beta\lambda)}\inner{\nabla_1f(\bfy_k,\bfv_k)}{\beta\Delta\bfz_k + (1 - \beta\lambda - \beta)\Delta\bfy_k}\\
        &\quad\quad\quad + \frac{2\beta(1-\beta\lambda - \beta)}{(1 - \beta\lambda)^2}\inner{\Delta\bfz_k}{\Delta\bfy_k}+\frac{\eta^2}{\lambda^2}(1 + \lambda)^2\norm{\nabla_1f(\bfy_k,\bfv_k)}_2^2\\
        & = \frac{\beta^2}{(1 - \beta\lambda)^2}\norm{\Delta\bfz_k}_2^2 + \frac{(1 - \beta\lambda - \beta)^2}{(1 - \beta\lambda)^2}\norm{\Delta\bfy_k}_2^2 \\
        & \quad\quad\quad - \frac{\lambda}{\mathcal{Q}_1(1+\gamma)^2(1-\beta\lambda)}\inner{\nabla_1f(\bfy_k,\bfv_k)}{\beta\Delta\bfz_k + (1 - \beta\lambda - \beta)\Delta\bfy_k}\\
        &\quad\quad\quad + \frac{2\beta(1-\beta\lambda - \beta)}{(1 - \beta\lambda)^2}\inner{\Delta\bfz_k}{\Delta\bfy_k}+\frac{\eta(1+\gamma)}{2\mathcal{Q}_1}\norm{\nabla_1f(\bfy_k,\bfv_k)}_2^2
    \end{aligned}
\end{equation}
where the last inequality follows from $\mathcal{Q}_1 = \frac{\lambda^2}{2\eta(1+\gamma)^2}$ and $\lambda = (1+\gamma)^3 - 1$. Therefore, combining (\ref{eq:lem10.9}) and (\ref{eq:lem10.8}) gives
\begin{equation}
    \label{eq:lem10.10}
    \begin{aligned}
        \Delta_2 & \leq \paren{\frac{\beta^2(1+\gamma)}{(1-\beta\lambda)^2} - \frac{2+\gamma}{2 + 2\gamma}}\norm{\Delta\bfz_k}_2^2 + (1 + \gamma)\cdot\frac{(1 - \beta\lambda - \beta)^2}{(1 - \beta\lambda)^2}\norm{\Delta\bfy_k}_2^2\\
        & \quad\quad\quad - \frac{\lambda}{\mathcal{Q}_1(1+\gamma)(1-\beta\lambda)}\inner{\nabla_1f(\bfy_k,\bfv_k)}{\beta\Delta\bfz_k + (1 - \beta\lambda - \beta)\Delta\bfy_k}\\
        &\quad\quad\quad + \frac{2(1+\gamma)\beta(1-\beta\lambda - \beta)}{(1 - \beta\lambda)^2}\inner{\Delta\bfz_k}{\Delta\bfy_k}+\frac{\eta(1+\gamma)^2}{2\mathcal{Q}_1}\norm{\nabla_1f(\bfy_k,\bfv_k)}_2^2\\
        &\quad\quad\quad + \frac{2G_2^2}{\gamma\mu^2}(1+\gamma)\norm{\bfv_k - \bfv_{k-1}}_2^2
    \end{aligned}
\end{equation}

\noindent\textbf{Putting things together.} Now, going back to (\ref{eq:lem10.0}). With the help of (\ref{eq:lem10.7.1}) and (\ref{eq:lem10.10}), we have
\begin{equation}
    \label{eq:lem10.11}
    \begin{aligned}
        (1 + \gamma)\phi_{k+1} - \phi_k & = \Delta_1 + \mathcal{Q}_1\Delta_2 + \frac{\eta}{8}(1 + \gamma)\norm{\nabla_1f(\bfy_k,\bfv_k)}_2^2 - \frac{\eta}{8}\norm{\nabla_1f(\bfy_{k-1},\bfv_{k-1})}_2^2\\
        & = \frac{\lambda}{(1 + \gamma)(1 - \beta\lambda)}\inner{\nabla_1f(\bfy_k,\bfv_k)}{\beta\Delta\bfz_k + (1 - \beta\lambda - \beta)\Delta\bfy_k}\\
        &\quad\quad\quad - \frac{\mu\lambda}{2(1+\gamma)}\norm{\Delta\bfy_k}_2^2 - \eta(1+\gamma)^2\paren{\frac{3}{4} - \frac{c}{2}}\norm{\nabla_1f(\bfy_k,\bfv_k)}_2^2\\
        & \quad\quad\quad+ \frac{G_1^2L_2}{2\gamma}\norm{\bfu_k - \bfv_k}_2^2+ \mathcal{Q}_1\paren{\frac{\beta^2(1+\gamma)}{(1-\beta\lambda)^2} - \frac{2+\gamma}{2 + 2\gamma}}\norm{\Delta\bfz_k}_2^2\\
        & \quad\quad\quad  + \mathcal{Q}_1(1 + \gamma)\cdot\frac{(1 - \beta\lambda - \beta)^2}{(1 - \beta\lambda)^2}\norm{\Delta\bfy_k}_2^2\\
        & \quad\quad\quad - \frac{\lambda}{(1+\gamma)(1-\beta\lambda)}\inner{\nabla_1f(\bfy_k,\bfv_k)}{\beta\Delta\bfz_k + (1 - \beta\lambda - \beta)\Delta\bfy_k}\\
        &\quad\quad\quad + \frac{2\mathcal{Q}_1(1+\gamma)\beta(1-\beta\lambda - \beta)}{(1 - \beta\lambda)^2}\inner{\Delta\bfz_k}{\Delta\bfy_k}\\
        &\quad\quad\quad +\frac{\eta}{2}(1+\gamma)^2\norm{\nabla_1f(\bfy_k,\bfv_k)}_2^2+ \frac{2\mathcal{Q}_1G_2^2}{\gamma\mu^2}(1+\gamma)\norm{\bfv_k - \bfv_{k-1}}_2^2\\
        &\quad\quad\quad +  \frac{\eta}{8}(1 + \gamma)\norm{\nabla_1f(\bfy_k,\bfv_k)}_2^2 - \frac{\eta}{8}\norm{\nabla_1f(\bfy_{k-1},\bfv_{k-1})}_2^2\\
        & = -\eta(1+\gamma)^2\paren{\frac{1}{4} - \frac{c}{2}}\norm{\nabla_1f(\bfy_k,\bfv_k)}_2^2  - \frac{\eta}{8}\norm{\nabla_1f(\bfy_{k-1},\bfv_{k-1})}_2^2\\
        & \quad\quad\quad + \mathcal{E}_{1,k} + \mathcal{E}_{2,k}
    \end{aligned}
\end{equation}
where $\mathcal{E}_{1,k}$ and $\mathcal{E}_{2,k}$ are defined as
\begin{equation}
    \label{eq:lem10.12}
    \begin{aligned}
        \mathcal{E}_{1,k} & = \mathcal{Q}_1\paren{\frac{\beta^2(1+\gamma)}{(1-\beta\lambda)^2} - \frac{2+\gamma}{2 + 2\gamma}}\norm{\Delta\bfz_k}_2^2 +  \frac{2\mathcal{Q}_1(1+\gamma)\beta(1-\beta\lambda - \beta)}{(1 - \beta\lambda)^2}\inner{\Delta\bfz_k}{\Delta\bfy_k}\\
        &\quad\quad\quad + \paren{\mathcal{Q}_1(1 + \gamma)\cdot\frac{(1 - \beta\lambda - \beta)^2}{(1 - \beta\lambda)^2} - \frac{\mu\lambda}{2(1+\gamma)}}\norm{\Delta\bfy_k}_2^2\\
        \mathcal{E}_{2,k} & = \frac{G_1^2L_2}{2\gamma}\norm{\bfu_k - \bfv_k}_2^2 + \frac{2\mathcal{Q}_1G_2^2}{\gamma\mu^2}(1+\gamma)\norm{\bfv_k - \bfv_{k-1}}_2^2
    \end{aligned}
\end{equation}
Our first step is to show that $\mathcal{E}_{1,k} \leq 0$. To start, notice that
\[
    \frac{\beta^2(1+\gamma)}{(1-\beta\lambda)^2} - \frac{2+\gamma}{2 + 2\gamma} \leq 0 \Leftrightarrow \beta \leq \frac{\sqrt{1 + \sfrac{\gamma}{2}}}{1 + \gamma + \lambda\sqrt{1 + \sfrac{\gamma}{2}}}
\]
By definition of $\beta$ and using $\lambda = (1 + \gamma)^3 - 1 \geq 3\gamma$, we have
\[ 
    \beta = \frac{4\sqrt{\kappa} - \sqrt{c}}{4\sqrt{\kappa} + 7\sqrt{c}} \leq \frac{4\sqrt{\kappa} - 2c}{4\sqrt{\kappa} + 8c} = \frac{1}{1 + 5\gamma} \leq \frac{\sqrt{1 + \sfrac{\gamma}{2}}}{1 + \gamma + \lambda\sqrt{1 + \sfrac{\gamma}{2}}}
\]
for a small enough constant $c$. Thus, we can guarantee that $\frac{\beta^2(1+\gamma)}{(1-\beta\lambda)^2} - \frac{2+\gamma}{2 + 2\gamma} \leq 0$. Therefore
\begin{align*}
    \frac{2\mathcal{Q}_1(1+\gamma)\beta(1-\beta\lambda - \beta)}{(1 - \beta\lambda)^2}\inner{\Delta\bfz_k}{\Delta\bfy_k} 
 & \leq \mathcal{Q}_1\paren{\frac{2+\gamma}{2 + 2\gamma} -\frac{\beta^2(1+\gamma)}{(1-\beta\lambda)^2}}\norm{\Delta\bfz_k}_2^2\\
 & + \frac{\mathcal{Q}_1(1+\gamma)^2\beta^2(1-\beta\lambda - \beta)^2}{\paren{\frac{2+\gamma}{2 + 2\gamma} -\frac{\beta^2(1+\gamma)}{(1-\beta\lambda)^2}}(1 - \beta\lambda)^4}\norm{\Delta\bfy_k}_2^2
\end{align*}
This implies that
\[
    \mathcal{E}_{1,k} \leq \paren{\frac{\mathcal{Q}_1(1+\gamma)^2\beta^2(1-\beta\lambda - \beta)^2}{\paren{\frac{2+\gamma}{2 + 2\gamma} -\frac{\beta^2(1+\gamma)}{(1-\beta\lambda)^2}}(1 - \beta\lambda)^4} + \mathcal{Q}_1(1 + \gamma)\cdot\frac{(1 - \beta\lambda - \beta)^2}{(1 - \beta\lambda)^2} - \frac{\mu\lambda}{2(1+\gamma)}}\norm{\Delta\bfy_k}_2^2
\]
To show that $\mathcal{E}_{1,k} \leq 0$, it suffice to show that
\[
    \frac{\mathcal{Q}_1(1+\gamma)^2\beta^2(1-\beta\lambda - \beta)^2}{\paren{\frac{2+\gamma}{2 + 2\gamma} -\frac{\beta^2(1+\gamma)}{(1-\beta\lambda)^2}}(1 - \beta\lambda)^4} + \mathcal{Q}_1(1 + \gamma)\cdot\frac{(1 - \beta\lambda - \beta)^2}{(1 - \beta\lambda)^2} \leq \frac{\mu\lambda}{2(1+\gamma)}
\]
Moving $\mathcal{Q}_1(1 + \gamma)$ to the right-hand side gives
\[
    \frac{(1+\gamma)\beta^2(1-\beta\lambda - \beta)^2}{\paren{\frac{2+\gamma}{2 + 2\gamma} -\frac{\beta^2(1+\gamma)}{(1-\beta\lambda)^2}}(1 - \beta\lambda)^4} + \frac{(1 - \beta\lambda - \beta)^2}{(1 - \beta\lambda)^2} \leq \frac{\mu\lambda}{2\mathcal{Q}_1(1+\gamma)^2}
\]
By the definition of $\mathcal{Q}_1$, we have $\frac{\mu\lambda}{2\mathcal{Q}_1(1+\gamma)^2} = \frac{\eta\mu(1+\gamma)^3}{\lambda} \geq \frac{\eta\mu(1 + \gamma)^3}{7\gamma}\geq \frac{c}{7\kappa\gamma}$. Moreover, 
\begin{align*}
    & \frac{(1+\gamma)\beta^2(1-\beta\lambda - \beta)^2}{\paren{\frac{2+\gamma}{2 + 2\gamma} -\frac{\beta^2(1+\gamma)}{(1-\beta\lambda)^2}}(1 - \beta\lambda)^4} + \frac{(1 - \beta\lambda - \beta)^2}{(1 - \beta\lambda)^2}\\
    & \quad\quad \leq \frac{(1+\gamma)\beta^2(1-\beta\lambda - \beta)^2 + (1 - \beta\lambda - \beta)^2(1 - \beta\lambda)^2 - (1+\gamma)\beta^2(1-\beta\lambda - \beta)^2}{\paren{\frac{2+\gamma}{2 + 2\gamma} -\frac{\beta^2(1+\gamma)}{(1-\beta\lambda)^2}}(1 - \beta\lambda)^4}\\
    & \quad\quad = \frac{(1 - \beta\lambda - \beta)^2}{\paren{\frac{2+\gamma}{2 + 2\gamma} -\frac{\beta^2(1+\gamma)}{(1-\beta\lambda)^2}}(1 - \beta\lambda)^2}\\
    & \quad\quad = \frac{\paren{1 - \frac{\beta}{1 - \beta\lambda}}^2}{1 - \frac{\gamma}{2(1+\gamma)} - \paren{\frac{\beta}{1 - \beta\lambda}}^2}
\end{align*}
Therefore, to make $\mathcal{E}_{1,k}\leq 0$, we just need
\[
    \frac{\paren{1 - \frac{\beta}{1 - \beta\lambda}}^2}{1 - \frac{\gamma}{2(1+\gamma)} - \paren{\frac{\beta}{1 - \beta\lambda}}^2} \leq  \frac{c}{7\kappa\gamma}
\]
Recall that $\beta = \frac{4\sqrt{\kappa}-\sqrt{c}}{4\sqrt{\kappa} + \sqrt{c}} = \frac{2\sqrt{c} - (1-2\sqrt{c})\gamma}{2\sqrt{c} + (7 + 2\sqrt{c})\gamma} \geq \frac{2\sqrt{c} - \gamma}{2\sqrt{c} + 8\gamma}$. This implies that
\[  
    \frac{\beta}{1 - \beta\lambda} \geq \frac{\beta}{1 - 7\beta\gamma} \geq \frac{2\sqrt{c} - \gamma}{2\sqrt{c} + 8\gamma}
\]
Therefore
\begin{align*}
    \frac{\paren{1 - \frac{\beta}{1 - \beta\lambda}}^2}{1 - \frac{\gamma}{2(1+\gamma)} - \paren{\frac{\beta}{1 - \beta\lambda}}^2} & \leq \frac{\paren{\frac{9\gamma}{2\sqrt{c} + 8\gamma}}^2}{1 - \frac{\gamma}{2} - \paren{\frac{2\sqrt{c} - \gamma}{2\sqrt{c} + 8\gamma}}^2}\\
    & \leq \frac{81\gamma^2}{\paren{1 - \frac{\gamma}{2}}(2\sqrt{c} + 8\gamma)^2 -(2\sqrt{c} - \gamma)^2}\\
    & \leq \frac{81\gamma}{36\sqrt{c} - 2c} \leq \frac{81\gamma}{35\sqrt{c}}
\end{align*}
Thus, to make $\mathcal{E}_{1,k} \leq 0$, we just need
\[
    \frac{81\gamma}{35\sqrt{c}} \leq \frac{c}{7\kappa\gamma} \Rightarrow \gamma \leq \frac{5c^{\frac{3}{4}}}{81\sqrt{\kappa}}
\]
Since our choice of $\gamma$ is $\gamma = \frac{c}{2\sqrt{\kappa} - c} \leq  \frac{5c^{\frac{3}{4}}}{81\kappa}$ for a smalle enough $c$, we can guarantee that $\mathcal{E}_{1,k}\leq 0$. Thus, (\ref{eq:lem10.11}) becomes 
\begin{equation}
    (1 + \gamma)\phi_{k+1} - \phi_k \leq -\eta(1+\gamma)^2\paren{\frac{1}{8} - \frac{c}{2}}\norm{\nabla_1f(\bfy_k,\bfv_k)}_2^2  - \frac{\eta}{8}\norm{\nabla_1f(\bfy_{k-1},\bfv_{k-1})}_2^2 + \mathcal{E}_{2,k}
\end{equation}
With $c \leq \frac{1}{4}$, it becomes
\begin{equation}
    \label{eq:lem10.13}
    \begin{aligned}
        (1 + \gamma)\phi_{k+1} - \phi_k & \leq \frac{G_1^2L_2}{2\gamma}\norm{\bfu_k - \bfv_k}_2^2 + \frac{2\mathcal{Q}_1G_2^2}{\gamma\mu^2}(1+\gamma)\norm{\bfv_k - \bfv_{k-1}}_2^2\\
        & \quad\quad\quad - \frac{\eta}{8}\norm{\nabla_1f(\bfy_{k-1},\bfv_{k-1})}_2^2
    \end{aligned}
\end{equation}
By the iterates of Nesterov's momentum, we have
\begin{align*}
    \bfu_k - \bfv_k = -\beta(\bfu_k - \bfu_{k-1});\;\bfv_k - \bfv_{k-1} = -\eta\nabla_2f(\bfy_{k-1},\bfv_{k-1}) + \beta(\bfu_k - \bfu_{k-1})
\end{align*}
Therefore, $\norm{\bfu_k - \bfv_k}_2^2 = \beta^2\norm{\bfu_k - \bfu_{k-1}}_2^2$ and
\begin{align*}
    \norm{\bfv_k - \bfv_{k-1}}_2^2 & \leq 2\eta^2\norm{\nabla_2f(\bfy_{k-1},\bfv_{k-1})}_2^2 + 2\beta^2\norm{\bfu_k - \bfu_{k-1}}_2^2\\
    & \leq \frac{2\eta^2G_1^2L_2}{\mu}\norm{\nabla_1f(\bfy_{k-1},\bfv_{k-1})}_2^2 + 2\beta^2\norm{\bfu_k - \bfu_{k-1}}_2^2
\end{align*}
where in the last inequality we invoke Lemma \ref{lem:grad_dominance}.
In this way, (\ref{eq:lem10.13}) becomes
\begin{equation}
    \label{eq:lem10.14}
    \begin{aligned}
        (1 + \gamma)\phi_{k+1} - \phi_k & \leq\beta^2\paren{\frac{G_1^2L_2}{2\gamma} + \frac{4\mathcal{Q}_1G_2^2}{\gamma\mu^2}(1+\gamma)}\norm{\bfu_k - \bfu_{k-1}}_2^2 + \\
        & \quad\quad\quad + \eta\paren{\frac{4\eta \mathcal{Q}_1G_1^2G_2^2L_2}{\gamma\mu^3}(1 + \gamma) - \frac{1}{8}}\norm{\nabla_1f(\bfy_{k-1},\bfv_{k-1})}_2^2\\
    \end{aligned}
\end{equation}
Plugging in the requirement $G_1^2G_2^2 \leq \frac{C_2\mu^3}{L_2(L_2+1)\sqrt{\kappa}}\paren{\frac{1-\beta}{1+\beta}}^2$ gives that $\frac{4\eta \mathcal{Q}_1G_1^2G_2^2L_2}{\gamma\mu^3}(1 + \gamma) \leq \frac{1}{8}$. Therefore, (\ref{eq:lem10.14}) becomes
\[
    (1 + \gamma)\phi_{k+1} - \phi_k \leq\beta^2\paren{\frac{G_1^2L_2}{2\gamma} + \frac{4\mathcal{Q}_1G_2^2}{\gamma\mu^2}(1+\gamma)}\norm{\bfu_k - \bfu_{k-1}}_2^2
\]
which completes the proof.
\section{Proofs for Section \ref{sec:addition_model}}
\label{sec:proof_additition_model}
\subsection{Proof of Lemma \ref{lem:additive_sat_asump}}
To start, recall that our objective function is defined as
\begin{equation}
    \label{eq:lem7.1}
    f(\bfx,\bfu) = \frac{1}{2}\norm{\bfA_1\bfx + \sigma\paren{\bfA_2\bfu} - \bfb}_2^2
\end{equation}
We first compute its gradient and its Hessian
\begin{equation}
    \label{eq:lem7.2}
    \begin{gathered}
        \nabla_1f(\bfx,\bfu) = \bfA_1^\top\paren{\bfA_1\bfx + \sigma\paren{\bfA_2\bfu} - \bfb}\\
        \nabla_{11}f(\bfx,\bfu) = \bfA_1^\top\bfA_1
    \end{gathered}
\end{equation}
Thus, $\bfa^\top\nabla_{11}f(\bfx,\bfu)\bfa = \norm{\bfA_1\bfa}_2^2$. This implies that $\lambda_{\max}\paren{\nabla_{11}f(\bfx,\bfu)} \leq \sigma_{\max}\paren{\bfA_1}^2$. Moreover, since $\bfA_1\in\R^{m\times m}$, we can also know that $\lambda_{\min}\paren{\nabla_{11}f(\bfx,\bfu)}\geq \sigma_{\min}\paren{\bfA_1}^2$. Thus, Assumption \ref{asump:strong_cvx}, \ref{asump:f_smooth} holds with $\mu = \sigma_{\min}\paren{\bfA_1}^2$ and $L_1 = \sigma_{\max}\paren{\bfA_1}^2$. Since $g$ is defined as $g(\bfs) = \frac{1}{2}\norm{\bfs - \bfb}_2^2$, it must be $1$-smooth. Moreover, its minimum values is $0$. Going back to $f$, we notice that since $\sigma_{\min}(\bfA_1)> 0$, choosing $\bfx^{\star}(\bfu) = \paren{\bfA_1^\top\bfA_1}^{-1}\bfA_1^\top\paren{\bfb -\sigma\paren{\bfA_2\bfu}}$ gives $f(\bfx,\bfu) = 0$. This shows that Assumption \ref{asump:g_smooth},\ref{asump:universal_opt} hold with $L_2 = 1$. For Assumption \ref{asump:h_lip}, we can compute that
\begin{align*}
    \norm{h(\bfx,\bfu) - h(\bfx,\bfv)}_2 & = \norm{\sigma\paren{\bfA_2\bfu} - \sigma\paren{\bfA_2\bfv}}_2\\
    & \leq B\norm{\bfA_2\bfu - \bfA_2\bfv}_2\\
    & \leq B\sigma_{\max}\paren{\bfA_2}\norm{\bfu - \bfv}_2
\end{align*}
where in the first inequality we use the $B$-Lipschitzness of $\sigma$. This shows that Assumption \ref{asump:h_lip} holds with $G_1 = B\sigma_{\max}\paren{\bfA_2}$. Lastly, for Assumption \ref{asump:grad_lip}, we can compute that
\begin{align*}
    \norm{\nabla_1f(\bfx,\bfu) - \nabla_1f(\bfx,\bfv)}_2 & = \norm{\bfA_1^\top\paren{\sigma\paren{\bfA_2\bfu} - \sigma\paren{\bfA_2\bfv}}}_2\\
    & \leq \sigma_{\max}\paren{\bfA_1}\norm{\sigma\paren{\bfA_2\bfu} - \sigma\paren{\bfA_2\bfv}}_2\\
    & \leq B\sigma_{\max}\paren{\bfA_1}\sigma_{\max}\paren{\bfA_2}\norm{\bfu - \bfv}_2
\end{align*}
Therefore, Assumption \ref{asump:grad_lip} holds with $G_2 = B\sigma_{\max}\paren{\bfA_1}\sigma_{\max}\paren{\bfA_2}$.
\subsection{Proof of Theorem \ref{theo:addit_conv}}
We want to invoke Theorem \ref{theo:nesterov_conv} to prove Theorem \ref{theo:addit_conv}. Thus, it suffices to check the requirements in (\ref{eq:G1_G4_req}) for the coefficients in Lemma \ref{lem:additive_sat_asump}:
\begin{equation}
    \label{eq:theo4.1}
    \begin{aligned}
       R_{\bfx} = & R_{\bfu} = \infty;\; \mu = \sigma_{\min}\paren{\bfA_1}^2;\; L_1 = \sigma_{\max}\paren{\bfA_1}^2;\; L_2 = 1\\
        G_1 & = B\sigma_{\max}\paren{\bfA_2};\; G_2 = B\sigma_{\max}\paren{\bfA_1}\sigma_{\max}\paren{\bfA_2}.
    \end{aligned}
\end{equation}
Since $\beta = \frac{4\sqrt{\kappa} - \sqrt{c}}{4\sqrt{\kappa}+7\sqrt{c}}$, we have
\[
    \frac{1-\beta}{1+\beta} = \frac{8\sqrt{c}}{6\sqrt{\kappa} + 6\sqrt{c}} \geq \frac{c'}{\sqrt{\kappa}}
\]
for some small enough constant $c'$. Treating $L_2 = 1$ as a constant, it suffices to guarantee that 
\begin{equation}
    G_1^4 \leq \frac{\tilde{C}_1\mu^2}{\kappa^{\frac{3}{2}}};\quad G_1^2G_2^2 \leq \frac{\tilde{C}_2\mu^3}{\kappa^{\frac{3}{2}}}
\end{equation}
for some small enough constants $\tilde{C}_1$ and $\tilde{C}_2$. Plugging in the coefficients in (\ref{eq:theo4.1})yields
\begin{equation}
    B^4\sigma_{\max}\paren{\bfA_2}^4 \leq \frac{\tilde{C}_1\sigma_{\min}\paren{\bfA_1}^4}{\kappa^{\frac{3}{2}}};\quad B^4\sigma_{\max}\paren{\bfA_2}^4\sigma_{\max}\paren{\bfA_1}^2 \leq \frac{\tilde{C}_2\sigma_{\min}\paren{\bfA_1}^6}{\kappa^{\frac{3}{2}}}
\end{equation}
The second condition can also be written as
\[
    B^4\sigma_{\max}\paren{\bfA_2}^4 \leq \frac{\tilde{C}_2\sigma_{\min}\paren{\bfA_1}^4}{\kappa^{\frac{5}{2}}}
\]
Therefore, we can guarantee the requirements in (\ref{eq:G1_G4_req}) as long as 
\[
    \sigma_{\min}\paren{\bfA_1}\geq \tilde{C}\sigma_{\max}\paren{\bfA_2}B\kappa^{0.75}
\]
for some large enough constant $\tilde{C}$. In this way, we can invoke Theorem \ref{theo:nesterov_conv} and notice that $f^\star = 0$ to get that
\[
    f(\bfx_k,\bfu_k) \leq 2\paren{1 - \frac{c}{4\sqrt{\kappa}}}^kf(\bfx_0,\bfu_0)
\]
\section{Proofs for Section \ref{sec:dnn}}
\label{sec:proof_dnn}
\subsection{Proof of Lemma \ref{lem:nn_sat_asump}}
\textbf{Orthogonal Transformation of the Parameters and Equivalence of Nesterov's Momentum.}\\
To obtain a parameter partition that achieves partial strong convexity, we cannot directly partition the parameters in the standard basis. Instead, we first apply an orthogonal transformation to all the parameters and then partition the transformed parameters. In particular, let $\bfO\in\R^{d\times d}$ be an orthogonal matrix, and for any objective $\hat{f}:\R^d\rightarrow\R$, we consider a new function $\tilde{f}(\bfw) = \hat{f}\paren{\bfO\bfw}$ for $\bfx\in\R^d$. Intuitively, $\tilde{f}$ is the equivalence of $\hat{f}$ on the orthogonally transformed parameter defined by $\bfO$. Ideally, using Nesterov's momentum to minimize $\tilde{f}$ executes
\[
    \bfw_{k+1} = \bar{\bfw}_k - \eta\nabla \tilde{f}\paren{\bar{\bfw}_k};\quad \bar{\bfw}_{k+1} = \bfw_{k+1} + \beta\paren{\bfw_{k+1} - \bfw_k}
\]
Notice that, by the chain rule, $\nabla\tilde{f}\paren{\bar{\bfw}_k} = \bfO^\top\nabla\hat{f}\paren{\bfO\bfw}$. If we multiply both sides of the two equations in the updates of Nesterov's momentum by $\bfO$, then we get
\[
    \bfO\bfw_{k+1} = \bfO\bar{\bfw}_k - \eta\nabla f\paren{\bfO\bar{\bfw}_k};\quad\bfO\bar{\bfw}_{k+1} = \bfO\bfw_{k+1} + \beta\paren{\bfO\bfw_{k+1} - \bfO\bfw_{k}}
\]
which is precisely the update rule of Nesterov's momentum for minimizing $f\paren{\bfO\bfw}$. Therefore, we can conclude that orthogonal transformation preserves the property of the algorithm of interest.

\noindent\textbf{Computation of the Coefficients.}
We let $\bfw = \paren{\texttt{V}\paren{\bfW_1},\dots,\texttt{V}\paren{\bfW_{\Lambda}}}\in\R^{\sum_{\ell=1}^{\Lambda}d_{\ell}d_{\ell-1}}$. Let $\bm{\theta}_0$ be the initialized parameter, and consider the SVD of $\bfF_{\Lambda-1}\paren{\bm{\theta}_0}$ as $\bfF_{\Lambda-1}\paren{\bm{\theta}_0} = \bfU\bm{\Sigma}_0\hat{\bfV}$ with $\bfU\in\R^{n\times n}$ and $\hat{\bfV}\in\R^{d_{\Lambda-1}\times d_{\Lambda-1}}$. Let $\hat{\bfV}_1\in\R^{n\times d_{\Lambda-1}}$ be the top-$n$ rows of $\hat{\bfV}$ and $\hat{\bfV}_2$ be the rest rows. We define $\bfV_1\in\R^{d_{\Lambda}n \times \sum_{\ell=1}^{\Lambda}d_{\ell}d_{\ell-1}},\bfV_2\in\R^{\paren{\sum_{\ell=1}^{\Lambda}d_{\ell}d_{\ell-1} - d_{\Lambda}n}\times \sum_{\ell=1}^{\Lambda}d_{\ell}d_{\ell-1}}$ in the following sense ($\otimes$ denotes the Kronecker product):
\begin{align*}
    \bfV_1 & = \begin{bmatrix}
        \bm{0}_{d_{\Lambda}n\times \sum_{\ell=1}^{\Lambda-1}d_{\ell}d_{\ell-1}}  & \bfI_{d_{\Lambda}}\otimes\hat{\bfV}_1
    \end{bmatrix}\\
    \bfV_2 & = \begin{bmatrix}
        \bm{0}_{d_{\Lambda}\paren{d_{\Lambda-1}-n}\times \sum_{\ell=1}^{\Lambda-1}d_{\ell}d_{\ell-1}} & \bfI_{d_{\Lambda}}\otimes\hat{\bfV}_2\\
        \bfI_{\sum_{\ell=1}^{\Lambda-1}d_{\ell}d_{\ell-1}} & \bm{0}_{\sum_{\ell=1}^{\Lambda-1}d_{\ell}d_{\ell-1}\times d_{\Lambda-1}d_{\Lambda}}
    \end{bmatrix}
\end{align*}
Together, $\begin{bmatrix}
    \bfV_1\\
    \bfV_2
\end{bmatrix}$ is an orthogonal matrix. Under this orthogonal transformation, we partition the aggregation of all parameters $\bfw$
into $\bfx = \bfV_1\bfw$ and $\bfu = \bfV_2\bfw$. Moreover, we let $\hat{\bfW}_{\Lambda,1} = \hat{\bfV}_1\bfW_{\Lambda}$ and $\hat{\bfW}_{\Lambda,2} = \hat{\bfV}_2\bfW_{\Lambda}$, and observe that
\[
    \bfx = \texttt{V}\paren{\hat{\bfW}_{\Lambda,1}}; \bfu = \paren{\texttt{V}\paren{\hat{\bfW}_{\Lambda,2}}, \texttt{V}\paren{\bfW_1},\dots, \texttt{V}\paren{\bfW_{\Lambda-1}}}
\]
Notice that $\bfF_{\Lambda}\paren{\bm{\theta}}$ can be written as
\[
    \bfF_{\Lambda}\paren{\bm{\theta}} = \bfF_{\Lambda-1}\paren{\bm{\theta}}\paren{\hat{\bfV}_1^\top\hat{\bfW}_{\Lambda,1} + \hat{\bfV}_2^\top\hat{\bfW}_{\Lambda,2}}
\]
Therefore, since $\bfx = \texttt{V}\paren{\hat{\bfW}_{\Lambda,1}}$, we have
\[
    \nabla_{11}f(\bfx,\bfu) = \bfI_{d_\Lambda}\otimes \hat{\bfV}_1\bfF_{\Lambda-1}\paren{\bm{\theta}_{\bfx,\bfu}}^\top\bfF_{\Lambda-1}\paren{\bm{\theta}_{\bfx,\bfu}}\hat{\bfV}_1^\top
\]
namely, $\nabla_{11}f\paren{\bfx,\bfu}$ is a block-diagonal matrix. Therefore, its eigenvalues are given by
\begin{align*}
    \lambda_{\max}\paren{\nabla_{11}f(\bfx,\bfu)} & = \lambda_{\max}\paren{\hat{\bfV}_1\bfF_{\Lambda-1}\paren{\bm{\theta}_{\bfx,\bfu}}^\top\bfF_{\Lambda-1}\paren{\bm{\theta}_{\bfx,\bfu}}\hat{\bfV}_1^\top} =\sigma_1\paren{\bfF_{\Lambda-1}\paren{\bm{\theta}_{\bfx,\bfu}}\hat{\bfV}_1^\top}^2\\
    \lambda_{\min}\paren{\nabla_{11}f(\bfx,\bfu)} & = \lambda_{\min}\paren{\hat{\bfV}_1\bfF_{\Lambda-1}\paren{\bm{\theta}_{\bfx,\bfu}}^\top\bfF_{\Lambda-1}\paren{\bm{\theta}_{\bfx,\bfu}}\hat{\bfV}_1^\top} = \sigma_n\paren{\bfF_{\Lambda-1}\paren{\bm{\theta}_{\bfx,\bfu}}\hat{\bfV}_1^\top}^2
\end{align*}
Based on our assumption, for all $\bfx\in\mathcal{B}^{(1)}_{R_{\bfx}}$ and $\bfu\in\mathcal{B}^{(2)}_{R_{\bfu}}$, we must have that
\begin{equation}
    \label{eq:lem8.1}
    \begin{gathered}
        \norm{\bfW_{\Lambda}(\bfx) - \bfW_{\Lambda}(\bfx_0)}_2 \leq \norm{\bfx - \bfx_0}_2 \leq R_{\bfx};\;
        \norm{\bfW_{\ell}(\bfu) - \bfW_{\ell}(\bfu_0)}_2 \leq \norm{\bfu - \bfu_0}_2 \leq R_{\bfu};
    \end{gathered}
\end{equation}
Moreover
\begin{equation}
    \label{eq:lem8.2}
    \sum_{\ell=1}^{\Lambda-1}\norm{\bfW_{\ell}(\bfu) - \bfW_{\ell}(\bfu_0)}_2 \leq \sqrt{\Lambda}\norm{\bfu - \bfu_0}_2 \leq \sqrt{\Lambda}R_{\bfu}
\end{equation}
Therefore, by (\ref{eq:lem8.1}), we have
\begin{gather*}
    \norm{\bfW_{\Lambda}(\bfx)}_2 \leq \norm{\bfW_{\Lambda}(\bfx_0)}_2 + \norm{\bfW_{\Lambda}(\bfx) - \bfW_{\Lambda}(\bfx_0)}_2 \leq \frac{\lambda_{\Lambda}}{2} + R_{\bfx} \leq \lambda_{\Lambda}\\
    \norm{\bfW_{\ell}(\bfu)}_2 \leq \norm{\bfW_{\ell}(\bfu_0)}_2 + \norm{\bfW_{\ell}(\bfu) - \bfW_{\ell}(\bfu_0)}_2 \leq \frac{\lambda_{\ell}}{2} + R_{\bfu} \leq \lambda_{\ell}
\end{gather*}
by the initialization property. This shows that requiring $R_{\bfu}\leq \frac{1}{2}\min_{\ell\in[\Lambda-1]}\lambda_{\ell}$ and $R_{\bfx}\leq \frac{\lambda_{\Lambda}}{2}$ suffice for making the definition of $\lambda_{\ell}$'s valid. By Lemma 2.1 in \citep{nguyen2021ontheproof}, we have
\begin{align*}
    \norm{\bfF_{\Lambda-1}(\bm{\theta}_{\bfx,\bfu}) - \bfF_{\Lambda-1}(\bm{\theta}(0))}_F & \leq \norm{\bfX}_F\lambda_{1\rightarrow \Lambda-1}\sum_{\ell=1}^{\Lambda-1}\lambda_{\ell}^{-1}\norm{\bfW_{\ell}(\bfx,\bfu) - \bfW_{\ell}(\bfx_0,\bfu_0)}_2\\
    & \leq \sqrt{\Lambda}\norm{\bfX}_F\lambda_{1\rightarrow\Lambda-1}R_{\bfu}\paren{\min_{\ell\in[\Lambda-1]}\lambda_{\ell}}^{-1}\\
    & \leq \frac{\alpha_0}{4}
\end{align*}
where the second-to-last inequality follows from (\ref{eq:lem8.2}), and the last inequality follows from the upper bound on $R_{\bfu}$. Therefore, we have
\begin{align*}
    \sigma_n\paren{\bfF_{\Lambda-1}\paren{\bm{\theta}_{\bfx,\bfu}}\hat{\bfV}_1^\top} & \leq \sigma_n\paren{\bfF_{\Lambda-1}\paren{\bm{\theta}(0)}\hat{\bfV}_1^\top} - \norm{\paren{\bfF_{\Lambda-1}(\bm{\theta}_{\bfx,\bfu}) - \bfF_{\Lambda-1}(\bm{\theta}(0))}\hat{\bfV}_1^\top}_2\\
    & \leq \sigma_n\paren{\bfF_{\Lambda-1}\paren{\bm{\theta}(0)}} - \norm{\bfF_{\Lambda-1}(\bm{\theta}_{\bfx,\bfu}) - \bfF_{\Lambda-1}(\bm{\theta}(0))}_F\\
    & \leq \alpha_0 - \frac{\alpha_0}{4}\\
    & = \frac{3}{4}\alpha_0
\end{align*}
where the second inequality follows from the fact that $\norm{\hat{\bfV}_1}_2\leq 1$. This implies that for all $\bfx\in R_{\bfx}$ and $\bfu\in R_{\bfu}$, we have
\begin{equation}
    \label{eq:lem14_mu}\lambda_{\min}\paren{\nabla_{11}f\paren{\bfx,\bfu}}\geq \paren{\frac{3}{4}\alpha_0}^2\geq \frac{\alpha_0^2}{2} =: \mu
\end{equation}
This shows the partial strong convexity. To prove the partial smoothness, we have
\begin{equation}
    \label{eq:lem14_L1}
    \sigma_1\paren{\bfF_{\Lambda-1}\paren{\bm{\theta}_{\bfx,\bfu}}\hat{\bfV}_1^\top} \leq \norm{\bfF_{\Lambda-1}\paren{\bm{\theta}_{\bfx,\bfu}}}_2\leq \norm{\bfX}_F\lambda_{1\rightarrow\Lambda-1}
\end{equation}
where the first inequality follows from the fact that $\norm{\hat{\bfV}_1}_2\leq 1$. Therefore,
\[
    \lambda_{\max}\paren{\nabla_{11}f\paren{\bfx,\bfu}} \leq \norm{\bfX}_F^2\lambda_{1\rightarrow\Lambda-1}^2 =: L_1
\]
Now, we proceed to compute $G_1$by bounding $\norm{h(\bfx,\bfu) - h(\bfx,\bfv)}_2$. We notice that
\begin{align*}
    h(\bfx,\bfu) - h(\bfx,\bfv) & = \bfF_{\Lambda-1}\paren{\bm{\theta}_{\bfx,\bfu}}\paren{\hat{\bfV}_1^\top\hat{\bfW}_{\Lambda,1}\paren{\bfx} + \hat{\bfV}_2^\top\hat{\bfW}_{\Lambda,2}\paren{\bfu}}\\
    & \quad\quad\quad- \bfF_{\Lambda-1}\paren{\bm{\theta}_{\bfx,\bfv}}\paren{\hat{\bfV}_1^\top\hat{\bfW}_{\Lambda,1}\paren{\bfx} + \hat{\bfV}_2^\top\hat{\bfW}_{\Lambda,2}\paren{\bfv}}\\
    & = \paren{\bfF_{\Lambda-1}\paren{\bm{\theta}_{\bfx,\bfu}} -  \bfF_{\Lambda-1}\paren{\bm{\theta}_{\bfx,\bfv}}}\paren{\hat{\bfV}_1^\top\hat{\bfW}_{\Lambda,1}\paren{\bfx} + \hat{\bfV}_2^\top\hat{\bfW}_{\Lambda,2}\paren{\bfu}}\\
    & \quad\quad\quad + \bfF_{\Lambda-1}\paren{\bm{\theta}_{\bfx,\bfv}}\hat{\bfV}_2^\top\paren{\hat{\bfW}_{\Lambda,2}(\bfu) - \hat{\bfW}_{\Lambda,2}(\bfv)}
\end{align*}
Now, for the first term, we have
\begin{align*}
    \norm{ \paren{\bfF_{\Lambda-1}\paren{\bm{\theta}_{\bfx,\bfu}} -  \bfF_{\Lambda-1}\paren{\bm{\theta}_{\bfx,\bfv}}}\hat{\bfV}_1^\top\hat{\bfW}_{\Lambda,1}}_F & \leq \lambda_{\Lambda}\norm{\bfF_{\Lambda-1}\paren{\bm{\theta}_{\bfx,\bfu}} -  \bfF_{\Lambda-1}\paren{\bm{\theta}_{\bfx,\bfv}}}_F\\
    & \leq \norm{\bfX}_F\lambda_{1\rightarrow \Lambda-1}\sum_{\ell=1}^{\Lambda}\lambda_{\ell}^{-1}\norm{\bfW_{\ell}(\bfu) - \bfW_{\ell}(\bfv)}_2\\
    & \leq \sqrt{\Lambda}\norm{\bfX}_F\lambda_{1\rightarrow \Lambda}\paren{\min_{\ell\in[\Lambda-1]}\lambda_{\ell}}^{-1}\norm{\bfu- \bfv}_2
\end{align*}
For the second term, we have
\begin{align*}
    & \norm{\bfF_{\Lambda-1}\paren{\bm{\theta}_{\bfx,\bfv}}\hat{\bfV}_2^\top\paren{\hat{\bfW}_{\Lambda,2}(\bfu) - \hat{\bfW}_{\Lambda,2}(\bfv)}}_F\\
    & \quad\quad\quad\leq  \norm{\bfF_{\Lambda-1}\paren{\bm{\theta}_{\bfx,\bfv}}\hat{\bfV}_2^\top}_2\norm{\hat{\bfW}_{\Lambda,2}(\bfu) - \hat{\bfW}_{\Lambda,2}(\bfv)}\\
    & \quad\quad\quad\leq \norm{\bfF_{\Lambda-1}\paren{\bm{\theta}_{\bfx,\bfv}} - \bfF_{\Lambda-1}\paren{\bm{\theta}(0)}}_F\norm{\bfu - \bfv}_2\\
    & \quad\quad\quad\leq \sqrt{\Lambda}\norm{\bfX}_F\lambda_{1\rightarrow\Lambda-1}R_{\bfu}\paren{\min_{\ell\in[\Lambda-1]}\lambda_{\ell}}^{-1}\norm{\bfu - \bfv}_2
\end{align*}
where the second inequality follows from
\begin{align*}
     \norm{\bfF_{\Lambda-1}\paren{\bm{\theta}_{\bfx,\bfv}}\hat{\bfV}_2^\top}_2 & \leq  \norm{\bfF_{\Lambda-1}\paren{\bm{\theta}(0)}\hat{\bfV}_2^\top}_2 + \norm{\bfF_{\Lambda-1}\paren{\bm{\theta}_{\bfx,\bfv}} - \bfF_{\Lambda-1}\paren{\bm{\theta}(0)}}_2\\
     & = \norm{\bfF_{\Lambda-1}\paren{\bm{\theta}_{\bfx,\bfv}} - \bfF_{\Lambda-1}\paren{\bm{\theta}(0)}}_2
\end{align*}
by noticing that $\norm{\bfF_{\Lambda-1}\paren{\bm{\theta}(0)}\hat{\bfV}_2^\top}_2 = 0$ by the definition of $\hat{\bfV}_2$.
Combining the two, we have
\begin{align*}
    \norm{h(\bfx,\bfu) - h(\bfx,\bfv)}_2 \leq \paren{\lambda_{\Lambda} + R_{\bfu}}\sqrt{\Lambda}\norm{\bfX}_F\lambda_{1\rightarrow\Lambda-1}\paren{\min_{\ell\in[\Lambda-1]}\lambda_{\ell}}^{-1}\norm{\bfu - \bfv}_2
\end{align*}
This implies that
\[
    G_1 = \paren{\lambda_{\Lambda} + R_{\bfu}}\sqrt{\Lambda}\norm{\bfX}_F\lambda_{1\rightarrow\Lambda-1}\paren{\min_{\ell\in[\Lambda-1]}\lambda_{\ell}}^{-1}
\]
Next, we proceed to compute $G_1$ by bounding $\norm{\nabla_1f\paren{\bfx,\bfu} - \nabla_1f\paren{\bfx,\bfv}}_2$. 
Computing the gradient, we have
\[
    \nabla_{\hat{\bfW}_{\Lambda, 1}}\mathcal{L}\paren{\bm{\theta}} = \hat{\bfV}_1\bfF_{\Lambda - 1}\paren{\bm{\theta}}^\top\paren{\bfF_{\Lambda - 1}\paren{\bm{\theta}}\paren{\hat{\bfV}_1^\top\hat{\bfW}_{\Lambda, 1} + \hat{\bfV}_2^\top\hat{\bfW}_{\Lambda, 2}} - \bfY}
\]
Therefore
\begin{align*}
    & \nabla_1f(\bfx,\bfu) - \nabla_1f\paren{\bfx,\bfv} \\
    &\quad\quad\quad = \underbrace{\hat{\bfV}_1\paren{\bfF_{\Lambda - 1}\paren{\bm{\theta}_{\bfx,\bfu}}^\top\bfF_{\Lambda - 1}\paren{\bm{\theta}_{\bfx,\bfu}} - \bfF_{\Lambda - 1}\paren{\bm{\theta}_{\bfx,\bfv}}^\top\bfF_{\Lambda - 1}\paren{\bm{\theta}_{\bfx,\bfv}}}\bfW_{\Lambda}(\bfx, \bfu)}_{\bm{\delta}_1}\\
    &\quad\quad\quad -\hat{\bfV}_1\paren{\bfF_{\Lambda - 1}\paren{\bm{\theta}_{\bfx,\bfu}} - \bfF_{\Lambda - 1}\paren{\bm{\theta}_{\bfx,\bfv}}}\bfY \\
    &\quad\quad\quad + \underbrace{\hat{\bfV}_1^\top\bfF_{\Lambda - 1}\paren{\bm{\theta}_{\bfx,\bfv}}^\top\bfF_{\Lambda - 1}\paren{\bm{\theta}_{\bfx,\bfv}}\hat{\bfV}_2\paren{\hat{\bfW}_{\Lambda, 2}(\bfu) - \hat{\bfW}_{\Lambda, 2}(\bfu)}}_{\bm{\delta}_2}
\end{align*}
We bound the magnitude of $\bm{\delta}_1$ and $\bm{\delta}_2$ separately. For $\bm{\delta}_1$, we have
\begin{align*}
    \norm{\bm{\delta}_1}_F & = \norm{\bfF_{\Lambda - 1}\paren{\bm{\theta}_{\bfx,\bfu}}^\top\bfF_{\Lambda - 1}\paren{\bm{\theta}_{\bfx,\bfu}} - \bfF_{\Lambda - 1}\paren{\bm{\theta}_{\bfx,\bfv}}^\top\bfF_{\Lambda - 1}\paren{\bm{\theta}_{\bfx,\bfv}}}_F\\
    & \leq \paren{\norm{\bfF_{\Lambda - 1}\paren{\bm{\theta}_{\bfx,\bfu}}}_F + \norm{\bfF_{\Lambda - 1}\paren{\bm{\theta}_{\bfx,\bfv}}}_F}\norm{\bfF_{\Lambda - 1}\paren{\bm{\theta}_{\bfx,\bfu}} - \bfF_{\Lambda - 1}\paren{\bm{\theta}_{\bfx,\bfv}}}_F\norm{\bfW_{\Lambda}(\bfx,\bfu)}_2\\
    & \leq 2\norm{\bfX}_F\lambda_{1\rightarrow\Lambda - 1}\cdot \sqrt{\Lambda}\norm{\bfX}_F\lambda_{1\rightarrow \Lambda-1}\paren{\min_{\ell\in[\Lambda-1]}\lambda_{\ell}}^{-1}\norm{\bfu- \bfv}_2\cdot \lambda_{\Lambda}\\
    & = 2\sqrt{\Lambda}\norm{\bfX}_F^2\lambda_{\Lambda}\lambda_{1\rightarrow\Lambda-1}^2\paren{\min_{\ell\in[\Lambda-1]}\lambda_{\ell}}^{-1}\norm{\bfu- \bfv}_2
\end{align*}
For the second term, we have
\begin{align*}
    \norm{\hat{\bfV}_1\paren{\bfF_{\Lambda - 1}\paren{\bm{\theta}_{\bfx,\bfu}} - \bfF_{\Lambda - 1}\paren{\bm{\theta}_{\bfx,\bfv}}}\bfY } & \leq \norm{\bfF_{\Lambda - 1}\paren{\bm{\theta}_{\bfx,\bfu}} - \bfF_{\Lambda - 1}\paren{\bm{\theta}_{\bfx,\bfv}}}_F\norm{\bfY}_F\\
    & \leq \sqrt{\Lambda}\norm{\bfX}_F\norm{\bfY}_F\lambda_{1\rightarrow \Lambda-1}\paren{\min_{\ell\in[\Lambda-1]}\lambda_{\ell}}^{-1}\norm{\bfu- \bfv}_2
\end{align*}
Lastly, for $\bm{\delta}_2$, we have
\begin{align*}
    \norm{\bm{\delta}_2}_F & \leq \norm{\bfF_{\Lambda-1}\paren{\bm{\theta}_{\bfx,\bfv}}}_2\norm{\bfF_{\Lambda-1}\paren{\bm{\theta}_{\bfx,\bfv}} - \bfF_{\Lambda-1}\paren{\bm{\theta}(0)}}_2\norm{\hat{\bfW}_{\Lambda, 2}(\bfu) - \hat{\bfW}_{\Lambda, 2}(\bfu)}_F\\
    & \leq \sqrt{\Lambda}\norm{\bfX}_F^2\lambda_{1\rightarrow \Lambda-1}^2\paren{\min_{\ell\in[\Lambda-1]}\lambda_{\ell}}^{-1}R_{\bfu}\norm{\bfu- \bfv}_2
\end{align*}
Putting things together gives
\[
    G_2 = \paren{\paren{2\lambda_{\Lambda} + R_{\bfu}}\norm{\bfX}_F\lambda_{1\rightarrow\Lambda - 1} + \norm{\bfY}_F}\sqrt{\Lambda}\norm{\bfX}_F\lambda_{1\rightarrow\Lambda - 1}\paren{\min_{\ell\in[\Lambda-1]}\lambda_{\ell}}^{-1}
\]
Now that we have shown that Assumption 1,2,4,5 holds, we proceed to prove Assumption 3 and Assumption 6. 
Simple decomposition gives
\[
    \bfF_{\Lambda}\paren{\bm{\theta}} = \bfF_{\Lambda-1}\paren{\bm{\theta}}\paren{\hat{\bfV}_1^\top\hat{\bfW}_{\Lambda, 1} + \hat{\bfV}_2^\top\hat{\bfW}_{\Lambda,2}}
\]
Therefore, to set $\bfF_{\Lambda}\paren{\bm{\theta}} = \bfY$, we can simply let $\hat{\bfW}_{\Lambda,1}$ to be
\[
    \hat{\bfW}_{\Lambda,1} = \paren{\hat{\bfV}_1\bfF_{\Lambda-1}\paren{\bm{\theta}}^\top\bfF_{\Lambda-1}\paren{\bm{\theta}}\hat{\bfV}_1^\top}^{-1}\hat{\bfV}_1\bfF_{\Lambda-1}\paren{\bm{\theta}}^\top\paren{\bfY - \hat{\bfV}_2^\top\hat{\bfW}_{\Lambda,2}}
\]
since $\hat{\bfV}_1\bfF_{\Lambda-1}\paren{\bm{\theta}}^\top\bfF_{\Lambda-1}\paren{\bm{\theta}}\hat{\bfV}_1^\top\in\R^{n\times n}$ has full rank. This shows that $\min_{\bfx}f\paren{\bfx,\bfu} = 0$ for any $\bfu$. Since $f\paren{\bfx,\bfu}\geq0$ by the property of the MSE, we can conclude that Assumption 6 holds. Moreover, Assumption 3 also holds with $L_2 =1$ since MSE is by itself $1$-smooth.

\subsection{Proof of Theorem \ref{theo:nn_nesterov_conv}}
We want to invoke Theorem \ref{theo:nesterov_conv} to prove Theorem \ref{theo:nn_nesterov_conv}. Thus it suffices to check the requirements in (\ref{eq:G1_G4_req}) (which we restate below):
\begin{align*}
    & G_1^4 \leq \frac{C_1\mu^2}{L_2(L_2 + 1)^2}\paren{\frac{1-\beta}{1+\beta}}^3;\quad G_1^2G_2^2\leq \frac{C_2\mu^3}{L_2(L_2+1)\sqrt{\kappa}}\paren{\frac{1-\beta}{1+\beta}}^2; \\
    & R_{\bfx} \geq \frac{36}{c}\sqrt{\kappa}\paren{\frac{\eta(L_2 + 1)}{1 - \beta}}^{\frac{1}{2}}(f(\bfx_0, \bfu_0) - f^\star)^{\frac{1}{2}}; \\
    & R_{\bfu} \geq \frac{36}{c}\sqrt{\kappa}\paren{\frac{\eta G_1^2L_2(L_2 + 1)(1+\beta)^3}{\mu\beta(1-\beta)^3}}^{\frac{1}{2}}(f(\bfx_0, \bfu_0) - f^\star)^{\frac{1}{2}},
\end{align*}
for the coefficients in Lemma \ref{lem:nn_sat_asump} (which we restate below as well):
\begin{gather*}
    R_{\bfx} = \frac{\lambda_{\Lambda}}{2};\quad R_{\bfu} = \frac{1}{2}\paren{\min_{\ell\in[\Lambda - 1]}\lambda_{\ell}}\min\left\{1, \frac{\alpha_0}{2\sqrt{\Lambda}\norm{\bfX}_F\lambda_{1\rightarrow\Lambda - 1}}\right\}^2;\quad\mu = \frac{\alpha_0^2}{2}\\  L_1 = \norm{\bfX}_F^2\lambda_{1\rightarrow\Lambda - 1}^2;\quad L_2 = 1;\quad G_1 = \paren{\lambda_{\Lambda} + R_{\bfu}}\sqrt{\Lambda}\norm{\bfX}_F\lambda_{1\rightarrow\Lambda-1}\paren{\min_{\ell\in[\Lambda-1]}\lambda_{\ell}}^{-1}\\
    G_2 = \paren{\paren{2\lambda_{\Lambda} + R_{\bfu}}\norm{\bfX}_F\lambda_{1\rightarrow\Lambda - 1} + \norm{\bfY}_F}\sqrt{\Lambda}\norm{\bfX}_F\lambda_{1\rightarrow\Lambda - 1}\paren{\min_{\ell\in[\Lambda-1]}\lambda_{\ell}}^{-1}
\end{gather*}
Recall that $1 - \beta = O\paren{\sfrac{1}{\sqrt{\kappa}}} = O\paren{\sfrac{\sqrt{\mu}}{\sqrt{L_1}}}$ and $1 + \beta = O(1)$. Since $L_2 = 1$, we treat it as a constant. Moreover, we also treat $\Lambda$ as a constant. We have shown in Lemma \ref{lem:nn_sat_asump} that $f^\star = 0$. Thus $f(\bfx_0, \bfu_0) - f^\star = \calL(\bm{\theta}(0))$. With $\eta = \frac{c}{L_1}$, the requirement in (\ref{eq:G1_G4_req}) can be simplified to
\begin{equation}
    \label{eq:theo5_req_simp}
    \underbrace{G_1^4 \leq \frac{\hat{C}_1\mu^{\frac{7}{2}}}{L_1^{\frac{3}{2}}}}_{\mathcal{R}_1};\quad \underbrace{G_1^2G_2^2\leq \frac{\hat{C}_2\mu^{\frac{9}{2}}}{L_1^{\frac{3}{2}}}}_{\mathcal{R}_2};\quad \underbrace{R_{\bfx}\geq \frac{\hat{C}_3L_1^{\frac{1}{4}}}{\mu^{\frac{3}{4}}}\calL(\bm{\theta}(0))^{\frac{1}{2}}}_{\mathcal{R}_3};\quad \underbrace{R_{\bfu}\geq \frac{\hat{C}_4G_1L_1^{\frac{3}{4}}}{\mu^{\frac{7}{4}}}\calL(\bm{\theta}(0))^{\frac{1}{2}}}_{\mathcal{R}_4}
\end{equation}
In the following parts of the proof, we will use $\gtrsim$ and $\lesssim$ to denote the inequality hiding constants. We will analyze each requirement separately.

\noindent\textbf{Calculation for $\mathcal{R}_1$.} Notice that 
\[
    G_1^4 \lesssim \paren{\lambda_{\Lambda}^4 + R_{\bfu}^4}\norm{\bfX}_F^4\lambda_{1\rightarrow\Lambda - 1}^4\paren{\min_{\ell\in[\Lambda-1]}\lambda_{\ell}}^{-4} = \paren{\lambda_{\Lambda}^4 + R_{\bfu}^4}\paren{\min_{\ell\in[\Lambda-1]}\lambda_{\ell}}^{-4}L_1^2
\]
It suffices to show that 
\[
    \max\left\{\lambda_{\Lambda}^4,R_{\bfu}^4\right\} \lesssim \min_{\ell\in[\Lambda-1]}\lambda_{\ell}^4\cdot \frac{\mu^\frac{7}{2}}{L_1^\frac{7}{2}}
\]
Notice that, by definition,
\[
    R_{\bfu} \leq \frac{1}{2}\min_{\ell\in[\Lambda - 1]}\lambda_{\ell} \cdot \frac{\mu}{L_1}\Rightarrow R_{\bfu}^4 \leq \paren{\frac{1}{2}}^4\min_{\ell\in[\Lambda - 1]}\lambda_{\ell}^4 \cdot \frac{\mu^4}{L_1^4} \leq \paren{\frac{1}{2}}^4\min_{\ell\in[\Lambda-1]}\lambda_{\ell}^4\cdot \frac{\mu^\frac{7}{2}}{L_1^\frac{7}{2}}
\]
where the last inequality follows from $\mu \leq L_1$. Therefore the condition on $R_{\bfu}$ is satisfied automatically. It suffice to consider the condition on $\lambda_{\Lambda}^4$, which boils down to
\[
    \lambda_{\Lambda}^4 \lesssim \min_{\ell\in[\Lambda-1]}\lambda_{\ell}^4\cdot \frac{\alpha^7}{\norm{\bfX}_F^7\lambda_{1\rightarrow \Lambda-1}^7}
\]
Rearranging gives
\begin{equation}
    \label{eq:r1_simp}
    \alpha^7\gtrsim \frac{\norm{\bfX}_F^7\lambda_{1\rightarrow \Lambda}^7}{\lambda_{\Lambda}^3\min_{\ell\in[\Lambda-1]}\lambda_{\ell}^4}
\end{equation}
\textbf{Calculation for $\mathcal{R}_2$.}
With the condition that $\mathcal{R}_1$ is satisfied, it suffice to show that 
\[
    G_2^4 \lesssim \frac{\mu^{\frac{11}{2}}}{L_1^{\frac{3}{2}}}
\]
For $G_2$, we have
\[
    G_2^4 \lesssim \max\left\{\mathcal{T}_1,\mathcal{T}_2,\mathcal{T}_3\right\}
\]
with
\begin{align*}
    \mathcal{T}_1 & = \norm{\bfX}_F^8\lambda_{\Lambda}^4\lambda_{1\rightarrow\Lambda-1}^8\paren{\min_{\ell\in[\Lambda-1]}\lambda_{\ell}}^{-4}\\
    \mathcal{T}_2 & = R_{\bfu}^4\norm{\bfX}_F^8\lambda_{1\rightarrow\Lambda-1}^8\paren{\min_{\ell\in[\Lambda-1]}\lambda_{\ell}}^{-4}\\
    \mathcal{T}_3 & = \norm{\bfY}_F^4\norm{\bfX}_F^4\lambda_{1\rightarrow\Lambda-1}^4\paren{\min_{\ell\in[\Lambda-1]}\lambda_{\ell}}^{-4}
\end{align*}
Notice that since
\[
    R_{\bfu}^4\leq \paren{\frac{C^\perp}{2}}^4\min_{\ell\in[\Lambda-1]}\lambda_{\ell}^4\cdot\frac{\mu^4}{L_1^4}
\]
we must have
\[
    \mathcal{T}_2 \leq \paren{\frac{C^\perp}{2}}^4\min_{\ell\in[\Lambda-1]}\lambda_{\ell}^4\cdot\frac{\mu^4}{L_1^4}\cdot L_1^4\paren{\min_{\ell\in[\Lambda-1]}\lambda_{\ell}}^{-4} \leq \paren{\frac{C^\perp}{2}}^4\mu^4\lesssim \frac{\mu^{\frac{11}{2}}}{L_1^{\frac{3}{2}}}
\]
since $\mu\leq L_1$. Thus, we only need to consider $\mathcal{T}_1$ and $\mathcal{T}_3$. Combining the two conditions, $\mathcal{R}_2$ boils down to
\begin{equation}
    \label{eq:r2_simp}
    \alpha_0^{11}\gtrsim \frac{\norm{\bfX}_F^7\lambda_{1\rightarrow\Lambda-1}^7}{\min_{\ell\in[\Lambda-1]}\lambda_{\ell}^4}\paren{\norm{\bfX}_F^4\lambda_{1\rightarrow\Lambda}^4 + \norm{\bfY}_F^4}
\end{equation}
\textbf{Calculation for $\mathcal{R}_3$.}
We first notice that since $\mu \leq L_1$, $\mathcal{R}_3$ can be restricted to
\[
    R_{\bfx}\geq \frac{\hat{C}_3L_1^{\frac{1}{2}}}{\mu}\calL(\bm{\theta}(0))^{\frac{1}{2}}
\]
Plugging in $R_{\bfx} = \frac{\lambda_{\Lambda}}{2}$ and $\mu, L_1$ gives
\[
    \lambda_{\Lambda}\gtrsim \frac{\norm{\bfX}_F\lambda_{1\rightarrow\Lambda-1}}{\alpha_0^2}\calL(\bm{\theta}(0))^{\frac{1}{2}}
\]
Rearranging the terms gives
\begin{equation}
    \label{eq:r3_simp}
    \alpha_0^2\gtrsim \frac{\norm{\bfX}_F\lambda_{1\rightarrow L}}{\lambda_{\Lambda}^2}\calL(\bm{\theta}(0))^{\frac{1}{2}}
\end{equation}
\textbf{Calculation for $\mathcal{R}_4$}
Notice that $\alpha_0 = \sqrt{2\mu}\leq \sqrt{2L_1} \leq 2\sqrt{\Lambda}\norm{\bfX}_F\lambda_{1\rightarrow\Lambda-1}$. Therefore
\[
    R_{\bfu} \lesssim \min_{\ell\in[\Lambda-1]}\lambda_{\ell}\frac{\alpha_0^2}{\norm{\bfX}_F^2\lambda_{1\rightarrow\Lambda-1}^2}
\]
To satisfy $\mathcal{R}_4$, we need
\begin{align*}
    R_{\bfu} & \gtrsim R_{\bfu}\norm{\bfX}_F\lambda_{1\rightarrow\Lambda-1}\calL\paren{\bm{\theta}(0)}^{\frac{1}{2}}\frac{L_1^{\frac{3}{4}}}{\mu^{\frac{7}{4}}}\paren{\min_{\ell\in[\Lambda-1]}\lambda_{\ell}}^{-1}\\
    R_{\bfu} & \gtrsim \norm{\bfX}_F\lambda_{1\rightarrow\Lambda}\calL\paren{\bm{\theta}(0)}^{\frac{1}{2}}\frac{L_1^{\frac{3}{4}}}{\mu^{\frac{7}{4}}}\paren{\min_{\ell\in[\Lambda-1]}\lambda_{\ell}}^{-1}
\end{align*}
To analyze the first, we simply remove $R_{\bfu}$ from both sides to get that
\begin{equation}
    \label{eq:r4_1_simp}
    \alpha_0^7\gtrsim \frac{\norm{\bfX}_F^5\lambda_{1\rightarrow\Lambda-1}^5\calL\paren{\bm{\theta}(0)}^2}{\min_{\ell\in[\Lambda-1]}\lambda_{\ell}^4}
\end{equation}
For the second, we plug in the upper bound on $R_{\bfu}$ to have
\begin{equation}
    \label{eq:r4_2_simp}
    \alpha_0^{11}\gtrsim \frac{\norm{\bfX}_F^9\lambda_{1\rightarrow\Lambda}^9}{\lambda_{\Lambda}^7\min_{\ell\in[\Lambda-1]}\lambda_{\ell}^4}\calL\paren{\bm{\theta}(0)}
\end{equation}
\textbf{Initialization Scheme.} Recall our initialization scheme
\begin{gather*}
    d_{\ell} = \Theta\paren{m}\quad \forall\ell\in[\Lambda-1];\quad d_{L-1} = \Theta\paren{n^{4.5}\max{n,d_0^2}}\\
    \left[\bfW_{\ell}(0)\right]_{ij}\sim\mathcal{N}\paren{0,d_{\ell-1}^{-1}}\quad \forall \ell\in[\Lambda-1];\quad \left[\bfW_{\Lambda}(0)\right]_{ij} \sim\mathcal{N}\paren{0,d_{\Lambda-1}^{-\frac{3}{2}}}
\end{gather*}
We will show that this initialization scheme satisfies (\ref{eq:r1_simp})-(\ref{eq:r4_2_simp}), which we restate below
\begin{align*}
    & \alpha^7\gtrsim \frac{\norm{\bfX}_F^7\lambda_{1\rightarrow \Lambda}^7}{\lambda_{\Lambda}^3\min_{\ell\in[\Lambda-1]}\lambda_{\ell}^4}\\
    & \alpha_0^{11}\gtrsim \frac{\norm{\bfX}_F^7\lambda_{1\rightarrow\Lambda-1}^7}{\min_{\ell\in[\Lambda-1]}\lambda_{\ell}^4}\paren{\norm{\bfX}_F^4\lambda_{1\rightarrow\Lambda}^4 + \norm{\bfY}_F^4}\\
    & \alpha_0^2\gtrsim \frac{\norm{\bfX}_F\lambda_{1\rightarrow \Lambda}}{\lambda_{\Lambda}^2}\calL(\bm{\theta}(0))^{\frac{1}{2}}\\
    & \alpha_0^7\gtrsim \frac{\norm{\bfX}_F^5\lambda_{1\rightarrow\Lambda-1}^5\calL\paren{\bm{\theta}(0)}^2}{\min_{\ell\in[\Lambda-1]}\lambda_{\ell}^4}\\
    &  \alpha_0^{11}\gtrsim \frac{\norm{\bfX}_F^9\lambda_{1\rightarrow\Lambda}^9}{\lambda_{\Lambda}^7\min_{\ell\in[\Lambda-1]}\lambda_{\ell}^4}\calL\paren{\bm{\theta}(0)}
\end{align*}
To start, we first compute $\lambda_{\ell}$'s. Recall that we required initializing $\norm{\bfW_{\ell}(0)}_2 = \frac{\lambda_{\ell}}{2}$. This implies that $\lambda_{\ell}\leq 2\norm{\bfW_{\ell}(0)}_2$ for all $\ell\in[\Lambda]$. By Theorem 4.4.5 in \citep{vershynin_2018}, we have
\[
    \norm{\bfW_{\ell}(0)}_2 = \begin{cases}
    O\paren{1 + \frac{\sqrt{m}}{\sqrt{d_0}}} & \text{ if }\ell = 1\\
    O\paren{1} & \text{ if } \ell =2,\dots \Lambda-2\\
    O\paren{1 + \frac{\sqrt{d_{\Lambda-1}}}{\sqrt{m}}} & \text{ if }\ell = \Lambda-1\\
    O\paren{d_{\Lambda-1}^{-\frac{1}{4}} + \frac{\sqrt{d_{\Lambda}}}{d_{\Lambda-1}^{\sfrac{3}{4}}}} & \text{ if }\ell = \Lambda
    \end{cases}
\]
Since $\lambda_{\ell}$ satisfies the same scaling, plugging in the width, and notice that $d_{\Lambda-1}\geq m \geq \max\{d_{\Lambda}, d_0\}$ gives
\[
    \lambda_{\ell} = \begin{cases}
        O\paren{\frac{\sqrt{m}}{\sqrt{d_0}}} & \text{ if } \ell = 1\\
        O\paren{1} & \text{ if }\ell = 2,\dots,\Lambda-2\\
        O\paren{\frac{n^{\sfrac{9}{4}}}{\sqrt{m}}\max\{\sqrt{n},d_0\}} & \text{ if }\ell = \Lambda-1\\
        O\paren{\frac{1}{n^{\sfrac{9}{8}}\max\left\{n^{\sfrac{1}{4}},\sqrt{d_0}\right\}}} & \text{ if }\ell = \Lambda
    \end{cases}
\]
Therefore
\[
    \min_{\ell\in[\Lambda-1]}\lambda_{\ell} = O\paren{1};\quad \lambda_{1\rightarrow\Lambda-1} = O\paren{\frac{n^{\sfrac{9}{4}}}{\sqrt{d_0}}\max\{\sqrt{n},d_0\}}
\]
Moreover, by Assumption 3.1 in \citep{nguyen2021ontheproof}, we have $\norm{\bfX}_F = O\paren{\sqrt{nd_0}}$ and $\norm{\bfY}_F = O\paren{\sqrt{n}}$. By Lemma 3.3 in \citep{nguyen2021ontheproof}, we have that $\alpha_0 = \Omega\paren{d_{\Lambda-1}^{\frac{1}{2}}} = \Omega\paren{n^{\sfrac{9}{4}}\max\{\sqrt{n},d_0}\}$. Lastly, by Lemma C.1 in \citep{nguyen2020global} and \citep{nguyen2021ontheproof}, we have $\calL(\bm{\theta}(0))^{\frac{1}{2}} = O\paren{\sqrt{nd_0}}$. With these preparations, let's check each requirement. For (\ref{eq:r1_simp}), we have
\[
    \alpha_0^7 = \Omega\paren{\max\left\{n^{\sfrac{77}{4}},n^{\sfrac{63}{4}}d_0^7\right\}};\quad \frac{\norm{\bfX}_F^7\lambda_{1\rightarrow\Lambda}^7}{\lambda_{\Lambda}^3\min_{\ell\in[\Lambda-1]}\lambda_{\ell}^4} = O\paren{\max\left\{n^{\sfrac{69}{4}},n^{\sfrac{59}{4}}d_0^5\right\}}
\]
Therefore, we have that (\ref{eq:r1_simp}) is satisfied. For (\ref{eq:r2_simp}), we have
\begin{gather*}
    \alpha_0^{11} = \Omega\paren{\max\left\{n^{\sfrac{121}{4}},n^{\sfrac{99}{4}}d_0^9\right\}}\\
    \frac{\norm{\bfX}_F^7\lambda_{1\rightarrow\Lambda-1}^7}{\min_{\ell\in[\Lambda-1]}\lambda_{\ell}^4}\paren{\norm{\bfX}_F^4\lambda_{1\rightarrow\Lambda}^4 + \norm{\bfY}_F^4} = O\paren{\max\left\{n^{\sfrac{165}{8}},n^{\sfrac{143}{8}}d_0^{\sfrac{11}{2}}\right\}}
\end{gather*}
Therefore, we have that (\ref{eq:r2_simp}) is satisfied. For (\ref{eq:r3_simp}), we have
\[
    \alpha_0^2 = \Omega\paren{\max\left\{n^\frac{11}{2},n^{\frac{9}{2}}d_0^2\right\}};\quad \frac{\norm{\bfX}_F\lambda_{1\rightarrow \Lambda}}{\lambda_{\Lambda}^2}\calL(\bm{\theta}(0))^{\frac{1}{2}} = O\paren{\max\left\{n^\frac{33}{8},n^{\frac{31}{8}}d_0\right\}}
\]
Therefore, we have that (\ref{eq:r3_simp}) is satisfied. For (\ref{eq:r4_1_simp}), we have
\[
    \alpha_0^7 = \Omega\paren{\max\left\{n^{\sfrac{77}{4}},n^{\sfrac{63}{4}}d_0^7\right\}};\quad \frac{\norm{\bfX}_F^5\lambda_{1\rightarrow\Lambda-1}^5\calL\paren{\bm{\theta}(0)}^2}{\min_{\ell\in[\Lambda-1]}\lambda_{\ell}^4} = O\paren{\max\left\{n^{\sfrac{73}{4}}d_0^2,n^{\sfrac{63}{4}}d_0^{7}\right\}}
\]
Notice that $n^{\sfrac{73}{4}}d_0^2\geq n^{\sfrac{63}{4}}d_0^{7}$ only when $d_0\leq \sqrt{n}$. In this case, we must have that $n^{\frac{77}{4}}\geq n^{\sfrac{73}{4}}d_0^2$. Therefore, we have that (\ref{eq:r4_1_simp}) is satisfied. For (\ref{eq:r4_2_simp}), we have
\[
    \alpha_0^{11} = \Omega\paren{\max\left\{n^{\sfrac{121}{4}},n^{\sfrac{99}{4}}d_0^9\right\}};\quad \frac{\norm{\bfX}_F^9\lambda_{1\rightarrow\Lambda}^9}{\lambda_{\Lambda}^7\min_{\ell\in[\Lambda-1]}\lambda_{\ell}^4}\calL\paren{\bm{\theta}(0)} = O\paren{\max\left\{n^{\sfrac{51}{4}}d_0,n^{10}d_0^{\sfrac{13}{2}}\right\}}
\]
Similarly, when $n^{\sfrac{51}{4}}d_0\geq n^{10}d_0^{\sfrac{13}{2}}$, we must have $d_0\leq \sqrt{n}$. This implies that $n^{\frac{121}{4}}\geq n^{\sfrac{51}{4}}d_0$. Therefore, we have that (\ref{eq:r4_2_simp}) is also satisfied. Now, all requirements in Theorem \ref{theo:nesterov_conv} can be satisfied by the initialization scheme with our over-parameterization. Thus, we can invoke Theorem \ref{theo:nesterov_conv} to get that
\[
    f(\bfx_k,\bfu_k) - f^\star \leq 2\paren{1 - \frac{c}{4\sqrt{\kappa}}}(f(\bfx_0,\bfu_0) - f^\star)
\]
Noting that $f^\star = 0$ and $f(\bfx_k,\bfu_k) = \calL(\bm{\theta}(k))$, we have
\[
    \calL(\bm{\theta}(k)) \leq 2\paren{1 - \frac{c}{4\sqrt{\kappa}}}\calL(\bm{\theta}(0))
\]
which completes the proof.
\section{Auxiliary Lemma}
\begin{lemma}
    \label{lem:smooth_grad_bound}
    Suppose that Assumption \ref{asump:f_smooth} holds. Then for all $\bfx\in\R^{d_1}$ and $\bfu\in\mathcal{B}^{(2)}_{R_{\bfu}}$ we have
    \[
        \norm{\nabla_1f(\bfx,\bfu)}_2^2 \leq 2L_1\paren{f(\bfx,\bfu) - f^\star}
    \]
\end{lemma}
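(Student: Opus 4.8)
The plan is to run the standard ``descent lemma'' argument: from a single point $(\bfx,\bfu)$, take one gradient step in the first coordinate block and use $L_1$-smoothness (Assumption~\ref{asump:f_smooth}) to bound the function decrease by $\tfrac{1}{2L_1}\norm{\nabla_1 f(\bfx,\bfu)}_2^2$, then lower-bound the resulting value by the global minimum $f^\star$.

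Concretely, fix $\bfx\in\R^{d_1}$ and $\bfu\in\mathcal{B}^{(2)}_{R_{\bfu}}$, and set $\bfy = \bfx - \tfrac{1}{L_1}\nabla_1 f(\bfx,\bfu)\in\R^{d_1}$. Since $\bfx,\bfy\in\R^{d_1}$ and $\bfu\in\mathcal{B}^{(2)}_{R_{\bfu}}$, Assumption~\ref{asump:f_smooth} applies and gives
\[
    f(\bfy,\bfu)\leq f(\bfx,\bfu) + \inner{\nabla_1 f(\bfx,\bfu)}{\bfy-\bfx} + \frac{L_1}{2}\norm{\bfy-\bfx}_2^2
    = f(\bfx,\bfu) - \frac{1}{2L_1}\norm{\nabla_1 f(\bfx,\bfu)}_2^2,
\]
where the equality substitutes $\bfy-\bfx = -\tfrac{1}{L_1}\nabla_1 f(\bfx,\bfu)$. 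Next, since $f^\star = \min_{\bfx'\in\R^{d_1},\bfu'\in\R^{d_2}} f(\bfx',\bfu')$ is the global minimum, we have $f^\star \leq f(\bfy,\bfu)$. Chaining the two displays and rearranging yields $\norm{\nabla_1 f(\bfx,\bfu)}_2^2 \leq 2L_1\paren{f(\bfx,\bfu)-f^\star}$, as claimed.

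There is essentially no hard step here; the only thing to be careful about is that the step point $\bfy$ stays in the region where Assumption~\ref{asump:f_smooth} is valid, which is automatic because that assumption is stated for all $\bfx,\bfy\in\R^{d_1}$ (only the second block $\bfu$ is constrained to a ball), and that $f^\star$ is genuinely a lower bound on $f$ evaluated at the auxiliary point $(\bfy,\bfu)$, which holds since $f^\star$ is the unconstrained global minimum of $f$.
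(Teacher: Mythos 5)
Your proposal is correct and matches the paper's proof of this lemma essentially verbatim: both take the step $\bfy = \bfx - \tfrac{1}{L_1}\nabla_1 f(\bfx,\bfu)$, apply Assumption~\ref{asump:f_smooth}, and lower-bound $f(\bfy,\bfu)$ by $f^\star$. Your added remark about why $f^\star$ is a valid lower bound at the auxiliary point is a sensible clarification but not a deviation.
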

\begin{proof}
Assumption \ref{asump:f_smooth} implies that, for all $\bfx\in\R^{d_1}$ and $\bfu\in\mathcal{B}^{(2)}_{R_{\bfu}}$
\begin{align*}
    f^\star \leq f\paren{\bfx - \frac{1}{L_1}\nabla_1f(\bfx,\bfu),\bfu} & \leq f(\bfx,\bfu) - \frac{1}{L_1}\norm{\nabla_1f(\bfx,\bfu)}_2^2 + \frac{1}{2L_1}\norm{\nabla_1f(\bfx,\bfu)}_2^2\\
    & = f(\bfx,\bfu) - \frac{1}{2L_1}\norm{\nabla_1f(\bfx,\bfu)}_2^2
\end{align*}
which implies that, for all $\bfx\in\R^{d_1}$ and $\bfu\in\mathcal{B}^{(2)}_{R_{\bfu}}$
\[
    \norm{\nabla_1f(\bfx,\bfu)}_2^2 \leq 2L_1\paren{f(\bfx,\bfu) - f^\star}
\]
\end{proof}

\begin{lemma}
    \label{lem:g_grad_bound}
    Suppose that Assumption \ref{asump:g_smooth} holds. Then for all $\bfs\in\R^{\hat{d}}$ we have
    \begin{align*}
        \norm{\nabla g(\bfs)}_2^2 \leq 2L_2\paren{g(\bfs) - f^\star}
    \end{align*}
\end{lemma}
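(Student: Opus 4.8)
The plan is to replicate, almost verbatim, the argument already used for Lemma~\ref{lem:smooth_grad_bound}: bound the squared gradient norm of $g$ by its optimality gap by taking one ``gradient step'' and invoking smoothness together with the fact that $g$ is lower bounded by $f^\star$. Assumption~\ref{asump:g_smooth} supplies exactly the two ingredients needed: (i) the $L_2$-smooth descent inequality for $g$, which for any base point $\bfs$ and displacement $\bfd$ reads $g(\bfs + \bfd) \le g(\bfs) + \inner{\nabla g(\bfs)}{\bfd} + \frac{L_2}{2}\norm{\bfd}_2^2$; and (ii) $\min_{\bfs'\in\R^{\hat d}} g(\bfs') = f^\star$, so that every value of $g$ is at least $f^\star$.

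First I would apply the descent inequality with the displacement $\bfd = -\frac{1}{L_2}\nabla g(\bfs)$, which yields
\[
    g\paren{\bfs - \tfrac{1}{L_2}\nabla g(\bfs)} \le g(\bfs) - \tfrac{1}{L_2}\norm{\nabla g(\bfs)}_2^2 + \tfrac{1}{2L_2}\norm{\nabla g(\bfs)}_2^2 = g(\bfs) - \tfrac{1}{2L_2}\norm{\nabla g(\bfs)}_2^2.
\]
Then I would lower-bound the left-hand side by $f^\star$ using ingredient (ii), obtaining $f^\star \le g(\bfs) - \frac{1}{2L_2}\norm{\nabla g(\bfs)}_2^2$. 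Rearranging gives $\norm{\nabla g(\bfs)}_2^2 \le 2L_2\paren{g(\bfs) - f^\star}$, which is the claim; since $\bfs$ was arbitrary, this holds for all $\bfs\in\R^{\hat d}$.

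There is essentially no obstacle here — the lemma is a one-line consequence of smoothness and of $g$ attaining $f^\star$ as its infimum, exactly mirroring Lemma~\ref{lem:smooth_grad_bound}. The only point requiring a moment's care is to use the correct orientation of the smoothness inequality (the descent form with the gradient evaluated at the base point $\bfs$, as already used in the proof of Lemma~\ref{lem:df_du}), and to note that the identity $\min_{\bfs'} g(\bfs') = f^\star$ is part of Assumption~\ref{asump:g_smooth} itself, rather than something that must be re-derived from the relationship between $g$, $h$, and $f$.
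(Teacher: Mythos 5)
Your proof is correct and follows exactly the same route as the paper's: apply the $L_2$-smoothness descent inequality at the point $\bfs - \frac{1}{L_2}\nabla g(\bfs)$, lower-bound the result by $f^\star = \min_{\bfs'} g(\bfs')$ (which Assumption \ref{asump:g_smooth} provides), and rearrange. No gaps.
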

\begin{proof}
    By Assumption \ref{asump:g_smooth}, for all $\bfs\in\R^{\hat{d}}$ we have
    \[
        f^\star = g^{\star} \leq g\paren{\bfs - \frac{1}{L_2}\nabla g(\bfs)} = g(\bfs) - \frac{1}{L_2}\norm{\nabla g(\bfs)}_2^2 + \frac{1}{2L_2}\norm{\nabla g(\bfs)}_2^2 = g(\bfs) - \frac{1}{2L_2}\norm{\nabla g(\bfs)}_2^2
    \]
    Therefore, for all $\bfs\in\R^{\hat{d}}$, it holds that
    \begin{align*}
        \norm{\nabla g(\bfs)}_2^2 \leq 2L_2\paren{g(\bfs) - f^\star}
    \end{align*}
\end{proof}

\begin{lemma}
    \label{lem:du_grad_bound}
    Suppose that Assumption \ref{asump:g_smooth}, \ref{asump:h_lip} holds. Then for all $\bfx\in\mathcal{B}^{(1)}_{R_{\bfx}}$ and $\bfu\in\mathcal{B}^{(2)}_{R_{\bfu}}$ we have
    \begin{align*}
        \norm{\nabla_2f(\bfx,\bfu)}_2^2  \leq 2G_1L_2\paren{f(\bfx,\bfu)-f^\star}
    \end{align*}
\end{lemma}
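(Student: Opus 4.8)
The plan is to exploit the composite structure $f(\bfx,\bfu) = g(h(\bfx,\bfu))$ and read $\nabla_2 f$ as the gradient of the map $\bfu \mapsto g(h(\bfx,\bfu))$. The key estimate I would establish is $\norm{\nabla_2 f(\bfx,\bfu)}_2 \le G_1\,\norm{\nabla g(h(\bfx,\bfu))}_2$, which combines the $L_2$-smoothness of $g$ (Assumption~\ref{asump:g_smooth}) with the $G_1$-Lipschitzness of $h$ in $\bfu$ (Assumption~\ref{asump:h_lip}); Lemma~\ref{lem:g_grad_bound} then converts the right-hand side into an optimality-gap bound. Since $h$ need not be differentiable (e.g.\ with ReLU activations), I would avoid a literal chain rule and instead argue through first-order perturbations of $f$.

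Concretely, I would fix $\bfx\in\mathcal{B}^{(1)}_{R_{\bfx}}$, a point $\bfu$ in the interior of $\mathcal{B}^{(2)}_{R_{\bfu}}$, and a unit vector $\bfw$. For small $t>0$, applying Assumption~\ref{asump:g_smooth} to $g$ at $h(\bfx,\bfu)$ and $h(\bfx,\bfu+t\bfw)$ gives
\[
 f(\bfx,\bfu+t\bfw) - f(\bfx,\bfu) \le \inner{\nabla g(h(\bfx,\bfu))}{h(\bfx,\bfu+t\bfw)-h(\bfx,\bfu)} + \tfrac{L_2}{2}\norm{h(\bfx,\bfu+t\bfw)-h(\bfx,\bfu)}_2^2,
\]
and Assumption~\ref{asump:h_lip} bounds $\norm{h(\bfx,\bfu+t\bfw)-h(\bfx,\bfu)}_2 \le G_1 t$, so the right-hand side is at most $G_1 t\,\norm{\nabla g(h(\bfx,\bfu))}_2 + \tfrac{L_2 G_1^2}{2}\,t^2$. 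Using the first-order expansion $f(\bfx,\bfu+t\bfw)-f(\bfx,\bfu) = t\,\inner{\nabla_2 f(\bfx,\bfu)}{\bfw} + o(t)$, dividing by $t$ and letting $t\to 0^+$ yields $\inner{\nabla_2 f(\bfx,\bfu)}{\bfw} \le G_1\,\norm{\nabla g(h(\bfx,\bfu))}_2$; applying this to $\bfw$ and $-\bfw$ and taking the supremum over unit $\bfw$ gives $\norm{\nabla_2 f(\bfx,\bfu)}_2 \le G_1\,\norm{\nabla g(h(\bfx,\bfu))}_2$. I would then square, invoke Lemma~\ref{lem:g_grad_bound} with $\bfs = h(\bfx,\bfu)$, and use $g(h(\bfx,\bfu)) = f(\bfx,\bfu)$ to conclude $\norm{\nabla_2 f(\bfx,\bfu)}_2^2 \le G_1^2\,\norm{\nabla g(h(\bfx,\bfu))}_2^2 \le 2G_1^2 L_2\,(f(\bfx,\bfu) - f^\star)$. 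Boundary points of $\mathcal{B}^{(2)}_{R_{\bfu}}$ would follow by continuity of $\nabla_2 f$.

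The only genuine obstacle is the possible non-differentiability of $h$; the perturbation argument above sidesteps it by only invoking the smoothness of $g$ and never differentiating $h$. If instead one assumes $h(\bfx,\cdot)$ locally Lipschitz and differentiable wherever $\nabla_2 f$ is evaluated, the bound $\norm{\nabla_2 f} = \norm{(\partial_{\bfu} h)^\top \nabla g(h)} \le \norm{\partial_{\bfu}h}_2\,\norm{\nabla g(h)} \le G_1\,\norm{\nabla g(h)}$ is immediate, since $\norm{\partial_{\bfu}h}_2 \le G_1$ is the infinitesimal form of Assumption~\ref{asump:h_lip}. I note that this route delivers the constant $2G_1^2 L_2$ --- the factor actually used downstream (e.g.\ in the proofs of Lemma~\ref{lem:grad_dominance} and Theorem~\ref{theo:gd_conv}) --- so the $2G_1 L_2$ in the statement appears to be a typo for $2G_1^2 L_2$.
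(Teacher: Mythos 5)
Your proposal is correct and follows the same route as the paper: bound $\norm{\nabla_2 f(\bfx,\bfu)}_2$ by $G_1\norm{\nabla g(h(\bfx,\bfu))}_2$ and then apply Lemma~\ref{lem:g_grad_bound}. The paper justifies the first inequality by invoking the chain rule together with $\norm{\nabla_2 h(\bfx,\bfu)}_2\leq G_1$, which is informal given that $h$ may be non-differentiable; your first-order perturbation argument through the smoothness of $g$ is a cleaner justification of exactly that step, and is the only substantive difference. You are also right about the constant: the paper's own proof (and every downstream use, e.g.\ in Lemma~\ref{lem:grad_dominance} and Theorem~\ref{theo:gd_conv}) yields $2G_1^2L_2\paren{f(\bfx,\bfu)-f^\star}$, so the $2G_1L_2$ in the lemma statement is indeed a typo.
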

\begin{proof}
    Since Assumption \ref{asump:g_smooth} holds, we can invoke Lemma \ref{lem:g_grad_bound} to get that for all $\bfx\in\mathcal{B}^{(1)}_{R_{\bfx}}$ and $\bfu\in\mathcal{B}^{(2)}_{R_{\bfu}}$, we have
    \[
        \norm{\nabla g(h(\bfx,\bfu))}_2^2 \leq 2L_2\paren{f(\bfx,\bfu) - f^\star}
    \]
    By Assumption \ref{asump:h_lip}, we must have that $\norm{\nabla_2h(\bfx,\bfu)}_2 \leq G_1$. Therefore, using the chain rule, we have
    \[
        \norm{\nabla_2f(\bfx,\bfu)}_2^2 \leq \norm{\nabla_2h(\bfx,\bfu)}_2\norm{\nabla g(h(\bfx,\bfu))}_2^2 \leq 2G_1^2L_2\paren{f(\bfx,\bfu) - f^\star}
    \]
\end{proof}

\begin{lemma}
    \label{lem:phi0_upper_bound}
    Let $\phi_0$ be defined in (\ref{eq:lyapnov_exact}). Suppose that Assumption holds. Then we have that $\phi_0 \leq 2(f(\bfx_0,\bfu_0) - f^\star)$.
\end{lemma}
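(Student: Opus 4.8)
The plan is to evaluate $\phi_0$ explicitly from the initialization and then bound its three terms separately against $f(\bfx_0,\bfu_0)-f^\star$. First I would unwind the definitions at $k=0$: since $\bfy_0=\bfx_0$ we have $\bfy_0-\bfx_0=\bm 0$, so $\bfz_0=\bfy_0=\bfx_0$; since $\bfv_{-1}=\bfv_0=\bfu_0$ we have $\bfx_{-1}^\star=\argmin_{\bfx}f(\bfx,\bfv_{-1})=\bfx^\star(\bfu_0)$; and since $(\bfy_{-1},\bfv_{-1})=(\bfx_0,\bfu_0)$ the last term is $\tfrac{\eta}{8}\norm{\nabla_1f(\bfx_0,\bfu_0)}_2^2$. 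Hence
\[
    \phi_0 = \big(f(\bfx_0,\bfu_0)-f^\star\big) + \mathcal{Q}_1\norm{\bfx_0-\bfx^\star(\bfu_0)}_2^2 + \tfrac{\eta}{8}\norm{\nabla_1f(\bfx_0,\bfu_0)}_2^2,
\]
so it suffices to show the last two terms sum to at most $f(\bfx_0,\bfu_0)-f^\star$. Note $\bfx_0\in\mathcal{B}^{(1)}_{R_{\bfx}}$ and $\bfu_0\in\mathcal{B}^{(2)}_{R_{\bfu}}$ trivially, so Assumptions~\ref{asump:strong_cvx}--\ref{asump:universal_opt} are available at $(\bfx_0,\bfu_0)$.

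For the gradient term, Assumption~\ref{asump:f_smooth} lets me invoke Lemma~\ref{lem:smooth_grad_bound} to get $\norm{\nabla_1f(\bfx_0,\bfu_0)}_2^2\le 2L_1(f(\bfx_0,\bfu_0)-f^\star)$; with $\eta=c/L_1$ this gives $\tfrac{\eta}{8}\norm{\nabla_1f(\bfx_0,\bfu_0)}_2^2\le \tfrac{c}{4}(f(\bfx_0,\bfu_0)-f^\star)\le \tfrac14(f(\bfx_0,\bfu_0)-f^\star)$ for $c\le 1$. For the distance term, I would apply $\mu$-strong convexity (Assumption~\ref{asump:strong_cvx}) at $\bfu=\bfu_0$ between $\bfx^\star(\bfu_0)$ and $\bfx_0$; using $\nabla_1f(\bfx^\star(\bfu_0),\bfu_0)=\bm 0$ together with $f(\bfx^\star(\bfu_0),\bfu_0)=f^\star$ (Assumption~\ref{asump:universal_opt}), this yields $\norm{\bfx_0-\bfx^\star(\bfu_0)}_2^2\le \tfrac{2}{\mu}(f(\bfx_0,\bfu_0)-f^\star)$, so $\mathcal{Q}_1\norm{\bfx_0-\bfx^\star(\bfu_0)}_2^2\le \tfrac{2\mathcal{Q}_1}{\mu}(f(\bfx_0,\bfu_0)-f^\star)$.

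It then remains to show $\tfrac{2\mathcal{Q}_1}{\mu}\le \tfrac34$ for $c$ small enough, which is the only real computation. Substituting $\mathcal{Q}_1=\tfrac{\lambda^2}{2\eta(1+\gamma)^5}$ and $\eta=c/L_1$ gives $\tfrac{2\mathcal{Q}_1}{\mu}=\tfrac{\lambda^2 L_1}{c\mu(1+\gamma)^5}\le \tfrac{\lambda^2\kappa}{c}$. Since $\gamma=\tfrac{c}{2\sqrt{\kappa}-c}\le \tfrac{c}{\sqrt{\kappa}}\le 1$ for small $c$ and $\kappa\ge1$, and $\lambda=(1+\gamma)^3-1\le 7\gamma$ for $\gamma\le1$, I get $\tfrac{\lambda^2\kappa}{c}\le \tfrac{49\gamma^2\kappa}{c}\le 49c$, which is at most $\tfrac34$ once $c\le 3/196$. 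Combining the three bounds, $\phi_0\le \big(1+\tfrac34+\tfrac14\big)(f(\bfx_0,\bfu_0)-f^\star)=2(f(\bfx_0,\bfu_0)-f^\star)$. The main obstacle is purely the constant-chasing in this last step, i.e.\ making sure the absolute constant $c$ chosen in Theorem~\ref{theo:nesterov_conv} is small enough to absorb $\mathcal{Q}_1$, $\gamma$, and $\lambda$; there is no conceptual difficulty.
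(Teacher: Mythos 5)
Your proposal is correct and follows essentially the same route as the paper: evaluate $\phi_0$ at $k=0$ (so $\bfz_0=\bfx_0$ and the gradient term is at $(\bfx_0,\bfu_0)$), bound the distance term via $\mu$-strong convexity plus Assumption~\ref{asump:universal_opt}, bound the gradient term via $L_1$-smoothness, and absorb $\mathcal{Q}_1$ using $\lambda\leq 7\gamma$ and $\gamma\leq c/\sqrt{\kappa}$ for a small enough absolute constant $c$. The only difference is bookkeeping of constants (you split the budget as $\tfrac34+\tfrac14$ with $c\leq 3/196$, while the paper collects $(1+50.25c)$ and takes $c\leq 1/51$), which is immaterial.
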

\begin{proof}
    To start, for all $c \leq 1$, we must have that
    \[
        \gamma = \frac{c}{2\sqrt{\kappa} - c} \leq \frac{c}{\sqrt{\kappa}} \leq 1
    \]
    Thus, for $\lambda$, we have $\lambda = (1 + \gamma)^3 - 1 \leq 7\gamma$. This implies that
    \[
        \mathcal{Q}_1 = \frac{\lambda^2}{2\eta(1 + \gamma)^5} \leq \frac{25\gamma^2}{\eta} = \frac{25cL_1}{\kappa} = 25c\mu
    \]
    When $k=0$, we have $\bfz_0 = \bfy_0 = \bfx_0$. Moreover, $\bfx_{-1}^{\star} = \argmin_{\bfx\in\R^{d_1}}f(\bfx,\bfu_0)$. At $\bfu_0$, Assumption \ref{asump:strong_cvx} must hold, which implies that
    \[
        f(\bfx_0,\bfu_0) \geq f(\bfx_{-1}^\star,\bfu_0) + \frac{\mu}{2}\norm{\bfx_0 - \bfx_{-1}^{\star}}_2^2
    \]
    This implies that $\norm{\bfx_0 - \bfx_{-1}^{\star}}_2^2 \leq \frac{2}{\mu}\paren{f(\bfx_0,\bfu_0) - f^\star}$. Thus
    \[
        \mathcal{Q}_1\norm{\bfz_0 - \bfx_{-1}^\star}_2^2\leq 50c\paren{f(\bfx_0,\bfu_0) - f^\star}
    \]
    Moreover, by Assumption \ref{asump:f_smooth}, we have
    \[
        \frac{\eta}{8}\norm{\nabla_1f(\bfy_{-1},\bfv_{-1}}_2 = \frac{\eta}{8}\norm{\nabla_1f(\bfx_0,\bfu_0)}_2^2 \leq \frac{\eta L_1}{4}\paren{f(\bfx_0,\bfu_0) - f^\star} = \frac{c}{4}\paren{f(\bfx_0,\bfu_0) - f^\star}
    \]
    Thus, putting things together, we have
    \[
        \phi_0 \leq \paren{1 + 50.25c}\paren{f(\bfx_0,\bfu_0) - f^\star} \leq 2\paren{f(\bfx_0,\bfu_0) - f^\star}
    \]
    as long as $c\leq \frac{1}{51}$.
\end{proof}

\end{document}